\newcommand{\beq}{\begin{equation}}
\newcommand{\eeq}{\end{equation}}
\newcommand{\beqs}{\begin{eqnarray}}
\newcommand{\eeqs}{\end{eqnarray}}
\newcommand{\barr}{\begin{array}}
	\newcommand{\earr}{\end{array}}
\newcommand{\bali}{\begin{aligned}}
	\newcommand{\eali}{\end{aligned}}
\newcommand{\Jc}[0]{\ensuremath{\mathcal{J}} }
\newcommand{\Lc}[0]{\ensuremath{\mathcal{L}} }
\newcommand{\Dbb}[0]{\ensuremath{\mathbb{D}} }
\newcommand{\Ebb}[0]{\ensuremath{\mathbb{E}} }
\newcommand{\Rbb}[0]{\ensuremath{\mathbb{R}} }
\newcommand{\Sbb}[0]{\ensuremath{\mathbb{S}} }
\newcommand{\ie}[0]{\emph{i.e., }}
\newcommand{\eg}[0]{\emph{e.g., }}
\newcommand{\Hten}[0]{\ensuremath{{\boldsymbol{\mathsf{H}}}} }
\newcommand{\Gmat}[0]{\ensuremath{{\bf G}} }
\newcommand{\Hmat}[0]{\ensuremath{{\bf H}} }
\newcommand{\Wmat}[0]{\ensuremath{{\bf W}} }
\newcommand{\av}[0]{\ensuremath{\boldsymbol{a}} }
\newcommand{\cv}[0]{\ensuremath{\boldsymbol{c}} }
\newcommand{\gv}[0]{\ensuremath{\boldsymbol{g}} }
\newcommand{\kv}[0]{\ensuremath{\boldsymbol{k}} }
\newcommand{\pv}[0]{\ensuremath{\boldsymbol{p}} }
\newcommand{\qv}[0]{\ensuremath{\boldsymbol{q}} }
\newcommand{\vv}[0]{\ensuremath{\boldsymbol{v}} }
\newcommand{\wv}[0]{\ensuremath{\boldsymbol{w}} }
\newcommand{\xv}[0]{\ensuremath{\boldsymbol{x}} }
\newcommand{\yv}[0]{\ensuremath{\boldsymbol{y}} }
\newcommand{\zv}[0]{\ensuremath{\boldsymbol{z}} }
\newcommand{\alphav}[0]{\ensuremath{\boldsymbol{\alpha}} }
\newcommand{\betav}[0]{\ensuremath{\boldsymbol{\beta}} }
\newcommand{\gammav}[0]{\ensuremath{\boldsymbol{\gamma}} }
\newcommand{\thetav}[0]{\ensuremath{\boldsymbol{\theta}} }
\newcommand{\muv}[0]{\ensuremath{\boldsymbol{\mu}} }
\newcommand{\sigmav}[0]{\ensuremath{\boldsymbol{\sigma}} }
\newcommand{\phiv}[0]{\ensuremath{\boldsymbol{\phi}} }
\newcommand{\omegav}[0]{\ensuremath{\boldsymbol{\omega}} }
\newcommand{\varthetav}[0]{\ensuremath{\boldsymbol{\vartheta}} }
\newcommand{\Deltav}[0]{\ensuremath{\boldsymbol{\Delta}}}
\newcommand{\Gam}[0]{\ensuremath{\mathrm{Gam}} }
\newcommand{\Pois}[0]{\ensuremath{\mathrm{Pois}} }
\newcommand{\NB}[0]{\ensuremath{\mathrm{NB}} }
\newcommand{\KL}[0]{\ensuremath{\mathrm{KL}} }
\newcommand{\ELBO}[0]{\ensuremath{\mathrm{ELBO}} }
\newcommand{\NN}[0]{\ensuremath{\mathrm{NN}} }
\newcommand{\argmin}{\operatornamewithlimits{argmin}}
\newtheorem{theorem}{Theorem}
\newtheorem{assumption}{Assumption}
\title{GO Hessian for Expectation-Based Objectives}
\author{%
	Yulai Cong\thanks{
		Equal Contribution.
		Correspondence to: Yulai Cong <yulaicong@gmail.com> and Miaoyun Zhao <miaoyun9zhao@gmail.com>.
	}
	\qquad
	Miaoyun Zhao$^{*}$
	\qquad
	Jianqiao Li
	\qquad
	Junya Chen
	\qquad
	Lawrence Carin
	\\
	Department of Electrical and Computer Engineering, Duke University\\
}
\begin{document}

\maketitle

\begin{abstract}

An unbiased low-variance gradient estimator, termed GO gradient, was proposed recently for expectation-based objectives $\Ebb_{q_{\gammav}(\yv)} [f(\yv)]$, where the random variable (RV) $\yv$ may be drawn from a stochastic computation graph with continuous (non-reparameterizable) internal nodes and continuous/discrete leaves. 
Upgrading the GO gradient, we present for $\Ebb_{q_{\gammav}(\yv)} [f(\yv)]$ an unbiased low-variance Hessian estimator, named GO Hessian. 
Considering practical implementation, we reveal that GO Hessian is easy-to-use with auto-differentiation and Hessian-vector products, enabling efficient cheap exploitation of curvature information over stochastic computation graphs. 
As representative examples, we present the GO Hessian for non-reparameterizable gamma and negative binomial RVs/nodes.
Based on the GO Hessian, we design a new second-order method for $\Ebb_{q_{\gammav}(\yv)} [f(\yv)]$, with rigorous experiments conducted to verify its effectiveness and efficiency.

\end{abstract}

\section{Introduction}

Many machine learning problems can be formulated as an optimization problem involving an expectation. A classic such setup \cite{Robbins1951stochastic} is of the form
\beq\label{eq:forward_issue}
\text{Framework I:  }
\min\nolimits_{\varthetav} \Jc(\varthetav) \triangleq \Ebb_{q(\xv)} [h(\xv, \varthetav)],
\eeq
where the random variable (RV) $\xv$ obeys a distribution $q(\xv)$ unrelated to the parameters $\varthetav$ of interest, and $h(\xv, \varthetav)$ is a continuous function wrt $\varthetav$. General assumptions making $\Jc(\varthetav)$ (and the following $\Lc(\gammav)$) a valid loss function are omitted for simplicity. In practice one often encounters its finite-sum form
$ 
\min_{\varthetav} \frac{1}{N} \sum_{i=1}^N h(\xv_i, \varthetav)
$ 
with $q(\xv) = \frac{1}{N} \sum_{i=1}^N \delta(\xv - \xv_i)$, where $\delta(\cdot)$ is the Dirac delta function (this discrete form is typically an approximation, based on $N$ observed samples drawn from the true underlying data distribution).
A popular example of Framework I is maximum-likelihood learning with the data distribution $q(\xv)$ and the negative log-likelihood $h(\xv, \varthetav) = -\log p(\xv ; \varthetav)$, where $p(\xv ; \varthetav)$ represents the model.

An alternative framework, attracting increasing attention recently, considers the form  
\beq\label{eq:reverse_issue}
\text{Framework II:  }
\min\nolimits_{\gammav} \Lc(\gammav) \triangleq \Ebb_{q_{\gammav}(\yv)} [f(\yv)],
\eeq
where parameters $\gammav$ of interest determine the distribution $q_{\gammav}(\yv)$ that, for example, models a stochastic computational graph \cite{schulman2015gradient}. 
Note in general the function $f(\cdot)$ may also be related to $\gammav$; however, as the generalization is straight-forward, we focus on the setup in \eqref{eq:reverse_issue} for simpler derivations. Popular examples of Framework II include the ELBO in variational inference \cite{bishop_2006_PRML,kingma2014auto}, the generator training objective of generative adversarial networks \cite{goodfellow2014generative,arjovsky2017wasserstein,gulrajani2017improved}, and many objectives associated with reinforcement learning  \cite{schulman2015high,finn2017model,foerster2018dice}.

Many optimization methods have been proposed for Framework I, utilizing the first-order gradient information \cite{allen2018natasha,jin2019stochastic} or exploiting the second-order Hessian information \cite{tripuraneni2018stochastic,zhou2019stochastic}. 
Compared with first-order methods, second-order ones are often characterized by convergence in fewer training iterations, requiring less tweaking of meta-parameters (like learning rate), scale invariance to linear parameter rescaling, navigating better when facing pathological curvature in deep learning, and converging to a second-order stationary point \cite{martens2010deep,tripuraneni2018stochastic}. 
For computation and memory efficiency in high-dimensions (like for deep neural networks), recent second-order methods often resort to Hessian-free techniques, \ie Hessian-vector products (HVP) \cite{pearlmutter1994fast,martens2010deep}, which can be computed as efficiently as gradients \cite{pearlmutter1994fast} and remove the need to construct the full Hessian \cite{kohler2017sub,kasai2018inexact}.

In contrast to the classic Framework I, few optimization methods have been proposed for Framework II in \eqref{eq:reverse_issue}, partially because of the significant challenge in even estimating its gradient with low variance without bias in general/non-reparameterizable (subsequently abbreviated as ``non-rep'') situations \cite{cong2019go,weber2019credit,liu2019taming}. 
For second-order optimization of Framework II, most existing works resort to the log-trick,\footnote{
	Also named the likelihood ratio, score function, or REINFORCE estimator. See Section \ref{sec:GO_Hess} for details.
} often suffering from high variance and poor sample efficiency, and therefore seeking help from variance reduction control variates with a potential variance-bias trade-off \cite{heess2015learning,foerster2018dice,rothfuss2018promp}.
Moreover, to facilitate the implementation via auto-differentiation (AD), cumbersome designs of surrogate losses and control variates are often necessary \cite{foerster2018dice,mao2019baseline}, which are challenging when derivatives of different orders are used simultaneously \cite{liu2019taming,farquhar2019loaded}, like in meta reinforcement learning \cite{finn2017model}.
Therefore, an easy-to-use unbiased (gradient and) Hessian estimator for Framework II, with low variance and high sample efficiency, is highly appealing \cite{liu2019taming}.

Different from existing methods that leverage the log-trick, we follow a different research path that tries to generalize the classic deterministic derivatives (obeying the chain rule) to Framework II \cite{figurnov2018implicit,jankowiak2018pathwise,cong2019go}.
Specifically, we upgrade the general GO gradient \cite{cong2019go} to propose an unbiased Hessian estimator for Framework II in \eqref{eq:reverse_issue}, where $\yv$ may be drawn from a stochastic computation graph with continuous rep/non-rep internal nodes and continuous/discrete leaves. The proposed approach is named GO Hessian, and we show that it often works well empirically with one sample without variance reduction techniques. 
Our other contributions are listed as follows.
\vspace{-0.2 cm}
\begin{itemize}[leftmargin=*]
	\setlength{\itemsep}{1pt}
	\setlength{\parskip}{1pt}
	\setlength{\parsep}{1pt}
		
	\item We reveal the proposed GO Hessian is easy to use with AD and HVP, enabling computationally and memory efficient exploitation of curvature information over stochastic graphs.
	
	\item We derive GO Hessian for non-rep gamma and negative binomial RVs; 
	we reveal a simple yet effective method to make optimization over gamma RVs more friendly to gradient-based methods.
	
	\item Marrying the GO Hessian to an existing method for Framework I, we present a novel second-order method for Framework II, theoretically analyze its convergence, and empirically verify its effectiveness and efficiency with rigorous experiments.
\end{itemize}

\vspace{-0.2 cm}
\section{Preliminary}
\vspace{-0.1 cm}

We briefly review 
($i$) the GO gradient \cite{cong2019go}, on which our GO Hessian is based; 
($ii$) Hessian-free techniques for high-dimensional second-order optimization;
and ($iii$) stochastic cubic regularization \cite{tripuraneni2018stochastic}, to which GO Hessian is married to form a novel second-order method for Framework II.

\vspace{-0.1 cm}
\subsection{General and one-sample (GO) gradient}
\label{sec:pre_GO_gradient}
\vspace{-0.1 cm}

Containing as special cases the low-variance reparameterization gradient \cite{salimans2013fixed,rezende2014stochastic} and the pathwise derivative\footnote{
Rigorously, the GO gradient \cite{cong2019go} cannot fully cover the pathwise derivative \cite{jankowiak2018pathwise} on multivariate correlated RVs; but that uncoverage is rare in practice, because common multivariate RVs are either rep (like a multivariate normal RV) or can be reparametrized before GO gradient is applied (like a Dirichlet RV). 
} \cite{figurnov2018implicit,jankowiak2018pathwise}, the GO gradient \cite{cong2019go} serves as a general framework of unbiased low-variance gradient estimates for Framework II in \eqref{eq:reverse_issue}, 
where RV $\yv$ may be drawn from a stochastic computation graph \cite{schulman2015gradient,parmas2018total,weber2019credit} with continuous rep/non-rep internal nodes and continuous/discrete leaves
\cite{cong2019go}. 
With the GO gradient, one can forward pass through the stochastic graph with \emph{one sample} activated for each node to estimate the objective, followed by backward-propagating an unbiased low-variance gradient estimate through each node again to the parameters of that graph for updating (see Theorem 3 of \cite{cong2019go}).
The low-variance and one-sample properties make the GO gradient easy-to-use in practice, for example in variational inference with a complicated inference distribution.

To introduce the approach, the simplest setup, \ie a single-layer RV $\yv$ satisfying the conditional-independent assumption $q_{\gammav} (\yv) = \prod\nolimits_{v} q_{\gammav} (y_v)$, is employed to demonstrate the GO gradient, \ie
\beq\label{eq:GO_1}
\nabla_{\gammav} \Lc(\gammav) = 
\nabla_{\gammav} \Ebb_{q_{\gammav} (\yv)} [f(\yv)] 
= \Ebb_{q_{\gammav} (\yv)} \big[
\Gmat_{\gammav}^{q_{\gammav} (\yv)}
\Dbb_{\yv} f(\yv)
\big],
\eeq
where $\Dbb_{\yv} f(\yv) = \big[ \cdots, \Dbb_{y_v} f(\yv), \cdots \big]^T$ with $\Dbb_{y_v} f(\yv) \triangleq \nabla_{y_v} f(\yv)$ for continuous $y_v$ while $\Dbb_{y_v} f(\yv) \triangleq f(\yv^{v+})-f(\yv)$ for discrete $y_v$, where $\yv^{v+} \triangleq [\cdots,y_{v\!-\!1},y_{v}+1,y_{v\!+\!1},\cdots]^T$.
$\Gmat_{\gammav}^{q_{\gammav} (\yv)} = \big[ \cdots, g_{\gammav}^{q_{\gammav} (y_v)}, \cdots \big]$ gathers the \emph{variable-nabla} $g_{\gammav}^{q_{\gammav} (y_v)} \triangleq  \frac{-1}{q_{\gammav} (y_v)} \nabla_{\gammav}  Q_{\gammav} (y_v)$, which has the intuitive meaning of the ``derivative'' of a RV $y_v$ wrt its parameters $\gammav$ \cite{cong2019go}. $Q_{\gammav} (y_v)$ is the CDF of $q_{\gammav} (y_v)$.

With the \emph{variable-nabla}, one can informally interpret $\Gmat_{\gammav}^{q_{\gammav} (\yv)}$ as the ``gradient'' of the RV $\yv$ wrt the parameters $\gammav$.
Similar intuitive patterns hold for deep stochastic computation graphs with continuous internal nodes \cite{cong2019go}.
As an informal summarization, the GO gradient \emph{in expectation} obeys the chain rule and acts like its special case of the classic back-propagation algorithm \cite{Rumelhart1986learning,cong2019go}.

\subsection{Hessian-free techniques}
\label{sec:HessianFree}

Developed for efficient implementation of second-order optimization in high-dimensions (like for deep neural networks, where the explicit construction of the full Hessian is prohibitive), Hessian-free techniques \cite{martens2010deep,byrd2011use} exploit HVP for \emph{implicit} usage of the Hessian information, for example, via
\beq\label{eq:Hess_vec}
[\nabla_{\varthetav}^2 \Jc(\varthetav)] \pv = \nabla_{\varthetav} \big[ [\nabla_{\varthetav} \Jc(\varthetav)]^T \pv \big],
\eeq
where $\pv$ is a vector uncorrelated with the parameters $\varthetav$ of interest. 
For better efficiency than the above $2$-backward technique, \cite{pearlmutter1994fast} proposed a faster HVP calculation that takes about the same amount of computation as a gradient evaluation.
The low-cost HVP is essential because common subsolvers used to search for second-order directions (like the conjugate gradient method or the cubic-subsolver from \cite{agarwal2017finding,tripuraneni2018stochastic,zhou2019stochastic}) merely exploit Hessian information via HVP.

\subsection{Stochastic cubic regularization (SCR)}

As a second-order method for Framework I, the SCR \cite{tripuraneni2018stochastic} searches for a second-order stationary point via iteratively minimizing a local third-order Taylor expansion of the objective $\Jc(\varthetav)$, \ie
\beq\label{eq:subproblem_cubic}
\varthetav_{t+1} = \argmin_{\varthetav}
	\Jc(\varthetav_t) + \tilde \gv_t^T (\varthetav - \varthetav_t) + 
	\frac{1}{2} (\varthetav - \varthetav_t)^T \tilde \Hmat_t (\varthetav - \varthetav_t) + \frac{\rho}{6} \|\varthetav - \varthetav_t\|^3,
\eeq
where $\tilde \gv_t \!=\! \tilde \nabla_{\varthetav} \Jc(\varthetav_t)$ and $\tilde \Hmat_t \!=\! \tilde \nabla_{\varthetav}^2 \Jc(\varthetav_t)$ are the stochastic gradient and Hessian at $\varthetav_t$, respectively,\footnote{Often $\tilde \gv$ and $\tilde \Hmat$ are estimated via Monte Carlo (MC) estimation, \ie $\tilde \gv = \frac{1}{N_g} \sum\nolimits_{i=1}^{N_g} \nabla_{\varthetav} h(\xv_i, \varthetav), \xv_i \sim q(\xv)$ and $\tilde \Hmat = \frac{1}{N_H} \sum\nolimits_{j=1}^{N_H} \nabla_{\varthetav}^2 h(\xv'_j, \varthetav), \xv'_j \sim q(\xv)$.}
$\rho$ is the cubic penalty coefficient,
and \eqref{eq:subproblem_cubic} can be solved efficiently with gradient decent \cite{carmon2016gradient}.
Since Newton-like methods are much more tolerant to the Hessian estimation error than that of the gradient \cite{byrd2011use}, one can often use significantly less data samples to calculate the stochastic Hessian for better efficiency \cite{tripuraneni2018stochastic}.

\section{GO Hessian for Framework II}

Targeting an efficient second-order optimization of Framework II in \eqref{eq:reverse_issue}, we first propose for it an unbiased low-variance Hessian estimator, termed General and One-sample (GO) Hessian, that systematically upgrades the GO gradient \cite{cong2019go} and is easy-to-use in practice.  
We then marry the proposed GO Hessian to the SCR \cite{tripuraneni2018stochastic} to propose a novel second-order method for Framework II.

\subsection{GO Hessian}
\label{sec:GO_Hess}

A straight-forward way to estimate the Hessian of Framework II in \eqref{eq:reverse_issue} lies in exploiting the log-trick $\nabla_{\gammav} q_{\gammav} (\yv) =q_{\gammav} (\yv) \nabla_{\gammav}  \log q_{\gammav} (\yv)$, generalizing the REINFORCE gradient \cite{williams1992simple}, \ie
\beq\label{eq:Hess_REINFORCE}
\bali
\nabla_{\gammav}^2 \Lc(\gammav) = 
\Ebb_{q_{\gammav} (\yv)} \left[ \bali
f(\yv) [\nabla_{\gammav} \log q_{\gammav} (\yv)] [\nabla_{\gammav} \log q_{\gammav} (\yv)]^T
+ f(\yv) \nabla^2_{\gammav} \log q_{\gammav} (\yv)
\eali \right].
\eali
\eeq
However, such a log-trick estimation shows high MC variance in both theory and practice \cite{rezende2014stochastic,ruiz2016generalized,foerster2018dice,cong2019go}, often seeking help from variance-reduction techniques \cite{grathwohl2017backpropagation,mao2019baseline}.
Moreover, for practical implementation with AD, cumbersome designs of surrogate losses and control variates are often necessary \cite{foerster2018dice,mao2019baseline,liu2019taming,farquhar2019loaded}.

Different from the above method based on the log-trick, our GO Hessian estimates the curvature of Framework II in a pathwise manner like the classic deterministic Hessian.
Specifically, with the GO Hessian, one can forward pass through a stochastic computation graph (\ie $q_{\gammav}(\yv)$; with continuous internal nodes) with \emph{one sample} activated for each node to estimate the objective (\ie the one-sample estimation $f(\yv)$), followed by backward-propagating an unbiased low-variance Hessian estimate through that graph (obeying the chain rule \emph{in expectation}) to estimate the curvature information.
No surrogate loss is necessary for our GO Hessian, which cooperates harmoniously with the GO gradient and often works well in practice with only one sample (see Figure \ref{fig:Hess_var_toy_gam_main} and the experiments).

The key observations motivating our GO Hessian include
($i$) naively employing the integration-by-parts (foundation of the GO gradient) twice fails to deliver an easy-to-use Hessian estimator (see Appendix \ref{secapp:Naive_derivation_HessianII});
($ii$) the \emph{variable-nabla} in \eqref{eq:GO_1} is differentiable with often a simple expression (see Table 3 of \cite{cong2019go});
and ($iii$) the GO gradient empirically shows low variance and often works well with only one sample.
Accordingly, we view the GO gradient of $\Lc(\gammav)$ as another expectation-based vector objective, followed by calculating the GO gradient for that vector objective to form our GO Hessian of $\Lc(\gammav)$.

To simplify notation, we employ the simplest single-layer continuous settings first to demonstrate our main results, which are then generalized to deep stochastic computation graphs with continuous rep/non-rep internal nodes and continuous/discrete leaves.
Detailed proofs are in Appendix \ref{secapp:derivation_GO_Hessian}.

Assuming a single-layer continuous setup with $q_{\gammav} (\yv) = \prod\nolimits_{v} q_{\gammav} (y_v)$, the GO Hessian is defined as 
\beq\label{eq:GO_Hessian_1}
\resizebox{0.94\hsize}{!}{$
	\nabla_{\gammav}^2 \Lc(\gammav) \!=\! 
	\nabla_{\gammav} \Ebb_{q_{\gammav} (\yv)} \!\big[
	\Gmat_{\gammav}^{q_{\gammav} (\yv)}
	\nabla_{\yv} f(\yv) 
	\big]
	\!=\! \Ebb_{q_{\gammav} (\yv)} \!\left[
	\Gmat_{\gammav}^{q_{\gammav}(\yv)} \![\nabla_{\yv}^2 f(\yv)] \Gmat_{\gammav}^{q_{\gammav}(\yv)} {}^T
	\!+\! \Hten_{\gammav\gammav}^{q_{\gammav}(\yv)} \nabla_{\yv} f(\yv)
	\right],
	$}
\eeq
where $\Hten_{\gammav\gammav}^{q_{\gammav}(\yv)}$ is a three-dimensional tensor with its element
\beq\label{eq:Hessian_RV_definition}
	[\Hten_{\gammav\gammav}^{q_{\gammav}(\yv)}]_{b,a,v} = g_{\gamma_b}^{q_{\gammav} (y_v)} \nabla_{y_v} g_{\gamma_a}^{q_{\gammav} (y_v)} + \nabla_{\gamma_b} g_{\gamma_a}^{q_{\gammav} (y_v)}
    \triangleq  h_{\gamma_b \gamma_a}^{q_{\gammav}(y_v)} 
\eeq 
and the tensor-vector product $\Hten\av$ outputs a matrix whose element $[\Hten\av]_{b,a} \!=\! \sum_{v} \Hten_{b,a,v} \av_{v}$. 
We name $h_{\gamma_b \gamma_a}^{q_{\gammav}(y_v)}$ the \emph{variable-hess}, because of its intuitive meaning of the second-order ``derivative'' of a RV $y_v$ wrt parameters $\{\gamma_a,\gamma_b\}$ (see below).

For better understanding, we draw parallel comparisons to deterministic optimization with objective $\hat \Lc(\gammav) = f[\hat \yv(\gammav)]$, which is a special case of Framework II with $q_{\gammav}(\yv) = \delta(\yv - \hat \yv(\gammav))$ and where
\beq\label{eq:grad12_simple_reverse_issue}
\nabla_{\gammav}^2 \hat \Lc(\gammav) 
= [\nabla_{\gammav} \hat \yv(\gammav)] [\nabla_{\hat\yv}^2 f(\hat \yv)] [\nabla_{\gammav} \hat \yv(\gammav)]^T
+ [\nabla_{\gammav}^2 \hat \yv(\gammav)] \nabla_{\hat\yv} f(\hat\yv).
\eeq
By comparing \eqref{eq:GO_Hessian_1} and \eqref{eq:grad12_simple_reverse_issue}, interesting conclusions include
($i$) the interpretation of $\Gmat_{\gammav}^{q_{\gammav}(\yv)}$ as the ``gradient'' of the RV $\yv$ wrt parameters $\gammav$ (informally $\Gmat_{\gammav}^{q_{\gammav}(\yv)} \leftrightarrow \nabla_{\gammav} \yv$) also holds in second-order settings;
($ii$) the newly-introduced $\Hten_{\gammav\gammav}^{q_{\gammav}(\yv)}$ can be intuitively interpreted as the ``Hessian'' of the RV $\yv$ wrt parameters $\gammav$ (informally $\Hten_{\gammav\gammav}^{q_{\gammav}(\yv)} \leftrightarrow \nabla_{\gammav}^2 \yv$), 
but with an additional component originating from the RV randomness (\ie the first item of the \emph{variable-hess} in \eqref{eq:Hessian_RV_definition}); and ($iii$) the GO Hessian contains the deterministic Hessian as a special case (see Appendix \ref{appsec:GOHess_Hessian_specialcase} for the proof).

\begin{figure}[tb]
	\setlength{\abovecaptionskip}{2.0pt}
	\centering	
	\includegraphics[width=0.3\columnwidth]{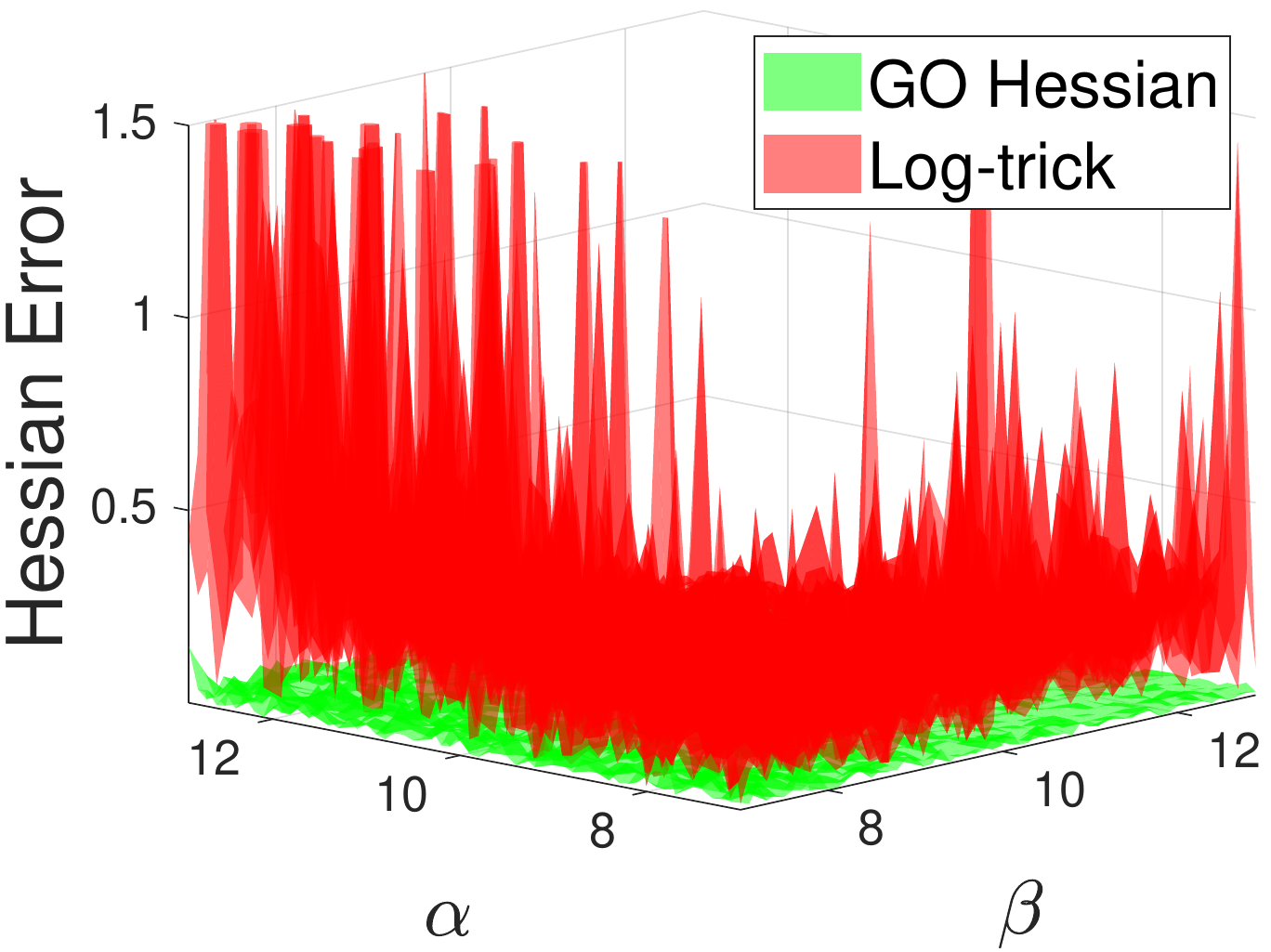}
	\qquad\qquad
	\includegraphics[width=0.3\columnwidth]{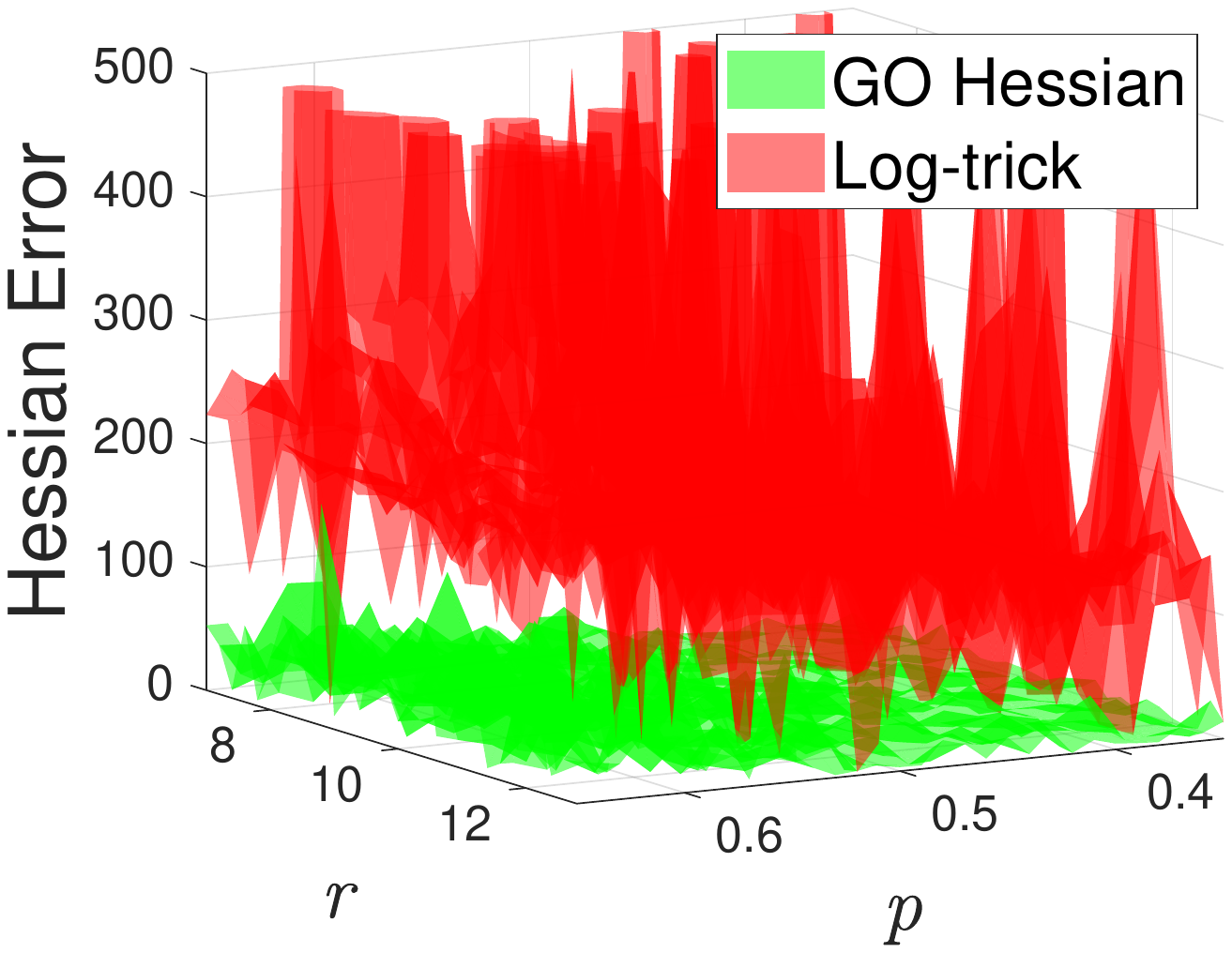}
	\caption{
		Variance comparisons of the one-sample-based log-trick estimation and GO Hessian on 
		(left) $\nabla_{\{\alpha, \beta\}}^2 \Ebb_{\Gam(\alpha, \beta)} \big[\log \frac{\Gam(\alpha, \beta)}{\Gam(10, 10)} \big] $
		and (right) $\nabla_{\{r, p\}}^2 \Ebb_{\NB(r, p)} \big[\log \frac{\NB(r, p)}{\NB(10, 0.5)} \big]$. 
		See Appendix \ref{appsec:GO_Hessian_Variance} for details. 	
	} \label{fig:Hess_var_toy_gam_main}
	\vspace{-0.3 cm}
\end{figure}

\textbf{On discrete RVs}
Upgrading the GO gradient, the GO Hessian for single-layer/leaf discrete RVs is
\beq\label{eq:GO_Hessian_12}
\nabla_{\gammav}^2 \Lc(\gammav) 
\!=\! \Ebb_{q_{\gammav} (\yv)} \!\big[
\Gmat_{\gammav}^{q_{\gammav}(\yv)} \![\Dbb_{\yv}^2 f(\yv)] \Gmat_{\gammav}^{q_{\gammav}(\yv)} {}^T
\!+\! \overline{\Hten_{\gammav\gammav}^{q_{\gammav}(\yv)} \Dbb_{\yv} f(\yv)}
\big],
\eeq
where $\scriptstyle \overline{\Hten_{\gammav\gammav}^{q_{\gammav}(\yv)} \Dbb_{\yv} f(\yv)}$ represents a matrix with its elements 
\beq\label{eq:Hessian_discreteRV_definition}
	\big[{\scriptstyle{\overline{\Hten_{\gammav\gammav}^{q_{\gammav}(\yv)} \Dbb_{\yv} f(\yv)}}} \big]_{b,a} = \sum\nolimits_{v} \left[ 
	[g_{\gamma_b}^{q_{\gammav} (y_v)} \Dbb_{y_v} g_{\gamma_a}^{q_{\gammav} (y_v)}] \Dbb_{y_v} f(\yv^{v+}) 
	+ [\nabla_{\gamma_b} g_{\gamma_a}^{q_{\gammav} (y_v)}] \Dbb_{y_v} f(\yv) 
	 \right]. 
\eeq
It is clear that \eqref{eq:GO_Hessian_1} for continuous RVs and \eqref{eq:GO_Hessian_12} for discrete RVs show similar patterns but with slight differences, like the gradient/difference of $f(\yv)$ and the definition of the \emph{variable-hess}.
We leave discrete situations as future research and focus mainly on continuous cases in this paper.

Based on the above derivations/statements for the single-layer setup, we prove in Appendix \ref{secapp:derivation_GO_Hessian} similar patterns hold for stochastic computation graphs with continuous rep/non-rep internal nodes and continuous/discrete leaves.
In short, the GO Hessian acts like its special case of the classic deterministic Hessian and \emph{in expectation} obeys the chain rule, enabling an one-sample-based forward pass for loss estimation and subsequent backward passes for unbiased low-variance Hessian estimation.

\textbf{On limited $f(\cdot)$-information}
For practical situations where only zero-order $f(\yv)$-information is available at the current sample $\yv$ (\eg $f(\yv)$ is non-differentiable or $\Dbb_{\yv} f(\yv)$ is not accessible), we reveal the LAX technique \cite{grathwohl2017backpropagation} to facilitate our GO Hessian. Specifically, with a surrogate function $c_{\omegav}(\yv)$ (often a neural network) to approximate $f(\yv)$, we unify the zero-order $f$-evaluation from the log-trick estimation (see \eqref{eq:Hess_REINFORCE}) and the low-variance from our GO Hessian via
\beq\label{eq:lax_continuous}
\setlength\abovedisplayskip{4.0pt}
\setlength\belowdisplayskip{4.0pt}
\Hmat_{\text{LAX}}[f] = \Hmat_{\text{log-trick}}[f] - \Hmat_{\text{log-trick}}[c_{\omegav}] + \Hmat_{\text{GO}}[c_{\omegav}],
\eeq
where $\Hmat_{\texttt{method}}[\texttt{func}]$ denotes the Hessian estimator of objective $\Ebb_{q_{\gammav}(\yv)} [\texttt{func}(\yv)]$ based on the \texttt{method}.
The surrogate parameters $\omegav$ can be optimized by minimizing the MC variance of $\Hmat_{\text{LAX}}[f]$ \cite{grathwohl2017backpropagation}. 
Note when $c_{\omegav}(\yv)=f(\yv)$, $\Hmat_{\text{LAX}}[f]$ delivers the same low variance as our GO Hessian.

\vspace{-0.15 cm}
\subsection{GO Hessian is easy-to-use}
\label{sec:GOHess_HVP}
\vspace{-0.1 cm}

By considering practical implementation, to explicitly construct/store the GO Hessian may be prohibitively expensive, especially for stochastic computation graphs with neural-network components. 
Fortunately, we find the GO Hessian is easy to use in practice with AD and HVP.
The key observation is the one-sample-estimated GO Hessian acts the same as its special case of deterministic Hessian (see \eqref{eq:GO_Hessian_1} and Appendix \ref{secapp:derivation_GO_Hessian}), 
despite the \emph{variable-nabla}/\emph{variable-hess} as the first-order/second-order ``derivative'' for each RV node.
Accordingly, one can easily manipulate well-developed AD software (like PyTorch \cite{paszke2017automatic} or TensorFlow \cite{tensorflow2015-whitepaper})
to enable easy-to-use exploitation of the GO Hessian.

Consider the example in Figure \ref{fig:GO_grad}, 
where we focus on a scalar RV node $y$ of a stochastic graph $\yv$ thanks to the conditional independence, $\alphav$ denotes the distribution parameters (\eg the shape and rate of a gamma RV) of that node, and one sample is \emph{stochastically activated} for the subsequent forward pass.
To exploit the GO Hessian with AD, we only need to define the backward pass for each stochastic activation with the approach shown in Figure \ref{fig:PseudoCode}, which guarantees correct \emph{variable-nabla}/\emph{variable-hess} for each RV node and delivers a seamless (double) back-propagation through the whole stochastic graph (as the rest computations are deterministic and well-defined in AD).
Note the HVP for the GO Hessian (GO-HVP) can be similarly implemented as in \eqref{eq:Hess_vec}.

\begin{figure*}[tb]
	\centering
	\resizebox{8.5 cm}{!}{
	\begin{minipage}{11 cm}
			\setlength{\abovecaptionskip}{1.0pt}
			\centering
			\subfigure[Forward/backward pass] {\label{fig:GO_grad}
				\includegraphics[width=0.43\columnwidth]{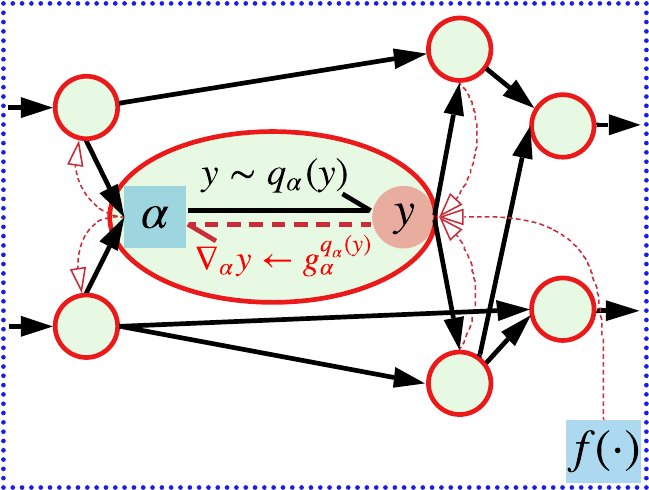}
			}
			\subfigure[PyTorch-like pseudo code] {\label{fig:PseudoCode}
				\includegraphics[width=0.52\columnwidth]{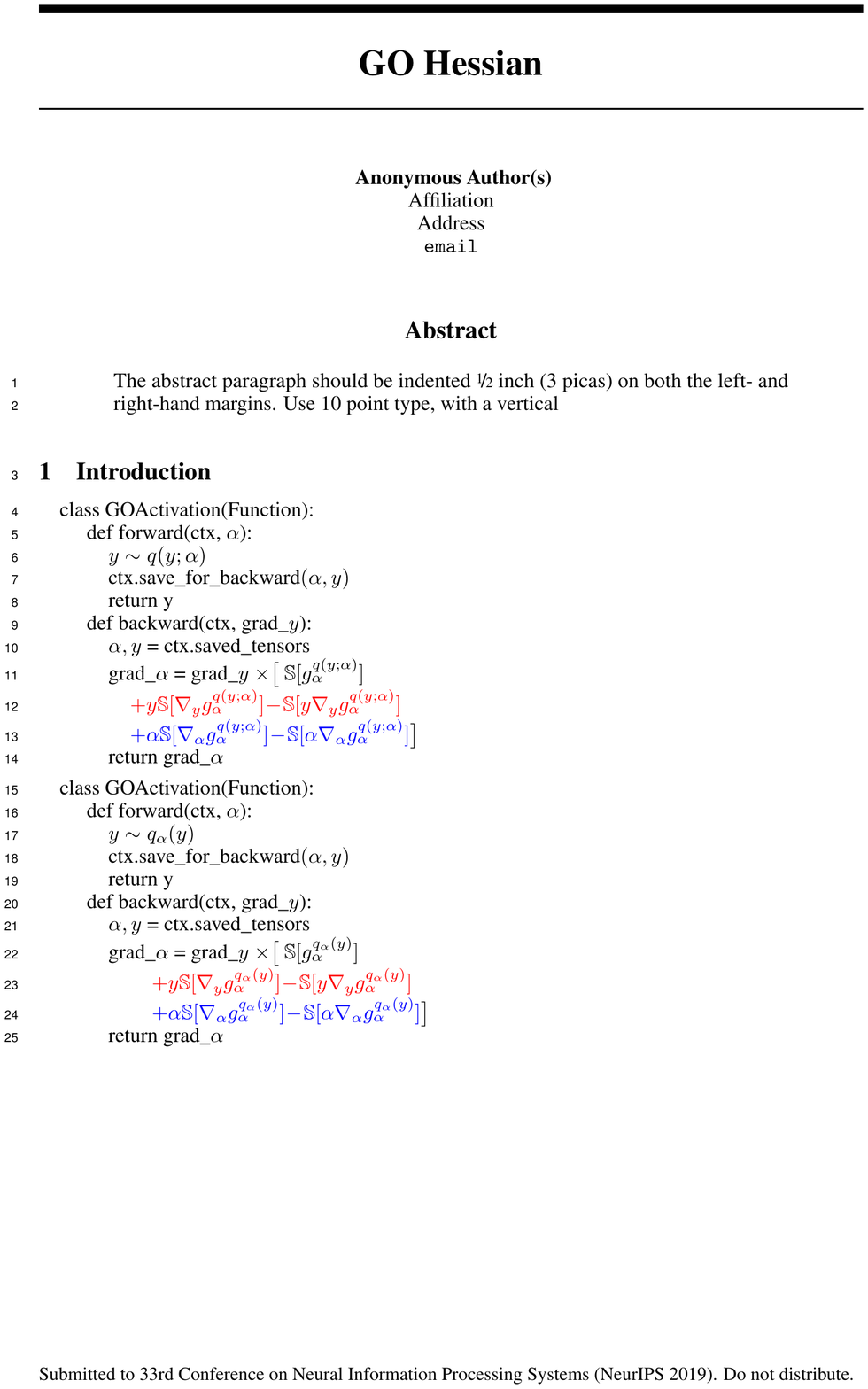}
			}
			\caption{Demonstration of the simplicity in jointly implementing the GO gradient and the GO Hessian.
				A red circle denotes a RV node.
				Black solid arrows indicate both the forward pass and parameters $\gammav$ of the stochastic computation graph.
				Red dashed arrows show the backward pass through the current node. 
				$\Sbb[\cdot]$ is the stop-gradient operator.
			}
			\label{fig:GO_grad_Hess_usage}			
	\end{minipage}	
	}
	\resizebox{5.3 cm}{!}{
	\begin{minipage}{7.5 cm}
		\begin{algorithm}[H]
			\caption{SCR-GO for 
				\protect\resizebox{3.8 cm}{!}{$\min_{\gammav} \Ebb_{q(\xv) q_{\gammav}(\yv|\xv)} [f(\xv,\yv)]$}
			} \label{alg:SCR_GO}
			\begin{algorithmic}
				\REQUIRE {Batch sizes $N_g,N_H$, initialization $\gammav_0$, total iterations $T$, and final tolerance $\epsilon$.}
				\ENSURE {$\epsilon$-second-order stationary point $\gammav^{*}$ or $\gammav_{T+1}$.}
				\FOR {$t = 0, 1, \cdots, T $} 
				
				\STATE {Sample $\{\xv_{i}\}_{i=1}^{N_g}$ and $\{\xv_{j}'\}_{j=1}^{N_H}$ from $q(\xv)$}
				\STATE {Sample $\yv_{i} \sim q_{\gammav_t}(\yv | \xv_i)$ and $\yv_{j}' \sim q_{\gammav_t}(\yv | \xv_{j}')$}
				\STATE {Estimate GO gradient $\tilde \gv_t$ with $\{\xv_{i},\yv_{i}\}_{i=1}^{N_g}$} 
				\STATE {Estimate GO Hessian $\tilde \Hmat_t [\cdot]$ with $\{\xv_{j}',\yv_{j}'\}_{j=1}^{N_H}$}	
				\STATE {$\Deltav, \Delta \leftarrow \text{Cubic-Subsolver}(\tilde \gv_t, \tilde \Hmat_t [\cdot], \epsilon)$}
				\STATE {$\gammav_{t+1} = \gammav_t + \Deltav$}
				
				\IF {$\Delta > 
					-\sqrt{{\epsilon^3}/{\rho}} / 100$}
				
				\STATE {$\Deltav \leftarrow \text{Cubic-Finalsolver}(\tilde \gv_t, \tilde \Hmat_t [\cdot], \epsilon)$}
				\STATE {$\gammav^{*} = \gammav_t + \Deltav$ and \textbf{break}} 
				
				\ENDIF 
				
				\ENDFOR
			\end{algorithmic}
		\end{algorithm}
	\end{minipage}
	}
	\vspace{-0.3 cm}
\end{figure*}

\vspace{-0.15 cm}
\subsection{
	Second-order optimization of Framework II}
\vspace{-0.1 cm}

Benefiting from the low-variance and easy-to-use properties of the GO Hessian, one can readily marry it with existing second-order methods for Framework I to develop novel variants for Framework II in \eqref{eq:reverse_issue}.
Considering practical applications like variational inference, we employ a more common objective for presentation, \ie 
$
\min_{\gammav} \Lc(\gammav) \triangleq \Ebb_{q(\xv) q_{\gammav}(\yv|\xv)} [f(\xv,\yv)],
$
where $q(\xv)$ is defined in \eqref{eq:forward_issue}.

We employ the stochastic cubic regularization (SCR) \cite{tripuraneni2018stochastic}, that exploits stochastic gradient/Hessian information within its subroutine (see \eqref{eq:subproblem_cubic}).
By leveraging the GO gradient and our GO Hessian in place of the classic gradient/Hessian, we present in Algorithm \ref{alg:SCR_GO} a new second-order method for Framework II, termed SCR-GO.
The Cubic-Subsolver and Cubic-Finalsolver (given in Appendix \ref{secapp:SCR_subsolvers}) minimize the local third-order Taylor approximation of Framework II, mimicking \eqref{eq:subproblem_cubic}. 
The detailed convergence analysis is provided in Appendix \ref{secapp:converge_alg1}, where a gamma-related special case is discussed.

\vspace{-0.25 cm}
\section{GO Hessian for common RVs}
\label{sec:GOHess_CommonRV}
\vspace{-0.1 cm}

Based on the \emph{variable-nabla}s summarized in Table 3 of \cite{cong2019go} and the definitions in \eqref{eq:GO_Hessian_1} and \eqref{eq:GO_Hessian_12}, one can derive the GO Hessians for many kinds of RVs, which are essential for easy-to-use curvature exploitation over stochastic computation graphs that are flexibly constructed by those RVs.
Following \cite{cong2019go}, we highlight two challenging RVs: continuous non-rep gamma and discrete negative binomial.

\vspace{-0.2 cm}
\subsection{GO Hessian for non-rep gamma RVs}
\label{sec:GOHess_gamma}
\vspace{-0.1 cm}

To demonstrate the effectiveness of the proposed techniques, we focus on situations with non-rep gamma RVs in our experiments.
Such a concentration is motivated by their broad practical applications \cite{boland2007statistical,mendoza2013characterising,al2014gamma,wright2014response,belikov2017number}
and by their fundamental utility in statistics and machine learning.
For example, many popular distributions can be reparameterized as gamma \cite{leemis2008univariate}, such as exponential, chi-squared, inverse-gamma, log-gamma, beta, and Dirichlet;
other ones can be mixed via gamma \cite{Zhou2012Beta,Zhou2015Negative}, 
like the gamma-normal-mixed student-$t$ 
and the gamma-Poisson-mixed negative binomial.
Accordingly, the presented techniques for gamma can be readily extended to those gamma-related cases of Framework II (\eg variational inference for a model with Dirichlet latent code like \cite{Blei2003Latent}).

From the definition in \eqref{eq:GO_Hessian_1} and the illustrative example in Figure \ref{fig:GO_grad_Hess_usage}, it's clear that three components are crucial in constructing a GO Hessian for a continuous RV, that is, 
\beq\label{eq:3_key_terms_GO_Hessian}
g_{\alphav}^{q_{\alphav} (y)}, \nabla_{y} g_{\alphav}^{q_{\alphav} (y)}, \text{ and } \nabla_{\alphav} g_{\alphav}^{q_{\alphav} (y)}.
\eeq
For a gamma RV $\hat y \sim \Gam(\alpha, \beta)$, the distribution parameters $\alphav$ in general contain both the shape $\alpha$ and the rate $\beta$.
However, we notice the reparameterization of $\hat y = {y}/{\beta},  y \sim \Gam(\alpha, 1)$, with which one can leave the derivatives wrt $\beta$ to AD for simplicity and focus solely on the non-rep part associated with $\alpha$.
Accordingly, we need to calculate the three components in \eqref{eq:3_key_terms_GO_Hessian} for $q_{\alpha} (y) = \Gam(y; \alpha, 1)$.
Moving detailed derivations to Appendix \ref{sec:Gam_Implement} for clarity, we yield
\beq\label{eq:G_alpha_gam_main}
\resizebox{0.75 \hsize}{!}{$
\bali
g_{\alpha}^{q_{\alpha} (y)} & = 
	-[\log y - \psi(\alpha)] \frac{\gamma(\alpha, y)}{y^{\alpha-1} e^{-y}}
	+ \frac{y e^y}{\alpha^2} {}_2 F_2(\alpha, \alpha; \alpha+1, \alpha+1; -y) 
\\
\nabla_{y} g_{\alpha}^{q_{\alpha} (y)} & =
	[\psi(\alpha) - \log y] + g_{\alpha}^{q_{\alpha} (y)} {(y-\alpha+1)}/{y} 
\\
\nabla_{\alpha} g_{\alpha}^{q_{\alpha} (y)} & =
	\psi^{(1)}(\alpha) \frac{\gamma(\alpha, y)}{y^{\alpha-1} e^{-y}}
	+ [\log y - \psi(\alpha)] \frac{y e^y}{\alpha^2} {}_2 F_2(\alpha, \alpha; \alpha+1, \alpha+1; -y) 
\\
& \quad - {2 y e^y}{\alpha^{-3}} {}_3 F_3(\alpha, \alpha, \alpha; \alpha+1,\alpha+1, \alpha+1; -y),
\eali
$}
\eeq
where $\psi(\alpha)$ is the digamma function, $\psi^{(m)}(x)$ the polygamma function of order $m$, $\gamma(\alpha, y)$ the lower incomplete gamma function, and ${}_p F_q(a_1,\cdots,a_p; b_1,\cdots,b_q; x)$ is the generalized hypergeometric function.
Reparameterizing the rate $\beta$ first, followed by substituting the components in \eqref{eq:G_alpha_gam_main} into the approach in Figure \ref{fig:PseudoCode}, one enables easy-to-use exploitation of GO Hessian with AD over a non-rep gamma node.
The low variance of our GO Hessian is illustrated in Figure \ref{fig:Hess_var_toy_gam_main}.



\textbf{A gradient-friendly reparameterization}
To model a gamma node within a stochastic graph, a naive method would parameterize shape $\alpha \!=\! \text{softplus}(\gamma_{\alpha})$ and rate $\beta \!=\! \text{softplus}(\gamma_{\beta})$, where without loss of generality $\gammav\!=\!\{\gamma_{\alpha}, \gamma_{\beta}\}$ is considered as the parameters of interest.
However, we find empirically that such a naive modeling may not be friendly to gradient-based methods, especially when target shape and/or rate are large.
Figure \ref{fig:TwoGAM_RepKLsurf_ab} shows an example with the reverse KL objective $\KL[\Gam(y; \alpha, \beta)||\Gam(y; 200, 200)]$; with that modeling, SGD (labeled as SGD$_{\alpha,\beta}$) bounces between two slopes at the bottom of the ``valley'' and advances slowly.
Alternatively, noticing that the valley bottom is approximately located in a line where $\Gam(y; \alpha, \beta)$ shares the same mean as the target, we propose to reparameterize via mean $\mu$ and standard deviation $\sigma$, \ie $q_{\gammav}(y) = \Gam(y; \frac{\mu^2}{\sigma^2}, \frac{\mu}{\sigma^2})$ with $\mu = \text{softplus}(\gamma_{\mu})$ and $\sigma = \text{softplus}(\gamma_{\sigma})$.
With this reparameterization, we obtain an approximately decorrelated objective surface (see Figure \ref{fig:TwoGAM_RepKLsurf_uv}) that is more friendly to gradient-based methods; it's apparent SGD in the $\mu$-$\sigma$ space, termed SGD$_{\mu,\sigma}$, converges to the optimum much faster.

\vspace{-0.2 cm}
\subsection{GO Hessian for discrete NB RVs}
\vspace{-0.1 cm}

For a NB RV $y \sim q_{\alphav}(y) \!=\! \NB(r, p)$, the distribution parameters $\alphav\!=\!\{r,p\}$ contain both the number of failures $r$ and the success probability $p$.
From the definition in \eqref{eq:GO_Hessian_12}, three components are necessary to calculate the GO Hessian, \ie $g_{\alphav}^{q_{\alphav} (y)}$, $\Dbb_{y} g_{\alphav}^{q_{\alphav} (y)}$, and $\nabla_{\alphav} g_{\alphav}^{q_{\alphav} (y)}$.
Note $\Dbb$ in the second term denotes the difference operator.
Due to space constraints, analytic expressions and detailed derivations are given in Appendix \ref{sec:NB_Implement}.
The low variance of the GO Hessian is demonstrated in Figure \ref{fig:Hess_var_toy_gam_main}.

\vspace{-0.1 cm}
\section{Experiments}
\vspace{-0.1 cm}

\begin{figure}[tb]
	\setlength{\abovecaptionskip}{2.0pt}
	\centering
	\subfigure[$\alpha$-$\beta$ space] {\label{fig:TwoGAM_RepKLsurf_ab}
		\includegraphics[width=0.24\columnwidth]{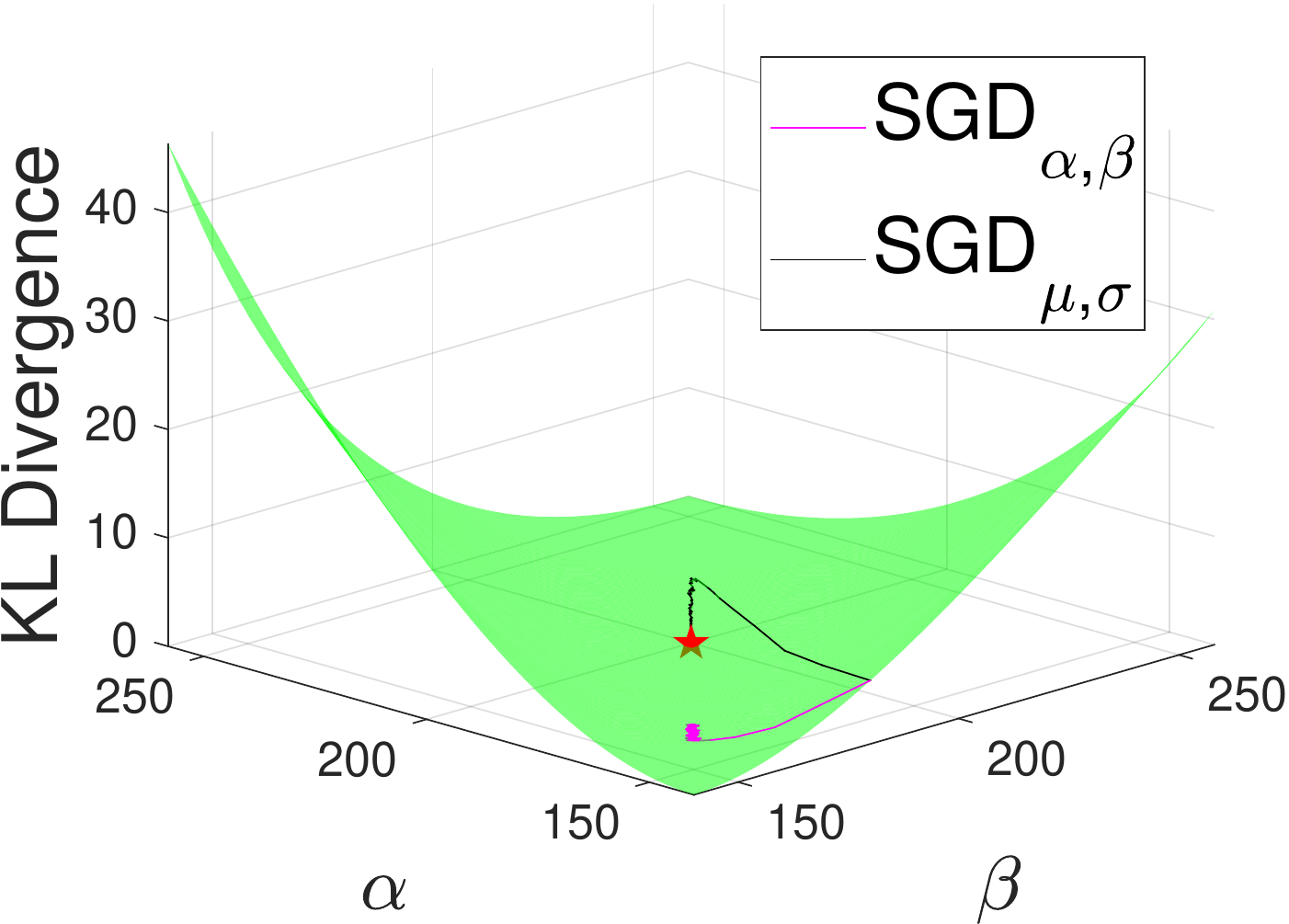}
	}
	\subfigure[$\mu$-$\sigma$ space] {\label{fig:TwoGAM_RepKLsurf_uv}
		\includegraphics[width=0.24\columnwidth]{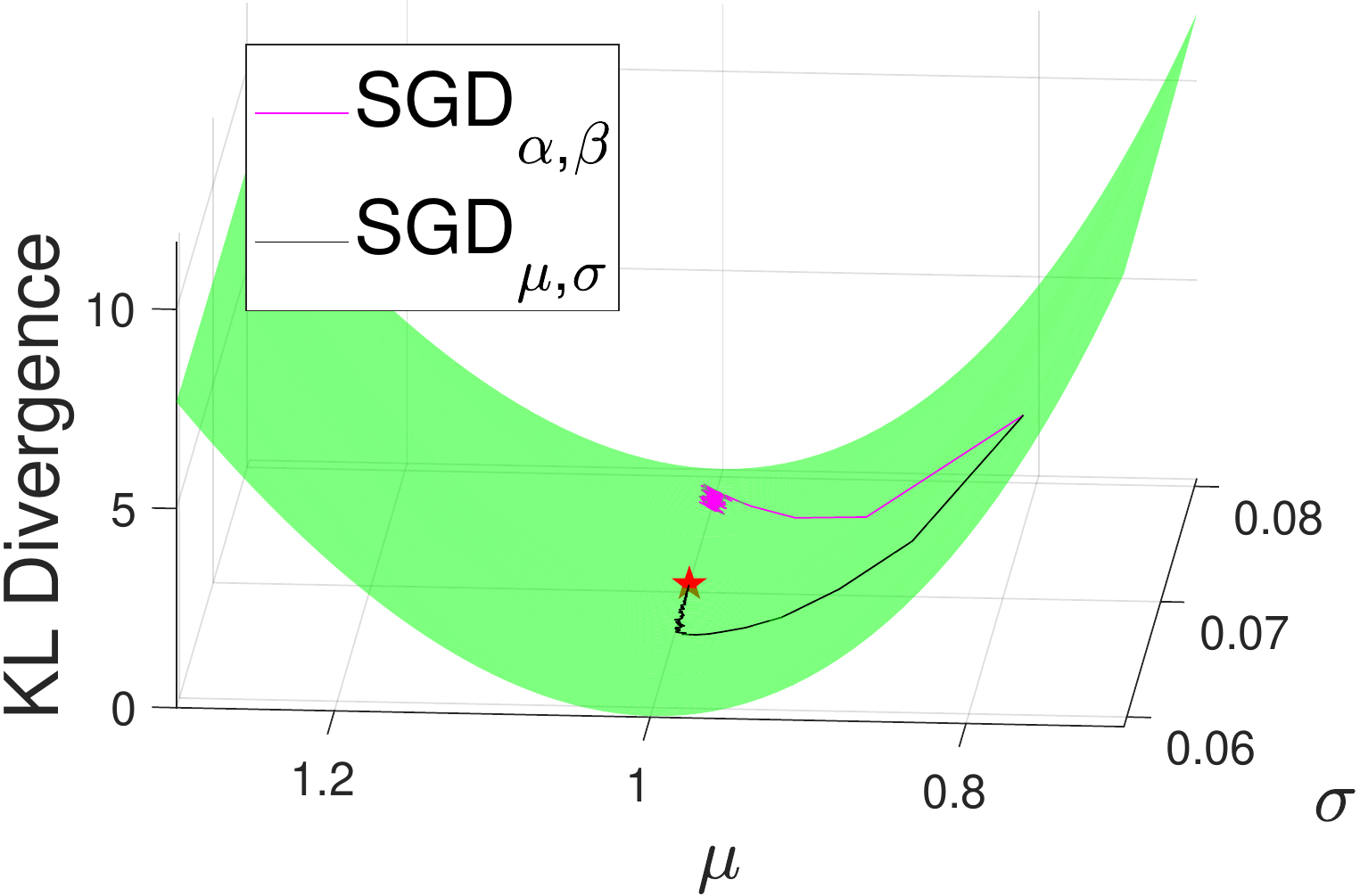}
	}
	\subfigure[] {\label{fig:Two_Gam_KLcurves_uvRep}
		\includegraphics[width=0.22\columnwidth]{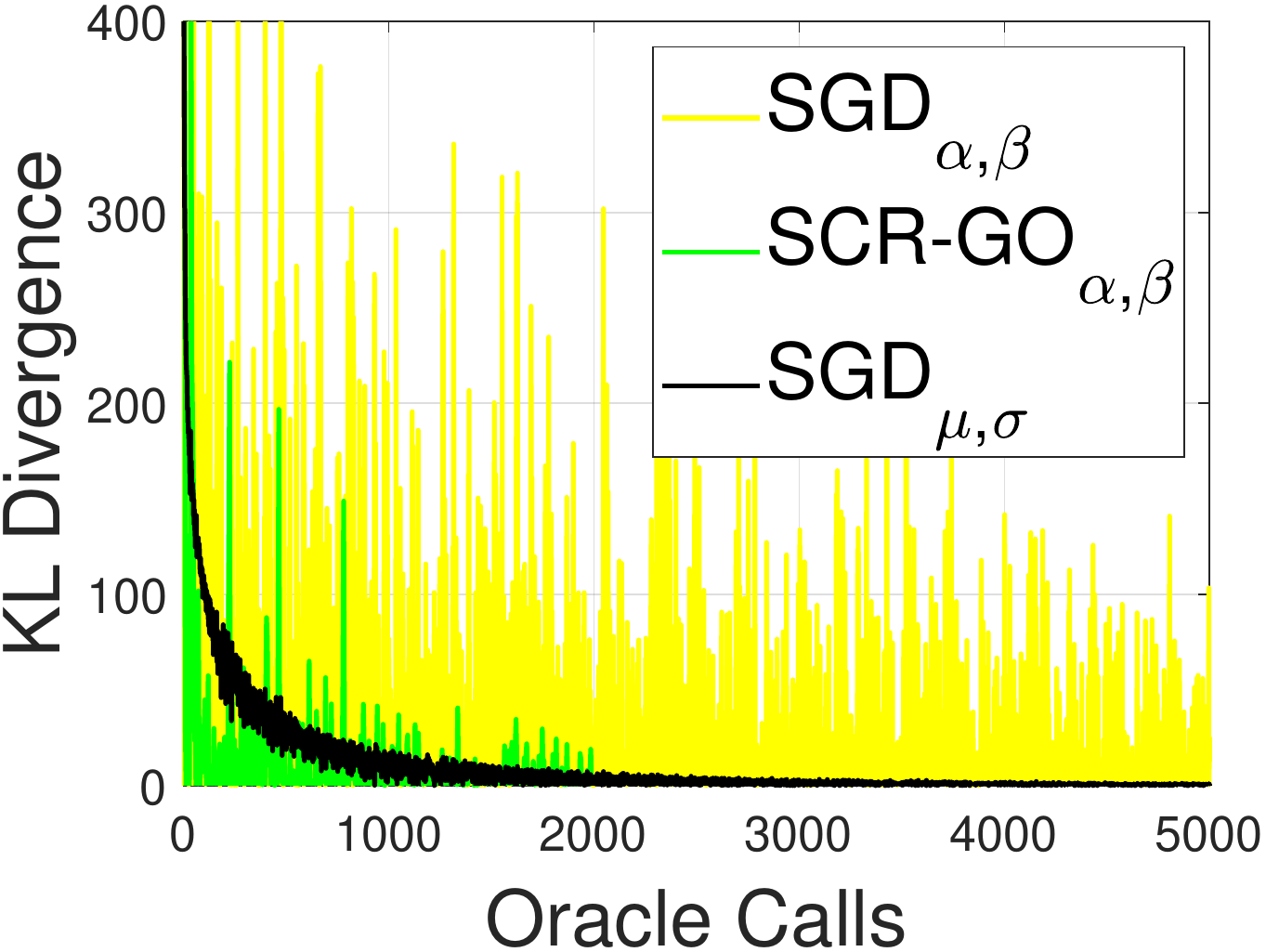}
	}
	\subfigure[] {\label{fig:Two_Gam_KLcurves_SCR}
		\includegraphics[width=0.22\columnwidth]{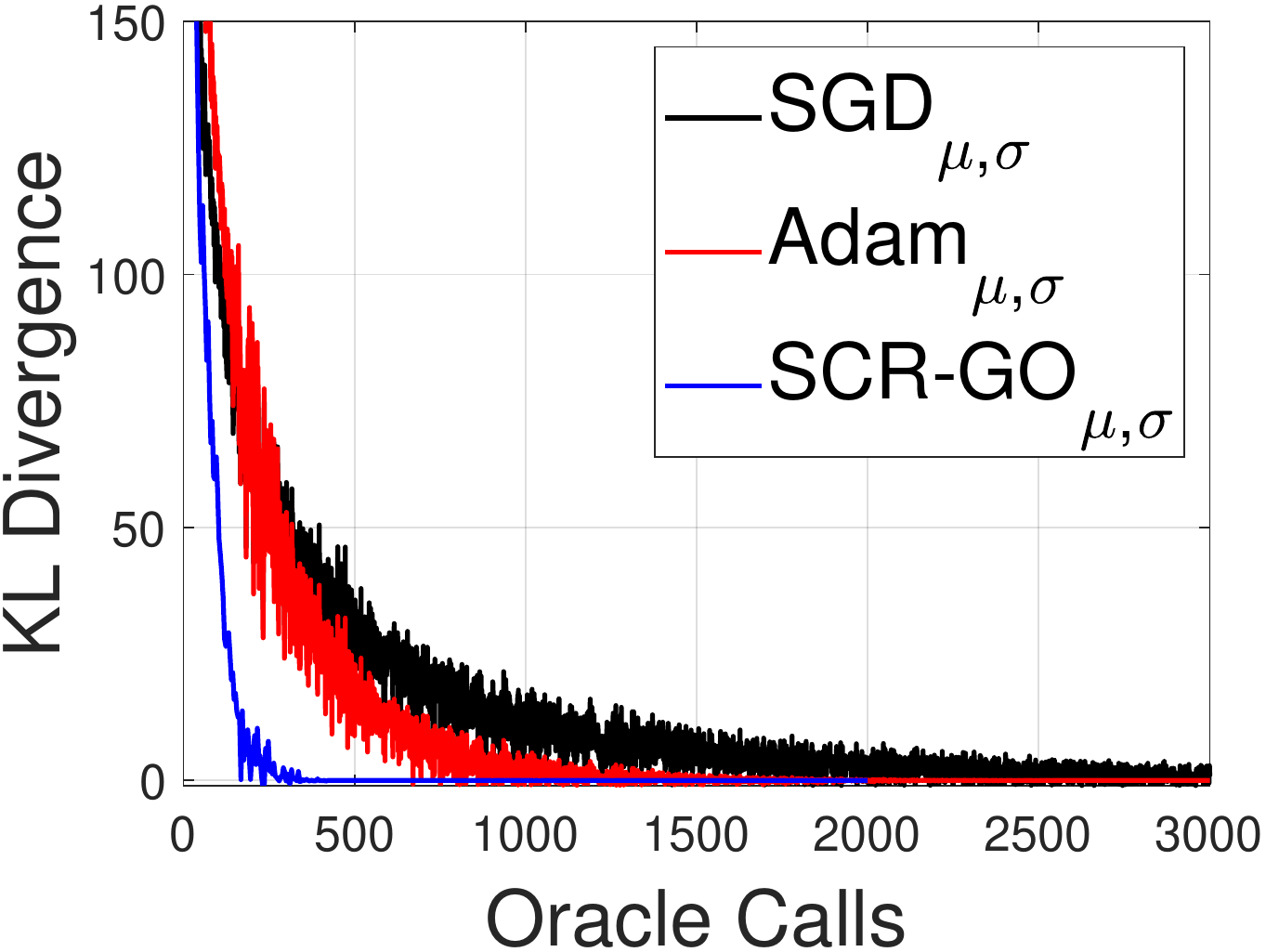}
	}
	\caption{
		(a-b) Demonstration of reverse KL divergences and SGD trajectories in different parameter spaces for Section \ref{sec:GOHess_gamma}.
		The subscript indicates the parameter space. 
		SGD$_{\alpha,\beta}$ leverages a learning rate of $10$, while the more efficient SGD$_{\mu,\sigma}$ only uses $10^{-3}$. 
		The red star denotes the optimum. 
		2,000 iterations are used to generate both trajectories, which are adapted to the other side for clear comparisons.
		(c-d) Training objectives versus the number of oracle calls for Section \ref{sec:KL_twoGamma_SCR}.
		The same curve of SGD$_{\mu, \sigma}$ is shown in both plots for clear comparisons. 
	}
	\label{fig:TwoGAM_RepKL}
	\vspace{-0.3 cm}
\end{figure}


The proposed techniques are verified with rigorous experiments where non-rep gamma RVs are of interest. 
Generalizing Section \ref{sec:GOHess_gamma}, we first test our SCR-GO on minimizing the reverse KL divergence between two gamma RVs.
Next, we consider mean-field variational inference for Poisson factor analysis (PFA; which is closely related to LDA \cite{Blei2003Latent}) \cite{Zhou2015Negative,Zhou2015Poisson}.
Finally concerning deep neural networks, the SCR-GO is tested on training variational encoders, mimicking the VAE \cite{kingma2014auto}, for PFA and its deep generalization of the Poisson gamma belief network (PGBN) \cite{zhou2016augmentable,cong2017deep}.

\textbf{Experimental settings}
We follow \cite{xu2017second,tripuraneni2018stochastic,yu2018third,roosta2018newton,kasai2018inexact} to show training objectives versus the number of oracle calls (calculations of gradients and/or HVPs); 
this is deemed a fair metric because it's independent of implementation-details/system-configurations and ideally an HVP can ``take about the same amount of computation as a gradient''\footnote{
This is the ideal situation.
However, it may not hold for our current implementation, which uses the $2$-backward technique in \eqref{eq:Hess_vec} and calculates special functions with a surrogate lib (see Appendix \ref{sec:Gam_Implement}).
That implementation also makes impossible fair comparisons wrt wall-clock time.
With our implementation/computer, in gamma-related experiments, a GO-HVP is about 3 times more expensive than a GO gradient.
} \cite{pearlmutter1994fast}.
We compare SCR-GO to standard SGD and the popular Adam \cite{kingma2014adam}.
For both SGD and Adam, learning rates from $\{0.001, 0.005, 0.01, 0.05, 0.1, 0.5, 1\}$ are tested with the best-tuned results shown.
Other settings are given in Appendix \ref{secapp:exp_settings}.

\vspace{-0.1 cm}
\subsection{Minimizing the reverse KL divergence between gamma RVs}
\label{sec:KL_twoGamma_SCR}
\vspace{-0.1 cm}

To demonstrate the effectiveness of the $\mu$-$\sigma$ reparameterization introduced in Section \ref{sec:GOHess_gamma} and the efficiency achieved from exploiting the curvature information via the GO Hessian, we first consider a simplified example, with the objective $\KL[\Gam(y; \alpha, \beta)||\Gam(y; 200, 1)]$, for better introduction. 
SGD, Adam, and our SCR-GO are compared within both $\alpha$-$\beta$ and $\mu$-$\sigma$ parameter spaces.

The training curves of the compared methods are given in Figures \ref{fig:Two_Gam_KLcurves_uvRep}-\ref{fig:Two_Gam_KLcurves_SCR}.
By comparing SGD$_{\alpha, \beta}$ with SGD$_{\mu, \sigma}$ in Figure \ref{fig:Two_Gam_KLcurves_uvRep}, it's clear that the $\mu$-$\sigma$ reparameterization method leads to a much faster convergence with smoother training curves, similar to those from deterministic optimization.
By contrast, SGD$_{\alpha, \beta}$ visits both high and low KL values frequently (bouncing within a valley bottom as shown in Figure \ref{fig:TwoGAM_RepKLsurf_ab}), with a much slower convergence to the optimum.
Thanks to the exploited curvature information, our SCR-GO$_{\alpha, \beta}$ shows a clearly improved convergence relative to SGD$_{\alpha, \beta}$.
Moving to the $\mu$-$\sigma$ space (see Figure \ref{fig:Two_Gam_KLcurves_SCR}), our SCR-GO$_{\mu, \sigma}$ delivers an even faster and more stabilized convergence than its counterpart SCR-GO$_{\alpha, \beta}$ and also SGD$_{\mu, \sigma}$ and Adam$_{\mu, \sigma}$, demonstrating the effectiveness of both the $\mu$-$\sigma$ reparameterization and the curvature exploitation via the GO Hessian.

\vspace{-0.1 cm}
\subsection{Mean-field variational inference for PFA}
\label{sec:MFVI_PFA}
\vspace{-0.1 cm}

For practical applications, we leverage the proposed techniques to develop efficient mean-field variational inference for the PFA, whose generative process is 
\beq
	p_{\thetav}(\xv, \zv): \xv \sim \Pois(\xv | \Wmat \zv), \zv \sim \Gam(\zv|\alphav_0, \betav_0),
\eeq
where $\xv$ is the count data variable, $\Wmat$ the topic matrix with each column/topic $\wv_k$ located in the simplex, \ie $w_{vk} \!>\! 0, \sum\nolimits_{v} w_{vk} \!=\! 1$, $\zv$ the latent code, and $\thetav\!=\!\{\Wmat,\alphav_0, \betav_0\}$. 
For mean-field variational inference, we assume variational approximation distribution
$
q_{\phiv}(\zv)\!=\! \Gam(\zv; \frac{\muv^2}{\sigmav^2}, \frac{\muv}{\sigmav^2})
$ with $\phiv \!=\! \{\muv, \sigmav\}$.
Accordingly given training dataset $\{\xv_1, \cdots, \xv_N\}$, the objective is to maximize  
$$
	\ELBO(\thetav, \{\phiv_i\}) = \frac{1}{N} \sum\nolimits_{i=1}^{N} \Ebb_{q_{\phiv_i}(\zv_i)} \Big[ \log \frac{p_{\thetav}(\xv_i, \zv_i)}{q_{\phiv_i}(\zv_i)} \Big].
$$
A well-tuned Adam optimizer and our SCR-GO are implemented for this experiment, with both training curves shown in Figure \ref{fig:MNIST_MF_PFA_ELBOcurves}.
It's clear that with the additional curvature information exploited via GO-HVP, our SCR-GO exhibits a faster convergence to a better local optimum, with a lower variance than the well-tuned Adam optimizer.

\subsection{Variational encoders for PFA and PGBN}
\label{sec:VAE_PFA}

\begin{figure}[tb]
	\setlength{\abovecaptionskip}{1.0pt}
	\centering
	\subfigure[] {\label{fig:MNIST_MF_PFA_ELBOcurves}
		\includegraphics[width=0.27\columnwidth]{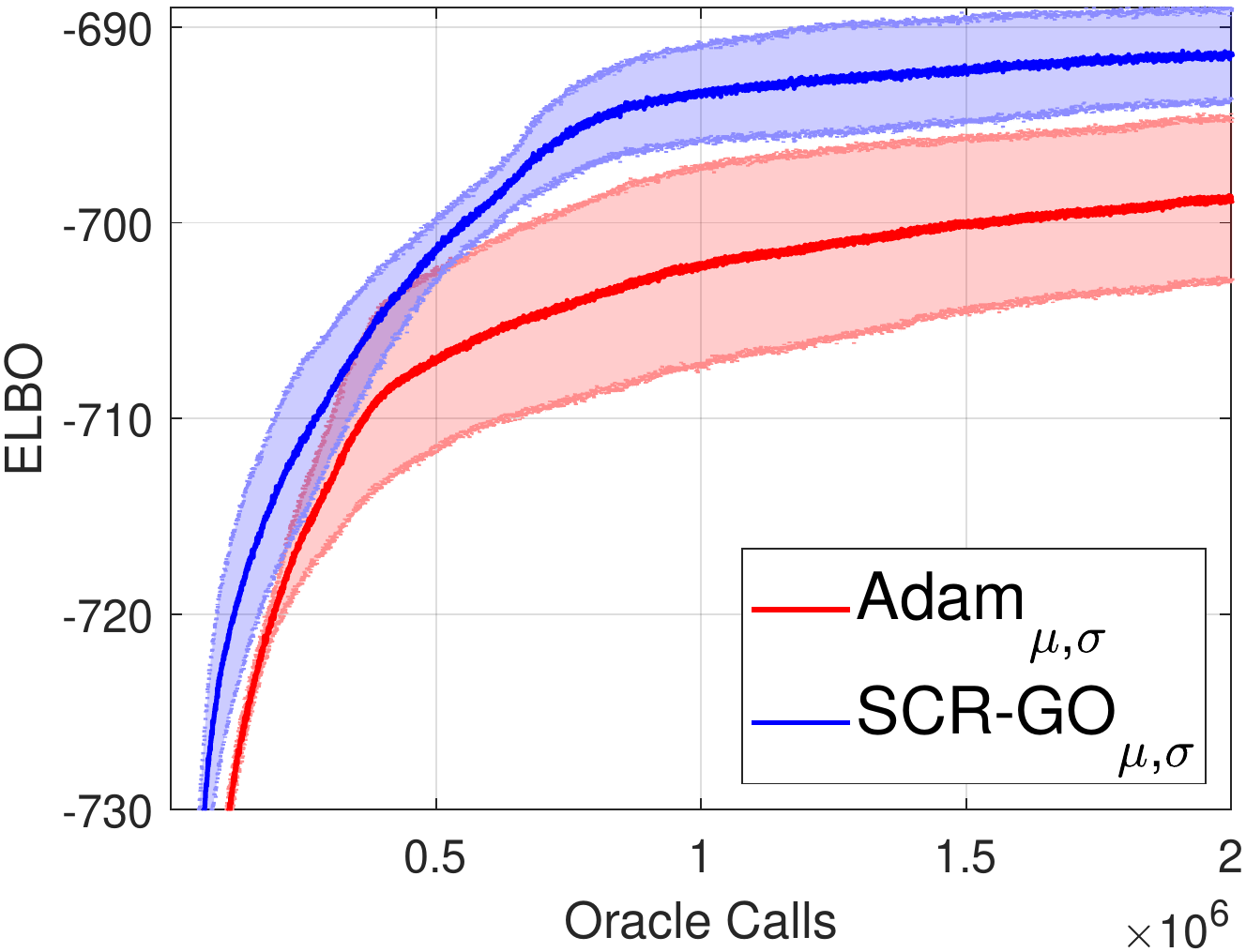}
	}
	\quad
	\subfigure[] {\label{fig:MNIST_PFA_JQNN_oracall}
		\includegraphics[width=0.27\columnwidth]{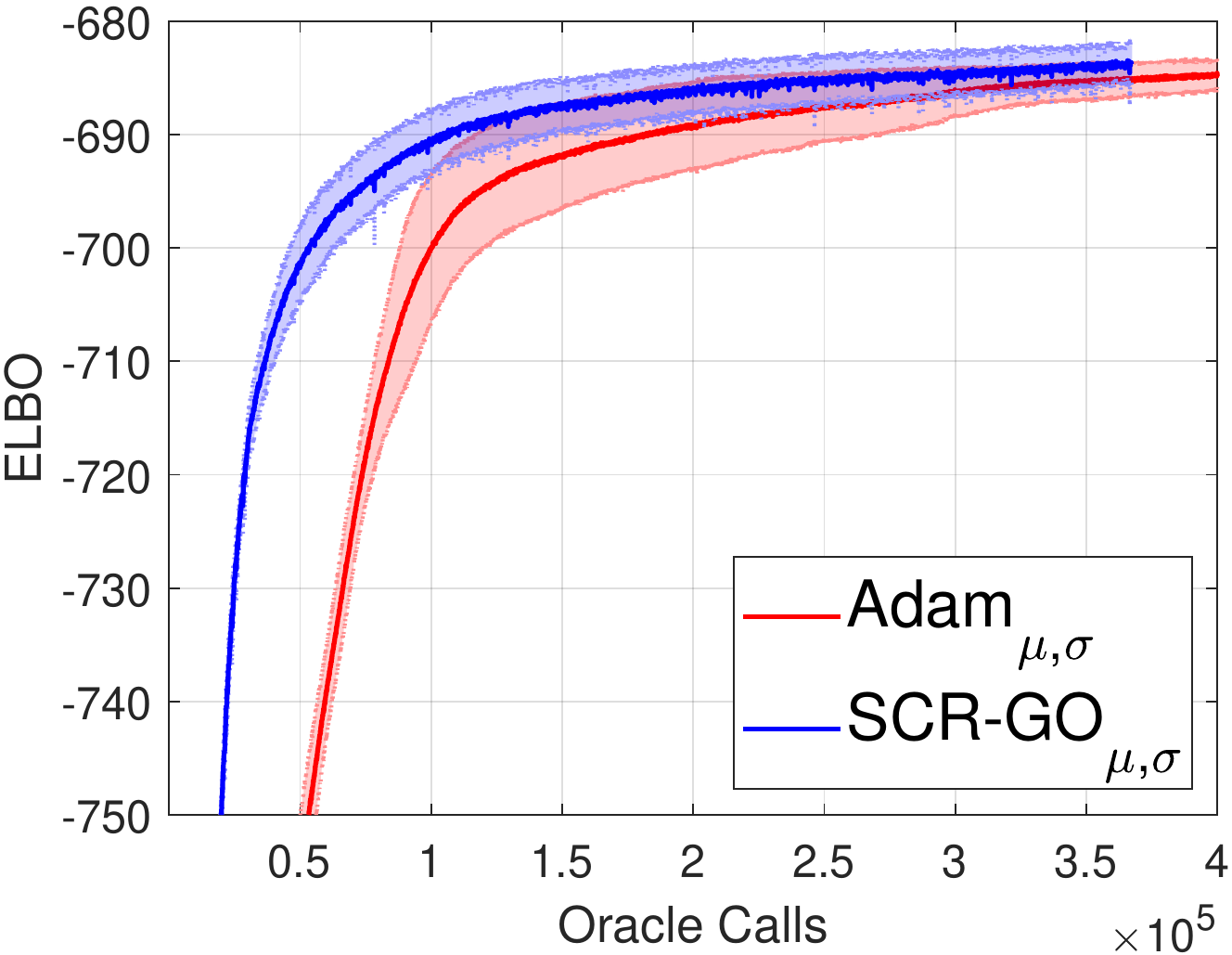}
	}
	\quad
	\subfigure[] {\label{fig:MNIST_PFA_JQNN_observ}
		\includegraphics[width=0.27\columnwidth]{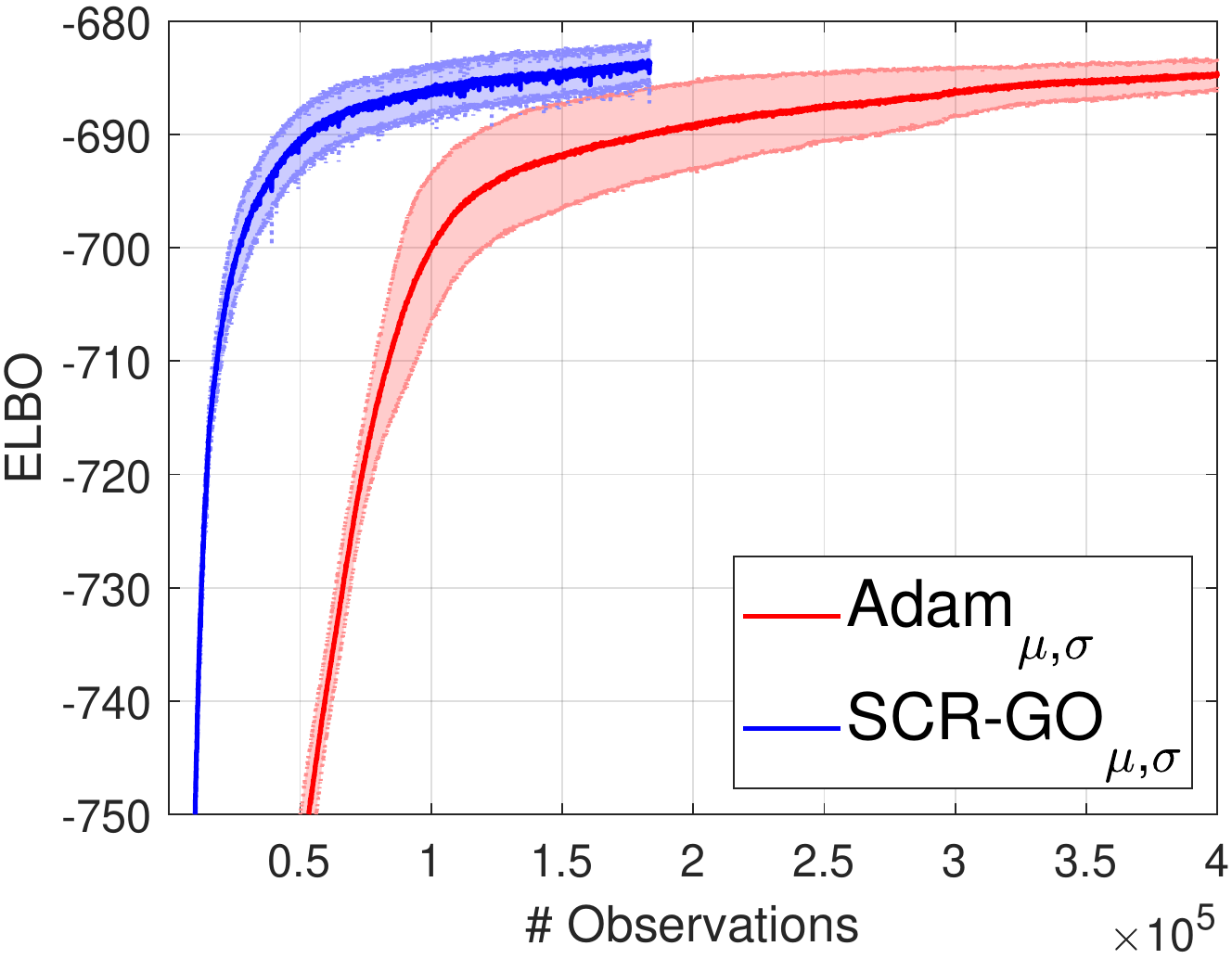}
	}
	\caption{
		Training curves of mean-field variational inference (a) and variational encoder (b-c) for PFA on MNIST.
		Variances are estimated based on $5$ random seeds.
	}\label{fig:MNIST_PFA_JQNN}
	\vspace{-0.3 cm}
\end{figure}

To test the effectiveness of the presented techniques when combined with deep neural networks, we consider developing a variational encoder for PFA mimicking the VAE, that is, 
\beq\label{eq:qz_gx}
q_{\phiv}(\zv | \xv) \!=\! \Gam(\zv; {\muv^2}/{\sigmav^2}, {\muv}/{\sigmav^2}),
\muv \!=\! \NN_{\muv}(\xv), \sigmav \!=\! \NN_{\sigmav}(\xv), 
\eeq
where $\NN(\cdot)$ denotes a neural network and $\phiv$ contains all the parameters of $\NN_{\muv}(\cdot)$ and $\NN_{\sigmav}(\cdot)$. Accordingly the objective is to maximize
$
	\ELBO(\thetav, \phiv) = \Ebb_{q_{\phiv}(\zv | \xv)} \big[ \log p_{\thetav}(\xv, \zv) - \log q_{\phiv}(\zv | \xv) \big].
$

Figures \ref{fig:MNIST_PFA_JQNN_oracall}-\ref{fig:MNIST_PFA_JQNN_observ} show the training objectives versus the number of oracle calls and processed observations. 
It's clear that the proposed SCR-GO performs better than a well-tuned Adam optimizer in terms of oracle calls and data efficiency, when applied to a model with deep neural networks.
The better performance of SCR-GO is attributed to its exploitation of the curvature information via the GO Hessian, which takes into consideration the correlation among parameters within $p_{\thetav}(\xv, \zv)$ and $q_{\phiv}(\zv | \xv)$ and utilizes an (implicit) adaptive learning rate mimicking the classical Newton's method.

For further testing under more challenging settings with a hierarchically-structured $q_{\gammav}(\yv)$ (see \eqref{eq:reverse_issue}), we consider developing a variational encoder for a $2$-layer PGBN. Specifically, with $\zv = \{\zv_1, \zv_2\}$,
\beq
\bali
& p_{\thetav}(\xv, \zv): \xv \sim \Pois(\xv | \Wmat_1 \zv_1), \zv_1 \sim \Gam(\zv_1|\Wmat_2 \zv_2, \cv_2), \zv_2 \sim \Gam(\zv_2|\alphav_0, \betav_0)
\\
& q_{\phiv}(\zv | \xv):  \zv_2 \sim q_{\phiv_2}(\zv_2 | \zv_1), \zv_1 \sim q_{\phiv_1}(\zv_1 | \xv),
\eali
\eeq
where $\thetav\!=\!\{\Wmat_1,\Wmat_2, \cv_2, \alphav_0, \betav_0\}$, $\phiv\!=\!\{\phiv_1,\phiv_2\}$, and both $q_{\phiv_2}(\zv_2 | \zv_1)$ and $q_{\phiv_1}(\zv_1 | \xv)$ are constructed as in \eqref{eq:qz_gx}. 
Due to space constraints, the experimental details and results are moved to Appendix \ref{secapp:VAE_PGBN}, where one observes similar plots as those in Figure \ref{fig:MNIST_PFA_JQNN}, confirming the effectiveness and efficiency of the presented techniques.

\section{Conclusions}

An unbiased low variance Hessian estimator, termed GO Hessian, is proposed to efficiently exploit curvature information for an expectation-based objective over a stochastic computation graph, with continuous rep/non-rep internal nodes and continuous/discrete leaves.
GO Hessian is easy-to-use with AD and HVP, enabling a low cost second-order optimization over high-dimensional parameters.
Based on the proposed GO Hessian, a new second-order optimization method is proposed for the expectation-based objective, which empirically performs better than a well-tuned Adam optimizer in challenging situations with non-rep gamma RVs.
A useful reparameterization is revealed for gamma RVs to make their optimization more friendly to gradient-based methods.

\section*{Broader Impact}

This work focuses on the fundamental research of an unbiased low-variance Hessian estimator for expectation-based objectives over stochastic computation graphs; accordingly, it does not present direct ethical or societal impact.
However, the proposed techniques may benefit many research fields, such as variational inference, generative models, or reinforcement learning, with easy-to-use curvature exploitation and better training/sample efficiency.

\begin{ack}
	
	We thank the anonymous reviewers for their constructive comments. 
	The research was supported by part by DARPA, DOE, NIH, NSF and ONR. The Titan Xp GPU used was donated by the NVIDIA Corporation. 
	
%
%
\end{ack}

{\small
	\bibliography{ReferencesCong}
	\bibliographystyle{plain}
	
}


\newpage
\appendix

\newgeometry{top=1in,bottom=1in,left=0.9in,right=0.9in} 
\setlength{\columnsep}{1.1cm} 
\twocolumn[ 
	\begin{center}
		{\large
			\textbf{
				Appendix of 
				GO Hessian for Expectation-Based Objectives
		}}
	
		\vspace{0.3 cm}
		\textbf{Yulai Cong, Miaoyun Zhao, Jianqiao Li, Junya Chen, Lawrence Carin
		\\Department of ECE, Duke University}
	\end{center}
	
	\vskip 0.3in
] 

\section{Naive derivations of the Hessian of Framework II}
\label{secapp:Naive_derivation_HessianII}

Below we reveal the challenge in deriving the Hessian of Framework II in \eqref{eq:reverse_issue} of the main manuscript.
Specifically, naive derivations fail to deliver an easy-to-use Monte Carlo (MC) estimator.

For simplicity, let's first employ the simplest single-layer continuous settings for presentation, where the continuous random variable (RV) $\yv \sim q_{\gammav} (\yv) = \prod\nolimits_{v} q_{\gammav} (y_v)$.

To compute the Hessian of the objective of Framework II, a naive method would (taking the two-dimensional case as example) 
\beq\label{appeq:second_derivative_example}
\resizebox{\hsize}{!}{$
	\bali
	& \nabla_{\gammav}^2 \Lc(\gammav) = \nabla_{\gammav}^2 \Ebb_{q_{\gammav} (\yv)} [f(\yv)] =
	\\
	& \int \left[ \bali
	& q_{\gammav} (y_1) \nabla_{\gammav}^2 q_{\gammav} (y_2) + 
	[\nabla_{\gammav} q_{\gammav} (y_1)] [\nabla_{\gammav} q_{\gammav} (y_2)]^T
	\\
	& + q_{\gammav} (y_2) \nabla_{\gammav}^2 q_{\gammav} (y_1) +
	[\nabla_{\gammav} q_{\gammav} (y_2)] [\nabla_{\gammav} q_{\gammav} (y_1)]^T
	\eali \right]
	f(\yv) d\yv.
	\eali$}
\eeq
It's almost impossible to directly estimate such a complicated expression with low variance without bias.

Alternatively, noticing that the GO gradient is derived based on the integration-by-parts \cite{cong2019go} and that the pathwise derivative originates from the transport equation \cite{figurnov2018implicit,jankowiak2018pathwise}, one may try to apply those foundations (the integration-by-parts/transport-equation) twice in a naive manner to get a Hessian estimator.
However, as detailed below, the resulting expressions are neither easy-to-use (amenable to auto-differentiation (AD) and Hessian-vector products (HVP)) in practice nor consistent with one's intuition.

As the GO gradient contains the pathwise derivative as a special case (see Section \ref{sec:pre_GO_gradient} of the main manuscript), we derive based on the integration-by-parts.
Thanks to the symmetry in \eqref{appeq:second_derivative_example}, we only apply the integration-by-parts twice to the first two terms as
\beq\label{appeq:NaiveHess11}
\resizebox{\hsize}{!}{$
	\bali
	\int [ q_{\gammav} (y_1) & \nabla_{\gammav}^2 q_{\gammav} (y_2) ]
	f(\yv) d\yv 
	= -\int q_{\gammav} (y_1) [ \nabla_{\gammav}^2 Q_{\gammav} (y_2) ]
	[ \nabla_{y_2} f(\yv) ] d\yv 
	\\
	& = \Ebb_{q_{\gammav} (\yv)} \bigg[\frac{-\nabla_{\gammav}^2 Q_{\gammav} (y_2)}{q_{\gammav} (y_2)} \nabla_{y_2} f(\yv) \bigg]
	\\
	\text{OR} & = \int q_{\gammav} (y_1) \bigg[ \nabla_{\gammav}^2 \int_{-\infty}^{y_2}  Q_{\gammav} (\hat y_2)     d\hat y_2 \bigg]
	[ \nabla_{y_2}^2 f(\yv) ] d\yv ,
	\eali
	$}
\eeq
and
\beq\label{appeq:NaiveHess12}
\resizebox{\hsize}{!}{$
	\bali
	\int [ \nabla_{\gammav} q_{\gammav} (y_1)]  &  [\nabla_{\gammav} q_{\gammav} (y_2) ]^T   f(\yv)   d\yv 
	= -\int [ \nabla_{\gammav} q_{\gammav} (y_1)]  [\nabla_{\gammav} Q_{\gammav} (y_2) ]^T   [\nabla_{y_2} f(\yv)]    d\yv 
	\\
	& = \int [ \nabla_{\gammav} Q_{\gammav} (y_1)]  [\nabla_{\gammav} Q_{\gammav} (y_2) ]^T   [\nabla_{y_2 y_1}^2 f(\yv)]    d\yv 
	\\
	& = \Ebb_{q_{\gammav} (\yv)} \bigg[   \frac{-\nabla_{\gammav} Q_{\gammav} (y_1)}{q_{\gammav} (y_1)}    \Big[    \frac{-\nabla_{\gammav} Q_{\gammav} (y_2)}{q_{\gammav} (y_2)}  \Big]^T   \nabla_{y_2 y_1}^2 f(\yv) \bigg].
	\eali
	$}
\eeq
Substituting the above results to \eqref{appeq:second_derivative_example} and leveraging the symmetry, we yield
\beq\label{appeq:naiveHess1}
\resizebox{\hsize}{!}{$
	\bali
	\nabla_{\gammav}^2 & \Lc(\gammav) 
	= \Ebb_{q_{\gammav} (\yv)} \left[ \bali
	& \frac{-\nabla_{\gammav}^2 Q_{\gammav} (y_2)}{q_{\gammav} (y_2)} \nabla_{y_2} f(\yv) 
	\\
	& + \frac{-\nabla_{\gammav} Q_{\gammav} (y_1)}{q_{\gammav} (y_1)}    \Big[    \frac{-\nabla_{\gammav} Q_{\gammav} (y_2)}{q_{\gammav} (y_2)}  \Big]^T   \nabla_{y_2 y_1}^2 f(\yv)
	\\ 
	& + \frac{-\nabla_{\gammav}^2 Q_{\gammav} (y_1)}{q_{\gammav} (y_1)} \nabla_{y_1} f(\yv) 
	\\
	& + \frac{-\nabla_{\gammav} Q_{\gammav} (y_2)}{q_{\gammav} (y_2)}    \Big[    \frac{-\nabla_{\gammav} Q_{\gammav} (y_1)}{q_{\gammav} (y_1)}  \Big]^T   \nabla_{y_1 y_2}^2 f(\yv)
	\eali \right]
	\eali$}
\eeq
or
\beq\label{appeq:naiveHess2}
\resizebox{\hsize}{!}{$
	\bali
	\nabla_{\gammav}^2 & \Lc(\gammav) 
	= \Ebb_{q_{\gammav} (\yv)} \left[ \bali
	& \frac{\nabla_{\gammav}^2 \int_{-\infty}^{y_2}  Q_{\gammav} (\hat y_2) d\hat y_2}{q_{\gammav} (y_2)} \nabla^2_{y_2} f(\yv) 
	\\
	& + \frac{-\nabla_{\gammav} Q_{\gammav} (y_1)}{q_{\gammav} (y_1)}    \Big[    \frac{-\nabla_{\gammav} Q_{\gammav} (y_2)}{q_{\gammav} (y_2)}  \Big]^T   \nabla_{y_2 y_1}^2 f(\yv)
	\\ 
	& + \frac{\nabla_{\gammav}^2 \int_{-\infty}^{y_1}  Q_{\gammav} (\hat y_1) d\hat y_1}{q_{\gammav} (y_1)} \nabla^2_{y_1} f(\yv) 
	\\
	& + \frac{-\nabla_{\gammav} Q_{\gammav} (y_2)}{q_{\gammav} (y_2)}    \Big[    \frac{-\nabla_{\gammav} Q_{\gammav} (y_1)}{q_{\gammav} (y_1)}  \Big]^T   \nabla_{y_1 y_2}^2 f(\yv)
	\eali \right].
	\eali$}
\eeq

Generalizing the above two equations for a multi-dimensional $\yv$ (still in the single-layer continuous settings), it's apparent that one won't achieve an easy-to-use Hessian estimator that is amenable to AD and HVP, because of either the combination of $\{\nabla_{\yv} f(\yv), \nabla_{\yv}^2 f(\yv)\}$ or the complicated integral $\nabla_{\gammav}^2 \int_{-\infty}^{y_i}  Q_{\gammav} (\hat y_i) d\hat y_i$.

Even in the simplest single-layer continuous settings, naive derivations fail to deliver an easy-to-use Hessian estimator. 
For more complicated settings with stochastic computation graphs, various kinds of conditional structures therein would make it extremely hard (if not impossible) to even derive expressions as \eqref{appeq:naiveHess1}/\eqref{appeq:naiveHess2} (refer to the derivations in Section \ref{secapp:derivation_GO_Hessian}), not to speak of an easy-to-use implementation that is amenable to AD and HVP.

By contrast, our GO Hessian has the clear advantage of being intuitively simple and easy-to-use in practice, \ie amenable to AD and HVP.

\section{Derivations of the GO Hessian}
\label{secapp:derivation_GO_Hessian}

Recall that the objective of Framework II is 
\beq\label{eq:FrameII_app}
\text{Framework II:  }
\min_{\gammav} \Lc(\gammav) \triangleq \Ebb_{q_{\gammav}(\yv)} [f(\yv)].
\eeq
Below we derive the GO Hessian for ($i$) where the RV $\yv$ is single-layer and continuous, ($ii$) where $\yv$ is single-layer/leaf and discrete, and ($iii$) where $\yv$ denotes hierarchically constructed stochastic computation graphs, with continuous internal RVs and continuous/discrete leaf RVs.

\subsection{GO Hessian in single-layer continuous settings}

Consider a single-layer continuous RV $\yv \sim q_{\gammav} (\yv) = \prod\nolimits_{v} q_{\gammav} (y_v)$. 
With $\gamma_a$ denoting the $a$-th element of parameters $\gammav$, the GO gradient \cite{cong2019go} for \eqref{eq:FrameII_app} is 
\beq\label{eq:GOgrad_onedim}
\resizebox{\hsize}{!}{$
\bali
& \nabla_{\gamma_a} \Lc(\gammav) 
= \nabla_{\gamma_a} \Ebb_{q_{\gammav}(\yv)} [f(\yv)]
= \Ebb_{q_{\gammav} (\yv)} \big[
\Gmat_{\gamma_a}^{q_{\gammav} (\yv)}
\nabla_{\yv} f(\yv)
\big]
\\
& = \Ebb_{q_{\gammav} (\yv)} \big[
\sum\nolimits_{v} g_{\gamma_a}^{q_{\gammav} (y_v)}
\nabla_{y_v} f(\yv) 
\big]
= \Ebb_{q_{\gammav} (\yv)} \big[ F(\yv, \gammav) \big],
\eali
$}
\eeq
where $F(\yv, \gammav) \triangleq \sum\nolimits_{v} g_{\gamma_a}^{q_{\gammav} (y_v)} \nabla_{y_v} f(\yv) $.

Noticing ($i$) that the GO gradient empirically shows low variance and often works well with only one sample, ($ii$) that the \emph{variable-nabla} $g_{\gamma_a}^{q_{\gammav} (y_v)}$ is differentiable with often a simple expression (see Table 3 of \cite{cong2019go}; this originates from that the denominator of the \emph{variable-nabla} is likely to be canceled due to the Leibniz integral rule.\footnote{If $y_v$ is rep, that denominator is canceled exactly, reducing GO gradient to the Rep (see Lemma 1 of \cite{cong2019go})}), and ($iii$) the GO gradient has a similar expression as the original objective $\Lc(\gammav)$, we apply the GO gradient to \eqref{eq:GOgrad_onedim} again to yield
\beq\label{eq:GOHess_1continuous_app}
\resizebox{\hsize}{!}{$
\bali
& \nabla^2_{\gamma_b\gamma_a} \Lc(\gammav) 
= \nabla_{\gamma_b} \Ebb_{q_{\gammav} (\yv)} \big[ F(\yv, \gammav) \big]
\\
& = \Ebb_{q_{\gammav} (\yv)} \Big[
\Gmat_{\gamma_b}^{q_{\gammav} (\yv)}
\nabla_{\yv} F(\yv, \gammav) 
+ \nabla_{\gamma_b} F(\yv, \gammav)
\Big]
\\
& = \Ebb_{q_{\gammav} (\yv)} \left[
\bali
& \sum\nolimits_{k} g_{\gamma_b}^{q_{\gammav} (y_k)}
\nabla_{y_k} \sum\nolimits_{v} g_{\gamma_a}^{q_{\gammav} (y_v)} \nabla_{y_v} f(\yv)
\\
& + \nabla_{\gamma_b} \sum\nolimits_{v} g_{\gamma_a}^{q_{\gammav} (y_v)} \nabla_{y_v} f(\yv)
\eali \right]
\\
& = \Ebb_{q_{\gammav} (\yv)} 
\left[ \bali
	& \sum\nolimits_{k} g_{\gamma_b}^{q_{\gammav} (y_k)}
	\left[ \bali
		& [\nabla_{y_k} g_{\gamma_a}^{q_{\gammav} (y_k)}] \nabla_{y_k} f(\yv) + 
		\\
		& \sum\nolimits_{v} g_{\gamma_a}^{q_{\gammav} (y_v)} \nabla^2_{y_k y_v} f(\yv) 
	\eali \right]
	\\ 
	& + \sum\nolimits_{v} [\nabla_{\gamma_b} g_{\gamma_a}^{q_{\gammav} (y_v)}] \nabla_{y_v} f(\yv)
\eali \right]
\\
& = \Ebb_{q_{\gammav} (\yv)} 
\left[ \bali
& \Gmat_{\gamma_b}^{q_{\gammav}(\yv)} [\nabla^2_{\yv} f(\yv)] \Gmat_{\gamma_a}^{q_{\gammav}(\yv)} {}^T + 
\\
& \sum\nolimits_{v} 
\big[ 
	g_{\gamma_b}^{q_{\gammav} (y_v)} \nabla_{y_v} g_{\gamma_a}^{q_{\gammav} (y_v)} 
	+ \nabla_{\gamma_b} g_{\gamma_a}^{q_{\gammav} (y_v)}
\big]
\nabla_{y_v} f(\yv)
\eali \right]
\eali
$}
\eeq
By rewriting the above elemental definition into its vector/matrix form, we yield \eqref{eq:GO_Hessian_1} of the main manuscript, \ie
\beq\label{eq:GOHess_1continuous_F_app}
\resizebox{\hsize}{!}{$
\bali
& \nabla^2_{\gammav} \Lc(\gammav) = \nabla^2_{\gammav} \Ebb_{q_{\gammav}(\yv)} [f(\yv)]
\\
& = \Ebb_{q_{\gammav} (\yv)} 
\left[
	\Gmat_{\gammav}^{q_{\gammav}(\yv)} [\nabla^2_{\yv} f(\yv)] \Gmat_{\gammav}^{q_{\gammav}(\yv)} {}^T
	+ \Hten_{\gammav\gammav}^{q_{\gammav}(\yv)} \nabla_{\yv} f(\yv)
\right],
\eali
$}
\eeq
where $\Hten_{\gammav\gammav}^{q_{\gammav}(\yv)}$ is a $3$-D tensor with its element
\beq
\resizebox{\hsize}{!}{$
[\Hten_{\gammav\gammav}^{q_{\gammav}(\yv)}]_{b,a,v} = g_{\gamma_b}^{q_{\gammav} (y_v)} \nabla_{y_v} g_{\gamma_a}^{q_{\gammav} (y_v)} + \nabla_{\gamma_b} g_{\gamma_a}^{q_{\gammav} (y_v)} \triangleq  h_{\gamma_b \gamma_a}^{q_{\gammav}(y_v)}
$}
\eeq 
and the tensor-vector product $\Hten\av$ outputs a matrix whose elements $[\Hten\av]_{b,a} = \sum\nolimits_{v} \Hten_{b,a,v} \av_{v}$.
We name $h_{\gamma_b \gamma_a}^{q_{\gammav}(y_v)}$ the \emph{variable-hess}, because of its intuitive meaning of the second-order ``derivative'' of a RV $y_v$ wrt parameters $\{\gamma_a,\gamma_b\}$.

\subsection{GO Hessian in single-layer/leaf discrete settings}

Next we derive the GO Hessian in single-layer or leaf discrete settings, where $\yv$ has discrete components $y_v$. 
Recall that the GO gradient \cite{cong2019go} for \eqref{eq:FrameII_app} is now defined as 
\beq\label{eq:}
\resizebox{\hsize}{!}{$
	\bali
	\nabla_{\gamma_a} \Lc(\gammav) 
	& = \nabla_{\gamma_a} \Ebb_{q_{\gammav}(\yv)} [f(\yv)]
	= \Ebb_{q_{\gammav} (\yv)} \big[
	\Gmat_{\gamma_a}^{q_{\gammav} (\yv)}
	\Dbb_{\yv} f(\yv)
	\big]
	\\
	& = \Ebb_{q_{\gammav} (\yv)} \big[
	\sum\nolimits_{v} g_{\gamma_a}^{q_{\gammav} (y_v)}
	\Dbb_{y_v} f(\yv) 
	\big],
	\eali
	$}
\eeq
where $\Dbb_{y_v} f(\yv) \triangleq f(\yv^{v+})-f(\yv)$ with $\yv^{v+} \triangleq [\cdots,y_{v\!-\!1},y_{v}+1,y_{v\!+\!1},\cdots]^T$.

Similarly, based on the GO gradient, we yield 
\beq\label{eq:GOHess_1discrete_app}
\resizebox{\hsize}{!}{$
\bali
	& \nabla^2_{\gamma_b\gamma_a} \Lc(\gammav) 
	= \nabla_{\gamma_b} \Ebb_{q_{\gammav} (\yv)} \big[
		\sum\nolimits_{v} g_{\gamma_a}^{q_{\gammav} (y_v)} 
		\Dbb_{y_v} f(\yv) \big] 
	\\
	& = \Ebb_{q_{\gammav} (\yv)} \left[ \bali
		& \sum\nolimits_{k} g_{\gamma_b}^{q_{\gammav} (y_k)} \Dbb_{y_k} \left[ 
		\sum\nolimits_{v} g_{\gamma_a}^{q_{\gammav} (y_v)} \Dbb_{y_v} f(\yv) 
		\right] 
		\\
		& + \nabla_{\gamma_b} \sum\nolimits_{v} g_{\gamma_a}^{q_{\gammav} (y_v)} \Dbb_{y_v} f(\yv)
	\eali \right] \\
	& = \Ebb_{q_{\gammav} (\yv)} \left[ \bali
		& \sum\nolimits_{k} g_{\gamma_b}^{q_{\gammav} (y_k)} 
		\left[ \bali
		& \sum\nolimits_{v \neq k} g_{\gamma_a}^{q_{\gammav} (y_v)} \Dbb^2_{y_k y_v} f(\yv)
		\\
		& + \Dbb_{y_k} \big[ g_{\gamma_a}^{q_{\gammav} (y_k)} \Dbb_{y_k} f(\yv) \big]
		\eali \right] \\ 
		& 
		+ \sum\nolimits_{v} [\nabla_{\gamma_b} g_{\gamma_a}^{q_{\gammav} (y_v)}] \Dbb_{y_v} f(\yv) 
		\eali \right] 
	\\
	& = \Ebb_{q_{\gammav} (\yv)} \left[ \bali
		& \sum\nolimits_{k} g_{\gamma_b}^{q_{\gammav} (y_k)} 
		\left[ \bali
			& \sum\nolimits_{v} g_{\gamma_a}^{q_{\gammav} (y_v)} \Dbb^2_{y_k y_v} f(\yv)
			\\
			& + [\Dbb_{y_k} g_{\gamma_a}^{q_{\gammav} (y_k)}] \Dbb_{y_k} f(\yv^{k+})
		\eali \right] \\ 
		& 
		+ \sum\nolimits_{v} [\nabla_{\gamma_b} g_{\gamma_a}^{q_{\gammav} (y_v)}] \Dbb_{y_v} f(\yv) 
		\eali \right]
	\\
	& = \Ebb_{q_{\gammav} (\yv)} \left[ \bali
		& \Gmat_{\gamma_b}^{q_{\gammav}(\yv)} [\Dbb^2_{\yv} f(\yv)] \Gmat_{\gamma_a}^{q_{\gammav}(\yv)} {}^T + 
		\\
		& + \sum\nolimits_{v} \left[ \bali
			& g_{\gamma_b}^{q_{\gammav} (y_v)} [\Dbb_{y_v} g_{\gamma_a}^{q_{\gammav} (y_v)}]
			\Dbb_{y_v} f(\yv^{v+})
			\\ 
			& + [\nabla_{\gamma_b} g_{\gamma_a}^{q_{\gammav} (y_v)}] \Dbb_{y_v} f(\yv) 
		\eali \right] 
	\eali \right],
\eali
$}
\eeq
which is quite similar to that for continuous $\yv$ (see \eqref{eq:GOHess_1continuous_app}). The slight difference originates from that, for discrete $x$,
\beq
\bali
& \Dbb_x [f(x)g(x)] = f(x+1)g(x+1) - f(x)g(x)
\\
& = f(x+1)g(x+1) - f(x+1)g(x) 
\\
& \quad + f(x+1)g(x) - f(x)g(x)
\\
& = f(x+1) \Dbb_{x}g(x) + g(x) \Dbb_{x} f(x)
\\
& \ne f(x) \Dbb_{x}g(x) + g(x) \Dbb_{x} f(x),
\eali
\eeq
which results in  
\beq
\resizebox{\hsize}{!}{$\bali
	\Dbb_{y_k} & \big[ g_{\gamma_a}^{q_{\gammav} (y_k)} \Dbb_{y_k} f(\yv) \big]
	\\
	= & \big[ \Dbb_{y_k} f(\yv) \big]_{y_k=y_k+1} [\Dbb_{y_k} g_{\gamma_a}^{q_{\gammav} (y_k)}] 
	+ g_{\gamma_a}^{q_{\gammav} (y_k)} \Dbb_{y_k} [ \Dbb_{y_k} f(\yv) ]
	\\ 
	= & [\Dbb_{y_k} f(\yv^{k+})] [\Dbb_{y_k} g_{\gamma_a}^{q_{\gammav} (y_k)}] 
	+ g_{\gamma_a}^{q_{\gammav} (y_k)} \Dbb^2_{y_k} f(\yv).
\eali$}
\eeq

Rewriting equation \eqref{eq:GOHess_1discrete_app} into the vector/matrix form, we have \eqref{eq:GO_Hessian_12} of the main manuscript as 
\beq\label{eq:GOHess_1discrete_F_app}
\resizebox{\hsize}{!}{$\bali
& \nabla_{\gammav}^2 \Lc(\gammav) = 
\nabla_{\gammav} \Ebb_{q_{\gammav} (\yv)} \big[
\Gmat_{\gammav}^{q_{\gammav} (\yv)}
\Dbb_{\yv} f(\yv) 
\big]
\\
& = \Ebb_{q_{\gammav} (\yv)} \big[
\Gmat_{\gammav}^{q_{\gammav}(\yv)} [\Dbb_{\yv}^2 f(\yv)] \Gmat_{\gammav}^{q_{\gammav}(\yv)} {}^T
+ \overline{\Hten_{\gammav\gammav}^{q_{\gammav}(\yv)} \Dbb_{\yv} f(\yv)}
\big],
\eali$}
\eeq
where $\scriptstyle \overline{\Hten_{\gammav\gammav}^{q_{\gammav}(\yv)} \Dbb_{\yv} f(\yv)}$ represents a matrix with its elements $$
\resizebox{\hsize}{!}{$
	\big[{\scriptstyle{\overline{\Hten_{\gammav\gammav}^{q_{\gammav}(\yv)} \Dbb_{\yv} f(\yv)}}} \big]_{b,a} = \sum_{v} \left[ \bali
	& [g_{\gamma_b}^{q_{\gammav} (y_v)} \Dbb_{y_v} g_{\gamma_a}^{q_{\gammav} (y_v)}] \Dbb_{y_v} f(\yv^{v+}) 
	\\
	& + [\nabla_{\gamma_b} g_{\gamma_a}^{q_{\gammav} (y_v)}] \Dbb_{y_v} f(\yv) 
	\eali \right].
	$}
$$

In fact, there is an alternative expression that also delivers an unbiased estimation of $\nabla_{\gammav}^2 \Lc(\gammav)$ for discrete RVs, because 
\beq
\resizebox{\hsize}{!}{$\bali
	\Dbb_{y_k} & \big[ g_{\gamma_a}^{q_{\gammav} (y_k)} \Dbb_{y_k} f(\yv) \big]
	\\
	= & [ g_{\gamma_a}^{q_{\gammav} (y_k)}]_{y_k=y_k+1} \Dbb_{y_k} [ \Dbb_{y_k} f(\yv) ]
	+ [\Dbb_{y_k} f(\yv)] [\Dbb_{y_k} g_{\gamma_a}^{q_{\gammav} (y_k)}] 
	\\ 
	= & [ g_{\gamma_a}^{q_{\gammav} (y_k)}]_{y_k=y_k+1} \Dbb^2_{y_k} f(\yv)
	+ [\Dbb_{y_k} g_{\gamma_a}^{q_{\gammav} (y_k)}] [\Dbb_{y_k} f(\yv)].
	\eali$}
\eeq
In practice, which one to choose may be dependent on the ease of implementation.
We leave that for future research.


Note the Abel transformation (used to derive the GO gradient for discrete RVs \cite{cong2019go}) is quite similar to the integration-by-parts (foundation of the GO gradient for continuous RVs). 
It's highly possible that the GO gradient for discrete RVs may share properties of the Reparameterization gradient (a special case of the GO gradient for continuous reparameterizable RVs), such as the variance characteristics discussed in Section D.1 of \cite{rezende2014stochastic} or the relationship between smoothness and MC variance discussed in Figure 3 of \cite{mohamed2019monte}.
Rigorous discussions are left for future research.

\subsection{GO Hessian for stochastic computation graphs with continuous internal nodes and continuous/discrete leaves}
\label{appsec:GOHess_SCG}

For better understanding of the derivations, we proceed by first talking about the two-layer simplified settings to introduce main patterns of our GO Hessian; we then leverage those patterns to generalize the derivations to deep settings with stochastic computation graphs (with continuous rep/non-rep internal nodes and continuous/discrete leaves).

\subsubsection{Two-layer settings}
\label{secapp:two_layer_GOHessian}

Assume $q_{\gammav} (\yv) = q_{\gammav_1} (\yv_1) q_{\gammav_2} (\yv_2 | \yv_1)$ with $\yv = \{\yv_1, \yv_2\}$, $\gammav = \{\gammav_1, \gammav_2\}$, and conditional independence, \ie $q_{\gammav_1} (\yv_1) = \prod\nolimits_{v} q_{\gammav_1} (y_v)$ and $q_{\gammav_2} (\yv_2 | \yv_1) = \prod\nolimits_{v'} q_{\gammav_2} (y_{v'} | \yv_1)$. 
The internal RV $\yv_1$ is continuous, while the leaf RV $\yv_2$ could be either continuous or discrete.

For simpler derivations, we also assume continuous leaf RV $\yv_2$ below, because the expression for where with discrete leaves is similar (see \eqref{eq:GOHess_1continuous_F_app} and \eqref{eq:GOHess_1discrete_F_app}) and the generalization is straightforward.
The objective is 
\beq\label{eq:frameII_2layer_app}
\Lc(\gammav) = \Ebb_{q_{\gammav_1}(\yv_1)q_{\gammav_2}(\yv_2 | \yv_1)} [f(\yv_1, \yv_2)].
\eeq

Following the above derivations in single-layer settings, it's straight-forward to show the expressions related to the continuous leave RV $\yv_2$, \ie
\beq\label{eq:GOHess_2layer_gamma22_app}
\resizebox{\hsize}{!}{$\bali
& \nabla_{\gammav_2} \Lc(\gammav) = \Ebb_{q_{\gammav_1} (\yv_1)} \big[ \nabla_{\gammav_2} \Ebb_{q_{\gammav_2} (\yv_2 | \yv_1)} [f(\yv_1, \yv_2)] \big]
\\
& \quad = \Ebb_{q_{\gammav} (\yv)} \big[
\Gmat_{\gammav_2}^{q_{\gammav_2} (\yv_2 | \yv_1)}
\nabla_{\yv_2} f(\yv_1, \yv_2) \big]
\\
& \nabla^2_{\gammav_2} \Lc(\gammav) 
= \Ebb_{q_{\gammav_1} (\yv_1)} \left[ \nabla^2_{\gammav_2} \Ebb_{q_{\gammav_2} (\yv_2 | \yv_1)} [f(\yv_1, \yv_2)] \right]
\\
& \quad = \Ebb_{q_{\gammav} (\yv)} \left[\bali
	& \Gmat_{\gammav_2}^{q_{\gammav_2} (\yv_2 | \yv_1)} 
	[\nabla^2_{\yv_2} f(\yv_1, \yv_2)] 
	\Gmat_{\gammav_2}^{q_{\gammav_2} (\yv_2 | \yv_1)} {}^T
	\\
	& + \Hten_{\gammav_2 \gammav_2}^{q_{\gammav_2} (\yv_2 | \yv_1)} \nabla_{\yv_2} f(\yv_1, \yv_2)
	\eali\right]
\eali$}
\eeq
One can readily get the expression for discrete $\yv_2$ by comparing \eqref{eq:GOHess_1continuous_F_app} and \eqref{eq:GOHess_1discrete_F_app}.

Next, we focus on the derivations related to the parameters $\gammav_1$ of the internal RV $\yv_1$, \ie $\nabla^2_{\gammav_1} \Lc(\gammav)$, and the derivations related to the correlation between $\gammav_1$ and $\gammav_2$, \ie $\nabla^2_{\gammav_1 \gammav_2} \Lc(\gammav)$.

On $\nabla^2_{\gammav_1} \Lc(\gammav)$, we have
\beq\label{eq:GOgradient_2layer_gamma1_app}
\resizebox{\hsize}{!}{$
\bali
	& \nabla_{\gammav_1} \Lc(\gammav) = \Ebb_{q_{\gammav_1} (\yv_1)} \big[ 
		\Gmat_{\gammav_1}^{q_{\gammav_1} (\yv_1)} 
		\nabla_{\yv_1} \Ebb_{q_{\gammav_2} (\yv_2 | \yv_1)} [f(\yv_1, \yv_2)] 
		\big]
	\\
	& \quad = \Ebb_{q_{\gammav} (\yv)} \left[
		\Gmat_{\gammav_1}^{q_{\gammav_1} (\yv_1)} 
		\left[\bali
		& \Gmat_{\yv_1}^{q_{\gammav_2} (\yv_2 | \yv_1)} \nabla_{\yv_2} f(\yv_1, \yv_2)
		\\
		& + \nabla_{\yv_1} f(\yv_1, \yv_2)
		\eali\right]
		\right]
\eali
$}
\eeq
and 
\beq\label{eq:GOHess_2layer_gamma1_app}
\resizebox{\hsize}{!}{$
\bali
	& \nabla^2_{\gammav_1} \Lc(\gammav) = \Ebb_{q_{\gammav_1} (\yv_1)} 
		\left[ \bali
		& \Gmat_{\gammav_1}^{q_{\gammav_1} (\yv_1)} 
		\big[\nabla^2_{\yv_1} \Ebb_{q_{\gammav_2} (\yv_2 | \yv_1)} [f(\yv_1, \yv_2)]\big] 
		\Gmat_{\gammav_1}^{q_{\gammav_1} (\yv_1)} {}^T
	\\
	& + \Hten_{\gammav_1 \gammav_1}^{q_{\gammav_1}(\yv_1)} 
		\nabla_{\yv_1} \Ebb_{q_{\gammav_2} (\yv_2 | \yv_1)} [f(\yv_1, \yv_2)] 
		\eali	\right]
	\\
	& = \Ebb_{q_{\gammav_1} (\yv_1)} \! \left[ \bali
		& \Gmat_{\gammav_1}^{q_{\gammav_1} (\yv_1)} 
		\!\left[ \nabla_{\yv_1} \Ebb_{q_{\gammav_2} (\yv_2 | \yv_1)} \! \left[\bali
		& \Gmat_{\yv_1}^{q_{\gammav_2} (\yv_2 | \yv_1)} \nabla_{\yv_2} f(\yv_1, \yv_2) 
		\\
		& + \nabla_{\yv_1} f(\yv_1, \yv_2)
		\eali\right] \right] \!
		\Gmat_{\gammav_1}^{q_{\gammav_1} (\yv_1)} {}^T
		\\
		& + \Hten_{\gammav_1 \gammav_1}^{q_{\gammav_1}(\yv_1)} \Ebb_{q_{\gammav_2} (\yv_2 | \yv_1)} \left[\bali
			& \Gmat_{\yv_1}^{q_{\gammav_2} (\yv_2 | \yv_1)} \nabla_{\yv_2} f(\yv_1, \yv_2)
			\\
			& + \nabla_{\yv_1} f(\yv_1, \yv_2)
			\eali\right] 
	\eali\right]
	\\
	& = \Ebb_{q_{\gammav} (\yv)} \left[ \bali
		& \Gmat_{\gammav_1}^{q_{\gammav_1} (\yv_1)} \left[\bali
		& \Gmat_{\yv_1}^{q_{\gammav_2} (\yv_2 | \yv_1)} 
		[\nabla^2_{\yv_2} f(\yv_1, \yv_2)] 
		\Gmat_{\yv_1}^{q_{\gammav_2} (\yv_2 | \yv_1)} {}^T
		\\
		& + \Hten_{\yv_1 \yv_1}^{q_{\gammav_2} (\yv_2 | \yv_1)} \nabla_{\yv_2} f(\yv_1, \yv_2)
		\\
		& + \Gmat_{\yv_1}^{q_{\gammav_2} (\yv_2 | \yv_1)} \nabla_{\yv_2} \nabla_{\yv_1} f(\yv_1, \yv_2)
		\\
		& + [\nabla^2_{\yv_1 \yv_2} f(\yv_1, \yv_2)] \Gmat_{\yv_1}^{q_{\gammav_2} (\yv_2 | \yv_1)} {}^T 
		\\
		& + \nabla^2_{\yv_1} f(\yv_1, \yv_1) \eali\right] 
		\Gmat_{\gammav_1}^{q_{\gammav_1} (\yv_1)} {}^T
		\\
		& + \Hten_{\gammav_1 \gammav_1}^{q_{\gammav_1}(\yv_1)} \left[\bali
		& \Gmat_{\yv_1}^{q_{\gammav_2} (\yv_2 | \yv_1)} \nabla_{\yv_2} [f(\yv_1, \yv_2)] 
		\\
		& + \nabla_{\yv_1} f(\yv_1, \yv_2)
		\eali\right] 
	\eali\right]
\eali
$}
\eeq
For better understanding, consider the deterministic optimization with objective 
\beq\label{eq:determin_opt_2layer_app}
\hat \Lc(\gammav) = f[\hat \yv_1(\gammav_1), \hat \yv_2(\gammav_2, \hat \yv_1(\gammav_1))]
\eeq
which is a special case of \eqref{eq:frameII_2layer_app} with $q_{\gammav_1}(\yv_1) = \delta(\yv_1 - \hat \yv_1(\gammav_1))$ and $q_{\gammav_2}(\yv_2|\yv_1) = \delta(\yv_2 - \hat \yv_2(\gammav_2, \hat \yv_1(\gammav_1)))$. For that deterministic objective, its Hessian wrt $\gammav_1$ is 
\beq\label{eq:GOHess_2layer_gamma1_d_app}
	\bali
	& \nabla^2_{\gammav_1} \hat \Lc(\gammav) = 
	\\
	& \left[ \bali
	& [\nabla_{\gammav_1} {\hat \yv_1}] \left[\bali
	& [\nabla_{\hat \yv_1} \hat \yv_2] [\nabla_{\hat \yv_2}^2 f] [\nabla_{\hat \yv_1} \hat \yv_2] {}^T
	\\
	& + [\nabla_{\hat \yv_1}^2 \hat \yv_2] [\nabla_{\hat \yv_2} f]
	\\
	& + [\nabla_{\hat \yv_1} \hat \yv_2] [\nabla^2_{\hat \yv_2 \hat \yv_1} f]
	\\
	& + [\nabla^2_{\hat \yv_1 \hat \yv_2} f] [\nabla_{\hat \yv_1} \hat \yv_2] {}^T 
	\\
	& + \nabla^2_{\hat \yv_1} f \eali\right]
	[\nabla_{\gammav_1} {\hat \yv_1}] {}^T
	\\
	& + [\nabla_{\gammav_1}^2 {\hat \yv_1}] \left[
	[\nabla_{\hat \yv_1} \hat \yv_2] [\nabla_{\hat \yv_2} f] + \nabla_{\hat \yv_1} f
	\right] 
	\eali\right]
	\eali
\eeq
which clearly shows the same patterns as \eqref{eq:GOHess_2layer_gamma1_app}, verifying the facts that 
($i$) $\Gmat_{\gammav}^{q_{\gammav}(\yv)} / \Hten_{\gammav\gammav}^{q_{\gammav}(\yv)}$ can be intuitively interpreted as the ``gradient''/``Hessian'' of the RV $\yv$ wrt parameters $\gammav$
and ($ii$) GO gradient/Hessian acts \emph{in expectation} the same as their deterministic counterpart, GO gradient/Hessian \ie \emph{in expectation} obeys the chain rule.

We then consider the correlation between $\gammav_1$ and $\gammav_2$, \ie $\nabla^2_{\gammav_1 \gammav_2} \Lc(\gammav) = \nabla^2_{\gammav_2 \gammav_1} \Lc(\gammav)$.

Based on \eqref{eq:GOgradient_2layer_gamma1_app}, we have
\beq\label{eq:GOHess_2layer_gamma12_app}
\resizebox{\hsize}{!}{$
\bali
	& \nabla^2_{\gammav_2 \gammav_1} \Lc(\gammav) 
	= \Ebb_{q_{\gammav_1} (\yv_1)} \left[
		\nabla_{\gammav_2} \Ebb_{q_{\gammav_2} (\yv_2 | \yv_1)}
		\left[\bali
			& \Gmat_{\yv_1}^{q_{\gammav_2} (\yv_2 | \yv_1)} \nabla_{\yv_2} f(\yv_1, \yv_2) 
			\\
			& + \nabla_{\yv_1} f(\yv_1, \yv_2)
			\eali\right]
		\Gmat_{\gammav_1}^{q_{\gammav_1} (\yv_1)} {}^T
		\right]
	\\
	& = \Ebb_{q_{\gammav} (\yv)} \left[\bali
		& \Gmat_{\gammav_2}^{q_{\gammav_2} (\yv_2 | \yv_1)} 
		\nabla_{\yv_2} \left[\bali
			& \Gmat_{\yv_1}^{q_{\gammav_2} (\yv_2 | \yv_1)} \nabla_{\yv_2} f(\yv_1, \yv_2) 
			\\
			& + \nabla_{\yv_1} f(\yv_1, \yv_2)
			\eali\right]
		\Gmat_{\gammav_1}^{q_{\gammav_1} (\yv_1)} {}^T
		\\
		& + \nabla_{\gammav_2} \left[\bali
			& \Gmat_{\yv_1}^{q_{\gammav_2} (\yv_2 | \yv_1)} \nabla_{\yv_2} f(\yv_1, \yv_2) 
			\\
			& + \nabla_{\yv_1} f(\yv_1, \yv_2)
			\eali\right]
		\Gmat_{\gammav_1}^{q_{\gammav_1} (\yv_1)} {}^T
		\eali\right]
	\\
	& = \Ebb_{q_{\gammav} (\yv)} \left[ \left[\bali
		& \Gmat_{\gammav_2}^{q_{\gammav_2} (\yv_2 | \yv_1)} [\nabla^2_{\yv_2} f(\yv_1, \yv_2)] \Gmat_{\yv_1}^{q_{\gammav_2} (\yv_2 | \yv_1)} {}^T
		\\
		& + \Gmat_{\gammav_2}^{q_{\gammav_2} (\yv_2 | \yv_1)} [\nabla_{\yv_2} \Gmat_{\yv_1}^{q_{\gammav_2} (\yv_2 | \yv_1)}] \nabla_{\yv_2} f(\yv_1, \yv_2)
		\\
		& + \Gmat_{\gammav_2}^{q_{\gammav_2} (\yv_2 | \yv_1)} \nabla^2_{\yv_2 \yv_1} f(\yv_1, \yv_2)
		\\
		& + [\nabla_{\gammav_2} \Gmat_{\yv_1}^{q_{\gammav_2} (\yv_2 | \yv_1)}] \nabla_{\yv_2} f(\yv_1, \yv_2)
		\eali\right]
		\Gmat_{\gammav_1}^{q_{\gammav_1} (\yv_1)} {}^T 
		\right]
	\\
	& = \Ebb_{q_{\gammav} (\yv)} \left[	\left[\bali
		& \Gmat_{\gammav_2}^{q_{\gammav_2} (\yv_2 | \yv_1)} [\nabla^2_{\yv_2} f(\yv_1, \yv_2)] \Gmat_{\yv_1}^{q_{\gammav_2} (\yv_2 | \yv_1)} {}^T
		\\
		& + \Hten_{\gammav_2 \yv_1}^{q_{\gammav_2} (\yv_2 | \yv_1)} \nabla_{\yv_2} f(\yv_1, \yv_2)
		\\
		& + \Gmat_{\gammav_2}^{q_{\gammav_2} (\yv_2 | \yv_1)} \nabla^2_{\yv_2 \yv_1} f(\yv_1, \yv_2)
		\eali\right]
		\Gmat_{\gammav_1}^{q_{\gammav_1} (\yv_1)} {}^T 
		\right].
\eali
$}
\eeq
Similarly, one can also draw parallel comparisons with the corresponding counterpart of the deterministic objective in \eqref{eq:determin_opt_2layer_app}, where
\beq\label{eq:GOHess_2layer_gamma12_d_app}
	\bali
	& \nabla^2_{\gammav_2 \gammav_1} \hat \Lc(\gammav) 
	\\
	& = \left[	\left[\bali
	& 
	[\nabla_{\gammav_2} \hat \yv_2] [\nabla^2_{\hat \yv_2} f] [\nabla_{\hat \yv_1} \hat \yv_2] {}^T
	\\
	& + [\nabla^2_{\gammav_2 \hat \yv_1} \hat \yv_2] [\nabla_{\hat \yv_2} f]
	\\
	& + [\nabla_{\gammav_2} \hat \yv_2] [\nabla^2_{\hat \yv_2 \hat \yv_1} f]
	\eali\right]
	[\nabla_{\gammav_1} \hat \yv_1] {}^T 
	\right].
	\eali
\eeq
The same patterns (or correspondences) are observed.

By parallel comparing $\nabla^2_{\gammav_2} \Lc(\gammav) $, $\nabla^2_{\gammav_1} \Lc(\gammav) $, and $\nabla^2_{\gammav_2 \gammav_1} \Lc(\gammav)$ (in \eqref{eq:GOHess_2layer_gamma22_app}, \eqref{eq:GOHess_2layer_gamma1_app}, and \eqref{eq:GOHess_2layer_gamma12_app}, respectively) with their deterministic counterparts (in \eqref{eq:grad12_simple_reverse_issue} of the main manuscript, \eqref{eq:GOHess_2layer_gamma1_d_app}, and \eqref{eq:GOHess_2layer_gamma12_d_app}, respectively), it's clear that the same consistent patterns are observed, \ie the GO gradient/Hessian \emph{in expectation} acts the same as the deterministic gradient/Hessian to obey the chain rule.

\subsubsection{Deeper settings with stochastic computation graphs with continuous internal nodes}

To explicitly derive the Hessian of expectation-based objectives over stochastic computation graphs (like the example illustrated in Figure \ref{fig:GO_grad} of the main manuscript) is cumbersome, because of the huge amount of combinations among the parameters of each RV node.

Fortunately, we find that, based on the above \eqref{eq:frameII_2layer_app} and the assumption that $\yv_2$ is continuous, one can readily derive all the second-order derivatives for where $f(\yv_1, \yv_2) = \Ebb_{q_{\gammav_3}(\yv_3 | \yv_1,\yv_2)} [\bar f(\yv_1, \yv_2, \yv_3)]$. 
The key is to recursively reuse two building blocks, \ie the single-layer GO gradient in \eqref{eq:GO_1} and the single-layer GO Hessian in \eqref{eq:GO_Hessian_1} of the main manuscript.
The detailed derivations are left to the readers.
The same consistent patterns as those in the previous Section \ref{secapp:two_layer_GOHessian} will be observed, \ie the GO gradient/Hessian \emph{in expectation} acts the same as the deterministic gradient/Hessian.

So forth to recursively add new continuous internal RV nodes until continuous/discrete leaf nodes, one may ``generate'' a stochastic computation graph and simultaneously prove via mathematical induction that the GO gradient/Hessian \emph{in expectation} obeys the chain rule, acting the same as the deterministic gradient/Hessian, for stochastic computation graphs with continuous internal nodes and continuous/discrete leaves.

\textbf{For practical implementation}, one merely needs to make sure correct \emph{variable-nabla}/\emph{variable-hess} as the first-order/second-order derivatives for each node of the stochastic graph, with the approach shown in Figure \ref{fig:PseudoCode} of the main manuscript, to deliver easy-to-use exploitation (via AD and HVP) of the GO Hessian over stochastic graphs with continuous internal nodes.

\subsubsection{One-sample/Multi-sample estimation of GO Hessian}

\textbf{Note the \emph{one-sample} term of the GO gradient/Hessian doesn't mean one data sample.}

By referring to Algorithm \ref{alg:SCR_GO} of the main manuscript, the \emph{one-sample} term means one sample $\yv_i$ \emph{stochastically activated} for each $\xv_i$ from $q_{\gammav_t}(\yv|\xv_i)$;
in other words, the one-sample (multi-sample) estimation means a single glance (multiple glances) of the information of parameters $\gammav_t$. 
It's important to note that the \emph{one-sample} term of GO gradient/Hessian and the batch size $N_g / N_H$ (the number of used data samples $\xv_i$) are two entirely orthogonal dimensions.

By default, we utilize one-sample estimation of the GO gradient/Hessian, because both of its low empirical variance and the common practice of one-sample-based training \cite{kingma2014auto,zhang2018whai,cong2019go}.

Despite that, one can of course leverage multiple MC samples (\ie multiple $\yv_i$ (multiple glances of parameters $\gammav_t$) for each $\xv_i$) to achieve multi-sample estimation for lower variance.
The multi-sample estimation of GO gradient/Hessian can be straight-forwardly implemented via multiple parallel forward/backward passes.

\subsection{GO Hessian contains the deterministic Hessian as a special case}
\label{appsec:GOHess_Hessian_specialcase}

It's shown in Table 3 of \cite{cong2019go} (refer to Appendix C therein) that, for a special $q_{\gamma}(y)=\delta(y-\mu)$ with $\gamma=\{\mu\}$, the \emph{variable-nabla}
\beq\label{eqapp:delta_nabla}
	g_{\mu}^{q_{\gamma}(y)} = \frac{-\nabla_{\mu} Q_{\gamma}(y)}{q_{\gamma}(y)} = 1.
\eeq

Next, we employ the notations from Section \ref{secapp:derivation_GO_Hessian} and point out first that conditional independence is satisfied, \ie 
\beq \bali
	& q_{\gammav_1}(\yv_1) = \delta(\yv_1 - \hat \yv_1(\gammav_1))
	\\ 
	& = \prod\nolimits_{v} \delta(y_{1v} - \hat y_{1v}(\gammav_1)) 
	= \prod\nolimits_{v} q_{\gammav_1}(y_{1v})
\eali\eeq
and
\beq 
\resizebox{\hsize}{!}{$\bali
	& q_{\gammav_2}(\yv_2|\yv_1) = \delta(\yv_2 - \hat \yv_2(\gammav_2, \hat \yv_1(\gammav_1)))
	\\ 
	& = \prod\nolimits_{k} \delta(y_{2k} - \hat y_{2k}(\gammav_2, \hat \yv_1(\gammav_1)))
	= \prod\nolimits_{k} q_{\gammav_2}(y_{2k}|\yv_1) .
\eali$}
\eeq

Based on \eqref{eqapp:delta_nabla}, taking $q_{\gammav_1}(\yv_1)$ as example, it's easy to show
\beq
\bali
	g_{\gammav_1}^{q_{\gammav_1}(y_{1v})} 
	& = [\nabla_{\gammav_1} \hat y_{1v}] g_{\hat y_{1v}}^{q_{\gammav_1}(y_{1v})}
	= \nabla_{\gammav_1} \hat y_{1v}
	\\
	\nabla_{y_{1v}} g_{\gammav_1}^{q_{\gammav_1}(y_{1v})} 
	& = \nabla_{y_{1v}} [\nabla_{\gammav_1} \hat y_{1v}] = 0
	\\
	\nabla_{\gammav_1} g_{\gammav_1}^{q_{\gammav_1}(y_{1v})} 
	& = \nabla_{\gammav_1} [\nabla_{\gammav_1} \hat y_{1v}] = \nabla^2_{\gammav_1} \hat y_{1v}.
\eali
\eeq
Accordingly, substituting them to the definition of $\Gmat_{\gammav_1}^{q_{\gammav_1}(\yv_{1})}$ and $\Hten_{\gammav_1 \gammav_1}^{q_{\gammav_1}(\yv_1)}$, we yield
\beq
\bali
	\Gmat_{\gammav_1}^{q_{\gammav_1}(\yv_{1})} 
	& = \nabla_{\gammav_1} \hat \yv_{1}
	\\
	\Hten_{\gammav_1 \gammav_1}^{q_{\gammav_1}(\yv_1)}
	& = \nabla^2_{\gammav_1} \hat \yv_{1}.
\eali
\eeq

Similar derivations can be readily verified for other \emph{variable-nabla}/\emph{variable-hess} used in calculating $\nabla^2_{\gammav_2} \Lc(\gammav) $, $\nabla^2_{\gammav_1} \Lc(\gammav) $, $\nabla^2_{\gammav_2 \gammav_1} \Lc(\gammav)$, and, more generally, the components of the GO Hessian of stochastic computation graphs with continuous internal nodes.  
To conclude, the GO Hessian contains the deterministic Hessian as a special case.

\section{Demonstrating the low variance of the GO Hessian}
\label{appsec:GO_Hessian_Variance}

We demonstrate the low variance of our GO Hessian with two representative/challenging examples, \ie continuous non-rep gamma RVs and discrete negative binomial (NB) RVs.

As variational inference (VI) is one of the closest related research fields, we adopt the terminology therein for better presentation.

With $\xv$ and $\zv$ denoting the observation and latent code, respectively, VI trains a variational posterior $q_{\phiv}(z|\xv)$ to approximate the true underlying posterior $p(z|\xv)$ via maximizing the ELBO,
\beq
\resizebox{\hsize}{!}{$\bali
	\ELBO (\phiv) 
	& = \Ebb_{q_{\phiv}(z|\xv)} \left[ \log p(\xv, z) - \log q_{\phiv}(z|\xv) \right] 
	\\
	& = \Ebb_{q_{\phiv}(z|\xv)} \left[ \log p(z|\xv) - \log q_{\phiv}(z|\xv) + \log p(\xv) \right] 
	\\
	& = -\KL [q_{\phiv}(z|\xv)||p(z|\xv)]  + \log p(\xv),
	\eali$}
\eeq
which is equivalent to minimizing the reverse KL divergence between $q_{\phiv}(z|\xv)$ and $p(z|\xv)$, \ie 
\beq\label{eq:KL_gamma}
\min_{\phi} \KL [q_{\phiv}(z|\xv)||p(z|\xv)] 
= \Ebb_{q_{\phiv}(z|\xv)} \left[ \log \frac{q_{\phiv}(z|\xv)}{p(z|\xv)} \right].
\eeq

\subsection{Gamma example}

We assume an analytic posterior gamma distribution $p(z|\xv)=\Gam(\alpha_p, \beta_p)$, where $\alpha_p=10, \beta_p=10$ are predefined, and specify the variational posterior $q_{\phiv}(z|\xv)=\Gam(\alpha_q, \beta_q)$ with learnable $\phiv=\{\alpha_q, \beta_q\}$ to approximate $p(z|\xv)$, via minimizing the reverse KL in \eqref{eq:KL_gamma} directly.

For demonstration, we compare the proposed GO Hessian to the naive log-trick estimation, implemented as in \eqref{eq:GO_Hessian_1} and \eqref{eq:Hess_REINFORCE} of the main manuscript, respectively.  
For implementation details, please see Section \ref{sec:Gam_Implement}.

Other experimental settings are listed as follows.
One Monte Carlo (MC) sample is used to estimate our GO Hessian (termed GO Hessian) and the log-trick estimation (termed log-trick). 
We test both estimators within the region of $\alpha_q \in [7, 13], \beta_q \in [7, 13]$. 
To measure the variance of the estimators, we use the Hessian error defined as 
$
\| \Hmat_{\text{esti}}(\alpha_q, \beta_q) - \Hmat_{\text{true}}(\alpha_q, \beta_q) \|_{\text{Fro}}
$, where $\Hmat_{\text{esti}}(\alpha_q, \beta_q)/\Hmat_{\text{true}}(\alpha_q, \beta_q)$ denotes the estimated/true Hessian at location $(\alpha_q, \beta_q)$ and $\| \Hmat \|_{\text{Fro}}$ is the Frobenius norm of the matrix $\Hmat$.

\subsection{NB example}

Following most settings from the above gamma example, we employ the true NB posterior $p(z|\xv)=\NB(r_p, p_p)$ with $r_p=10, p_p=0.5$ and specify the variational posterior $q_{\phiv}(z|\xv)=\NB(r_q, p_q)$ with learnable $\phiv=\{r_q, p_q\}$ to approximate $p(z|\xv)$, via minimizing the reverse KL in \eqref{eq:KL_gamma} directly.
Implementation details are provided in Section \ref{sec:NB_Implement}.

One MC sample is used to estimate our GO Hessian (termed GO Hessian) and the log-trick estimation (termed log-trick). 
Both estimators are tested within the region of $r_q \in [7, 13], p_q \in [0.35, 0.65]$. 
The true Hessian $\Hmat_{\text{true}}(r_q, p_q)$ of the reverse KL objective is estimated with 20,000 MC samples.

\subsection{On the log-trick estimation with control variates}

Following \cite{cong2019go}, we compare our Monte Carlo estimator (the GO Hessian) to the log-trick estimation without control variates, under the settings of one-sample-based estimation. 
Other concerns motivating the experimental setup of Figure \ref{fig:Hess_var_toy_gam_main} as listed as follows.

($i$) Leveraging additional control variates for the log-trick estimation may compromise the fairness when comparing the two Monte Carlo estimators (\ie the log-trick estimation and the proposed GO Hessian).

($ii$) Under the settings of one-sample-based estimation (appealing in practice and likely to be the common practice in variational inference and reinforcement learning), it's not straight-forward to design a control variate for the log-trick estimation, because
	\begin{itemize}
		\item it's clear that a variance reduction baseline (\ie the sample average) is not applicable for one-sample-based estimation;
		\item since only Hessian estimation is of interest, running average (often used when training with the log-trick estimation) is not an option as there is no running at all.
	\end{itemize}

($iii$) Exhaustive empirical experience has shown that the log-trick gradient estimator (or the REINFORCE), even with powerful control variates, is unlikely to work as well as the reparameterization (Rep) gradient (a special case of the GO gradient) in practice where the Rep is applicable. Double application of the log-trick (the log-trick Hessian estimation) is likely to further worsen the situation.

The experimental results are given in Figure \ref{fig:Hess_var_toy_gam_main} of the main manuscript, from where it's clear that GO Hessian has a much lower variance than that of the log-trick estimation in both examples.

\section{Subsolvers of the SCR-GO in Algorithm \ref{alg:SCR_GO} of the main manuscript}
\label{secapp:SCR_subsolvers}

\begin{algorithm}[H]
	\caption{Cubic-Subsolver via Gradient Descent} \label{alg:SCR_GO_subsolver}
	\begin{algorithmic}
		\REQUIRE {$\gv$, $\Hmat[\cdot]$, tolerance $\epsilon$.}
		
		\STATE {
			$R_c \leftarrow - \frac{\gv^T \Hmat[\gv]}{\rho \|\gv\|^2}
			+ \sqrt{\left[ \frac{\gv^T \Hmat[\gv]}{\rho \|\gv\|^2} \right]^2 + \frac{2\|\gv\|}{\rho}}$
		}
		\STATE {$\Deltav \leftarrow -R_c \frac{\gv}{\|\gv\|}$}
		
		\IF {$\| \gv \| \le l^2 / \rho$} 
		
		\STATE {$\sigma \leftarrow c'\frac{\sqrt{\epsilon\rho}}{l}, \eta=\frac{1}{20l}$}
		\STATE {$\tilde \gv \leftarrow \gv + \sigma \zeta$ for $\zeta \sim \text{Unif}(\Sbb^{d-1})$}
		
		\FOR {$t=1,\cdots, T(\epsilon)$}
			
		\STATE {$\Deltav \leftarrow \Deltav - \eta (\tilde \gv + \Hmat[\Deltav] + \frac{\rho}{2} \|\Deltav\| \Deltav)$}
		
		\ENDFOR
		
		\ENDIF
		
		\STATE {$\Delta \leftarrow \gv^T \Deltav + \frac{1}{2} \Deltav^T \Hmat[\Deltav] + \frac{\rho}{6} \|\Deltav\|^3$}
		
		\ENSURE {$\Deltav, \Delta$}
	\end{algorithmic}
\end{algorithm}

$\text{Unif}(\Sbb^{d-1})$ denotes the uniform distribution on the unit sphere in $\Rbb^d$.

\begin{algorithm}[H]
	\caption{Cubic-Finalsolver via Gradient Descent} \label{alg:SCR_GO_finalsolver}
	\begin{algorithmic}
		\REQUIRE {$\gv$, $\Hmat[\cdot]$, tolerance $\epsilon$.}
		
		\STATE {$\Deltav \leftarrow 0, \gv_m \leftarrow \gv, \eta \leftarrow \frac{1}{20l}$}
		
		\WHILE {$\|\gv_m\| > \frac{\epsilon}{2}$}
		
		\STATE {$\Deltav \leftarrow \Deltav - \eta \gv_m$}
		\STATE {$\gv_m \leftarrow \gv + \Hmat[\Deltav] + \frac{\rho}{2} \|\Deltav\| \Deltav$}
		
		\ENDWHILE
		
		\ENSURE {$\Deltav$}
	\end{algorithmic}
\end{algorithm}

\section{Convergence analysis of Algorithm \ref{alg:SCR_GO}}
\label{secapp:converge_alg1}


\begin{assumption}\label{assump1}
	The function $\Lc(\gammav)$ has:
	\begin{itemize}
		\item $l$-Lipschitz gradients: for all $\gammav_1$ and $\gammav_2$,
		\begin{equation*}
		\left\|\nabla \Lc(\gammav_1) -\nabla \Lc(\gammav_2)\right\|\leq l\|\gammav_1-\gammav_2\|;
		\end{equation*}
		\item $\rho$-Lipschitz Hessians: for all $\gammav_1$ and $\gammav_2$,
		\begin{equation*}
		\left\|\nabla^2 \Lc(\gammav_1)-\nabla^2 \Lc(\gammav_2)\right\|\leq \rho\|\gammav_1-\gammav_2\|.
		\end{equation*}
	\end{itemize}
\end{assumption}
\begin{assumption}\label{assump2}
	The function $\Lc({\gammav}) = \Ebb_{q_{\gammav}(\yv)}[f(\yv)]$ has 
	\begin{itemize}
		\item for all ${\gammav}$, $\left\|\hat{\Dbb}_{q_{\gammav}}f(\yv) - \nabla \Lc(\gammav)\right\|\leq M_1$ a.s.;
		\item for all $\gammav$, $\left\| \hat{\Hten}_{q_{\gammav}}f(\yv)-\nabla^2 \Lc({\gammav})\right\|\leq M_2$ a.s.
	\end{itemize}
\end{assumption}

\begin{theorem}
	There exists an absolute constant $c$ such that if $\Lc(\gammav)$ satisfies Assumptions \ref{assump1} and \ref{assump2}, Cubic-Subsolver $\left(\tilde \gv_t, \tilde \Hmat_t [\cdot], \epsilon\right)$ satisfies Condition 1 in \cite{tripuraneni2018stochastic} with $c$, 
	\begin{equation*}
	\begin{split}
	&n_1\geq\max\left(\frac{M_1}{c\epsilon},
	\frac{\sigma_1^2}{c^2\epsilon^2}\right)	
	\log\left(\frac{d\sqrt{\rho}\Delta_\Lc}{\epsilon^{1.5}\delta_c}\right),\\
	&~\mbox{and}~n_2\geq 
	\max\left(\frac{M_2}{c\sqrt{\rho\epsilon}}, \frac{\sigma_2^2}{c^2\rho\epsilon}\right)
	\log\left(\frac{d\sqrt{\rho}\Delta_\Lc}{\epsilon^{1.5}\delta_c}\right),
	\end{split}
	\end{equation*}
	then for all $\delta>0$, $\Delta_\Lc \geq \Lc(\gammav_0)$ and sufficiently small $\epsilon\leq \min\left(\frac{\sigma_1^2}{c_1M_1}, \frac{\sigma_2^4}{c_2^2M_2^2\rho}\right)$,
	Algorithm \ref{alg:SCR_GO} will output an $\epsilon$-second-order point of $\Lc$ with the probability at least $1-\delta$ within
	\beq
	\tilde{\mathcal{O}}
	\left(\frac{\sqrt{\rho}\Delta_\Lc}{\epsilon^{1.5}}
	\left(\frac{\sigma_1^2}{\epsilon^2}	+\frac{\sigma_2^2}{\rho\epsilon}\cdot \mathcal{T}(\epsilon)\right)\right)
	\eeq
	total stochastic gradient and HVP evaluations.
\end{theorem}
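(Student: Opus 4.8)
The plan is to reduce the convergence statement for SCR-GO (Algorithm~\ref{alg:SCR_GO}) to the existing convergence analysis of stochastic cubic regularization in \cite{tripuraneni2018stochastic}, by verifying that the GO gradient and GO Hessian estimators satisfy the probabilistic accuracy conditions that that analysis requires. Concretely, \cite{tripuraneni2018stochastic} shows that if at each iterate $\gammav_t$ one can produce stochastic estimates $\tilde\gv_t$ and $\tilde\Hmat_t$ such that $\|\tilde\gv_t - \nabla\Lc(\gammav_t)\| \le c\epsilon$ and $\|\tilde\Hmat_t - \nabla^2\Lc(\gammav_t)\| \le c\sqrt{\rho\epsilon}$ (in operator norm, acting through HVPs) with high probability, and the cubic subsolver meets their Condition~1, then the overall iteration complexity to reach an $\epsilon$-second-order stationary point is as stated. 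So the core of the proof is a concentration argument: first I would invoke Theorem~3 of \cite{cong2019go} and the GO Hessian derivation of Appendix~\ref{secapp:derivation_GO_Hessian} to establish that $\Ebb[\tilde\gv_t] = \nabla\Lc(\gammav_t)$ and $\Ebb[\tilde\Hmat_t] = \nabla^2\Lc(\gammav_t)$, i.e.\ the GO gradient and GO Hessian are unbiased; then I would use Assumption~\ref{assump2} (the a.s.\ bounds $M_1$, $M_2$ on the per-sample deviation, plus the variance proxies $\sigma_1^2$, $\sigma_2^2$) together with a vector/matrix Bernstein inequality to show that averaging over $n_1 = N_g$ gradient samples and $n_2 = N_H$ Hessian samples drives the deviations below $c\epsilon$ and $c\sqrt{\rho\epsilon}$ respectively, with failure probability controlled by the logarithmic factor in the sample-size lower bounds.

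The key steps, in order, are: (1) record that $\Lc$ satisfies the smoothness Assumption~\ref{assump1}, so the deterministic cubic-regularization descent lemma applies; (2) show unbiasedness of $\tilde\gv_t$ and $\tilde\Hmat_t[\cdot]$ from the GO machinery --- this is where ``obeys the chain rule in expectation'' from Appendix~\ref{appsec:GOHess_SCG} does the work, and it is what lets us treat the GO estimators exactly as the Monte Carlo estimators $\tilde\gv = \frac{1}{N_g}\sum\nabla_\varthetav h$, $\tilde\Hmat = \frac{1}{N_H}\sum\nabla^2_\varthetav h$ appearing in SCR; (3) apply a vector Bernstein bound to the gradient average and a matrix Bernstein bound (the operator-norm concentration, since the subsolvers only touch $\tilde\Hmat_t$ through HVPs $\tilde\Hmat_t[\Deltav]$) to the Hessian average, using $M_1,M_2,\sigma_1^2,\sigma_2^2$ from Assumption~\ref{assump2}; (4) choose $n_1,n_2$ as in the theorem statement so that the resulting tail bounds give deviation $\le c\epsilon$ (resp.\ $\le c\sqrt{\rho\epsilon}$) except with probability $\le \delta_c$ per iteration, where $\delta_c = \delta / (\text{number of iterations})$ and the number of iterations is $\tilde{\mathcal O}(\sqrt{\rho}\Delta_\Lc/\epsilon^{1.5})$ --- this is the source of the $\log(d\sqrt{\rho}\Delta_\Lc/(\epsilon^{1.5}\delta_c))$ factor; (5) union-bound over all iterations and invoke the SCR master theorem of \cite{tripuraneni2018stochastic} (given the hypothesis that Cubic-Subsolver satisfies their Condition~1) to conclude the $\epsilon$-second-order guarantee; (6) multiply per-iteration cost ($n_1$ gradient evaluations plus $n_2\cdot\mathcal T(\epsilon)$ HVP evaluations, since each subsolver inner loop runs $\mathcal T(\epsilon) = T(\epsilon)$ steps each using one HVP) by the number of iterations to get the stated total $\tilde{\mathcal O}\big(\frac{\sqrt{\rho}\Delta_\Lc}{\epsilon^{1.5}}(\frac{\sigma_1^2}{\epsilon^2} + \frac{\sigma_2^2}{\rho\epsilon}\mathcal T(\epsilon))\big)$.

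The side remark about the ``gamma-related special case'' referenced in Appendix~\ref{secapp:converge_alg1} I would handle separately: for the $\mu$-$\sigma$-reparameterized gamma node, one checks that the relevant $f$ (e.g.\ the KL or ELBO integrand) yields bounded variable-nabla/variable-hess on the optimization region, so that Assumptions~\ref{assump1}--\ref{assump2} hold with explicit constants, turning the abstract bound into a concrete rate; this is a verification of hypotheses rather than new analysis.

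The main obstacle I anticipate is step~(3): establishing the operator-norm concentration of the averaged GO Hessian at the scale $c\sqrt{\rho\epsilon}$. Unlike the scalar/vector gradient case, the Hessian estimator is a sum of random matrices (built from variable-hess tensors contracted against $\nabla^2_\yv f$ and $\nabla_\yv f$), and we only ever access it via HVPs, so one must argue that $\sup_{\|\pv\|=1}\|(\tilde\Hmat_t - \nabla^2\Lc(\gammav_t))\pv\|$ is small --- i.e.\ genuine operator-norm control, which via matrix Bernstein costs the extra $\log d$ that appears inside the $\tilde{\mathcal O}$. A secondary subtlety is checking that the a.s.\ boundedness in Assumption~\ref{assump2} is actually inherited by the GO one-sample estimator (not merely by the idealized MC estimator), which for unbounded-support RVs like the gamma requires that the variable-nabla and variable-hess, multiplied by the $f$-derivatives, remain almost surely bounded --- this is plausible given the closed forms in \eqref{eq:G_alpha_gam_main} on a compact parameter region but does need the hypothesis $\epsilon \le \min(\sigma_1^2/(c_1 M_1),\, \sigma_2^4/(c_2^2 M_2^2\rho))$ to make the Bernstein ``variance term'' dominate the ``bounded term,'' which is exactly why that smallness condition on $\epsilon$ appears in the statement.
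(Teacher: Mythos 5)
Your proposal is correct and follows essentially the same route as the paper: reduce Algorithm~\ref{alg:SCR_GO} to Theorem~1 of \cite{tripuraneni2018stochastic} and verify that the averaged GO gradient/Hessian meet the required deviation bounds $c_1\epsilon$ and $c_2\sqrt{\rho\epsilon}$ under Assumptions~\ref{assump1}--\ref{assump2}. The paper's own proof simply asserts these concentration conditions ("the only difference lies in the concentration conditions"), whereas you spell out the unbiasedness of the GO estimators and the vector/matrix Bernstein argument that the paper leaves implicit; note also that Assumption~\ref{assump2} is stated directly for the one-sample GO estimators, so the "inheritance" subtlety you flag is assumed away rather than something to be proved.
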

\begin{proof}
	The proof is almost the same as Theorem 1 in \cite{tripuraneni2018stochastic} and the only difference lies in the concentration conditions. It is easy to prove that  
	\begin{equation*}
	\begin{split}
	\|\tilde \gv_t - \nabla \Lc(\gammav_t)\|&\leq c_1\cdot \epsilon,\\
	\forall \vv, \|\left(\tilde \Hmat_t - \nabla^2 \Lc(\gammav_t)\right)\vv\|&\leq c_2\cdot \sqrt{\rho\epsilon}\|\vv\|.
	\end{split}
	\end{equation*}
	hold for sufficiently small $c_1, c_2$.	
\end{proof}

Instead of the parameterization of shape $\alphav$ and rate $\betav$ for gamma distribution, a different parameterization is used below, that is, the shape $\kv = \alphav$ and the scale $\thetav = \frac{1}{\betav}$.

\begin{theorem}
	Consider $\Lc(\gammav) =  \KL({q_{\gammav}, p})$, where $p(\cdot)$ and $q_{\gammav}(\cdot)$ are two gamma distributions:
	$q_{\gammav}(\yv) = \Gam\left(\yv; \kv_{\qv}, \thetav_{\qv}\right), p(\yv) = \Gam\left(\yv;\kv_{\pv}, \thetav_{\pv}\right)$, if $\kv_{\pv}, \kv_{\qv}, \thetav_{\pv}, \thetav_{\qv}$ are bounded constants, then Algorithm \ref{alg:SCR_GO} will output an $\epsilon$-second order point of $\Lc$.	
\end{theorem}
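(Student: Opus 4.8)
The plan is to reduce the claim to the preceding theorem: that theorem already outputs an $\epsilon$-second-order point whenever $\Lc$ obeys Assumptions \ref{assump1} and \ref{assump2} and the Cubic-Subsolver of Algorithm \ref{alg:SCR_GO_subsolver} meets Condition~1 of \cite{tripuraneni2018stochastic} (which it does, by the argument given there), so it suffices to verify those two assumptions for $\Lc(\gammav)=\KL(q_{\gammav},p)$ when $\kv_{\pv},\kv_{\qv},\thetav_{\pv},\thetav_{\qv}$ are bounded constants bounded away from $0$, so that the gamma parameters stay in a fixed compact set $\Kc\subset(0,\infty)$. For Assumption \ref{assump1}, I would first record that, because the reverse KL between product gammas factorizes over coordinates, $\Lc$ is a sum of terms each of the closed form
\beq\label{eq:KLgamma_closed}
(k_{q}-k_{p})\psi(k_{q})-\log\Gamma(k_{q})+\log\Gamma(k_{p})+k_{p}\log\tfrac{\theta_{p}}{\theta_{q}}+k_{q}\tfrac{\theta_{q}-\theta_{p}}{\theta_{p}}.
\eeq
Since $\psi$, the polygamma functions and $\log\Gamma$ are $C^{\infty}$ on $(0,\infty)$ and the rest is elementary, $\Lc$ is $C^{\infty}$ on $\Kc$; taking $l$ and $\rho$ to be the suprema over $\Kc$ of $\|\nabla^{2}\Lc\|$ and of the operator norm of the third-derivative tensor of $\Lc$ yields Assumption \ref{assump1}. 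Compactness also bounds $\nabla\Lc$ and $\nabla^{2}\Lc$ uniformly on $\Kc$, which settles the ``$\nabla\Lc,\nabla^{2}\Lc$'' halves of Assumption \ref{assump2}.

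For the estimator halves of Assumption \ref{assump2}, I would reparameterize the scale ($\hat y=\theta_{q}y$ with $y\sim\Gam(\alpha,1)$, $\alpha:=k_{q}$), leaving only the shape $\alpha$ non-rep, and use the $\gammav$-dependent-$f$ generalization of the GO gradient/Hessian with $f_{\gammav}=\log q_{\gammav}-\log p$. The one-sample estimators are then assembled from the \emph{variable-nabla} $g_{\alpha}^{q_{\alpha}(y)}$, its derivatives $\nabla_{y}g_{\alpha}^{q_{\alpha}(y)}$ and $\nabla_{\alpha}g_{\alpha}^{q_{\alpha}(y)}$ (all closed-form in \eqref{eq:G_alpha_gam_main}), the \emph{variable-hess} \eqref{eq:Hessian_RV_definition}, the score $\nabla_{\gammav}\log q_{\gammav}$, and the first two derivatives of $f_{\gammav}$, which are elementary rational functions of $y$ (a $1/y$ term plus a constant; a $1/y^{2}$ term), the $\theta_{q}$-rescaling passing through AD. The next step is to show that, uniformly in $\gammav\in\Kc$, both $\|\hat{\Dbb}_{q_{\gammav}}f(y)-\nabla\Lc(\gammav)\|$ and $\|\hat{\Hten}_{q_{\gammav}}f(y)-\nabla^{2}\Lc(\gammav)\|$ are dominated by a fixed envelope $\Phi(y)$ built from $|\log y|$, $y^{-1}$, $y^{-2}$ and the incomplete-gamma/hypergeometric ratios in \eqref{eq:G_alpha_gam_main}; since $\Gam(y;\alpha,1)$ carries a $y^{\alpha-1}$ factor near $0$ and an exponential tail, and $\alpha$ stays in a compact subset of $(0,\infty)$, $\Phi$ has finite moments of every order, which produces the variance constants $\sigma_{1}^{2},\sigma_{2}^{2}$ of Assumption \ref{assump2}.

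The main obstacle I expect is the \emph{almost-sure} bounds $M_{1},M_{2}$ in Assumption \ref{assump2}: $\Phi$ is \emph{not} bounded on all of $(0,\infty)$ — it grows logarithmically, and the incomplete-gamma ratio inside $g_{\alpha}$ grows faster than any polynomial as $y\to\infty$ — so the literal a.s.\ bound fails and has to be worked around. I would do so by conditioning, at each iteration, on $y\in[a,b]$ for a fixed compact $[a,b]\subset(0,\infty)$ chosen with $\Pbb(y\notin[a,b])\le\delta'$ uniformly over $\gammav\in\Kc$ (possible because $\alpha$ is bounded above and below): on that event one takes $M_{1}=M_{2}=\sup_{[a,b]}\Phi<\infty$, and the discarded mass $\delta'$, summed over the $\tilde{\mathcal{O}}(\cdot)$ iterations of Algorithm \ref{alg:SCR_GO} by a union bound, is absorbed into the overall failure probability $\delta$ of the preceding theorem. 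A fallback would be to replace the a.s.\ hypotheses by sub-exponential tail bounds on $\Phi(y)$ and re-run the concentration step of that theorem's proof with a Bernstein inequality for sub-exponential vectors and matrices, enlarging the constants $c_{1},c_{2}$ accordingly. Either route establishes Assumptions \ref{assump1}--\ref{assump2}, and the preceding theorem then delivers the claimed $\epsilon$-second-order point of $\Lc$.
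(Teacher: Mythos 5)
Your route coincides with the paper's on the part the paper actually proves, and goes beyond it on the part the paper leaves implicit. The paper's proof is purely deterministic: it writes the reverse KL between two gammas in the closed form you quote, differentiates in $\kv_{\qv}$ and $\thetav_{\qv}$, and bounds differences of the resulting digamma/polygamma and $1/\thetav_{\qv}$ terms via the triangle inequality and mean-value estimates to extract explicit Lipschitz constants, asserting the Hessian-Lipschitz condition "in the same fashion" — i.e., it verifies only Assumption \ref{assump1} and then implicitly invokes the preceding theorem. Your compactness argument ($\Lc$ is $C^\infty$ in the parameters, take suprema of $\|\nabla^2\Lc\|$ and of the third-derivative tensor over a compact set) reaches the same conclusion with less computation; note that both arguments tacitly need the parameters bounded \emph{away from zero} as well as above (the paper's constants involve $\psi^{(1)}(\kv_{\qv})$, $\psi^{(2)}(\kv_{\qv})$, $1/\thetav_{\qv}^2$, $1/\thetav_{\pv}$, which blow up at the origin), so your explicit compact-set hypothesis is the honest reading of "bounded constants."

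Where you diverge is Assumption \ref{assump2}: the paper never verifies the almost-sure deviation bounds $M_1,M_2$ (nor any moment bounds) for the one-sample GO gradient/Hessian of the gamma KL — it silently treats them as given. Your observation that the literal a.s. bound fails (the envelope built from $\log y$, $y^{-1}$, $y^{-2}$ and the incomplete-gamma/hypergeometric ratios in \eqref{eq:G_alpha_gam_main} is unbounded on $(0,\infty)$) is correct and is a genuine point the paper glosses over. Your two repairs are plausible, but as sketched the truncation route needs one more step: conditioning on $y\in[a,b]$ biases the estimator, so you cannot simply plug $M_1=M_2=\sup_{[a,b]}\Phi$ into the preceding theorem; you must either bound the bias $\Ebb[\Phi(y)\,\mathbf{1}\{y\notin[a,b]\}]$ and fold it into the $c_1\epsilon$, $c_2\sqrt{\rho\epsilon}$ concentration targets, or run the union-bound-over-iterations argument on the unconditioned estimators and re-derive the concentration step (your sub-exponential Bernstein fallback does exactly this and is the cleaner fix, since all moments of $\Phi$ are finite uniformly over the compact parameter set). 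With that caveat your argument is complete and in fact more careful than the paper's own.
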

\begin{proof}
	Loss function can be written in the explicit form:
	\begin{equation*}
	\resizebox{\hsize}{!}{$
	\begin{split}
	\Lc(\gammav) &= \KL\left({q_{\gammav}, p}\right)\\ 
	&= \left(\kv_{\qv} - \kv_{\pv}\right)\psi^{(0)}(\kv_{\qv})-\log \Gamma\left(\kv_{\qv}\right) \\
	&+\log \Gamma\left(\kv_{\pv}\right)+ \kv_{\pv}\left(\log \thetav_{\pv} - \log\thetav_{\qv}\right)+\kv_{\qv} \frac{\thetav_{\qv} - \thetav_{\pv}}{\thetav_{\pv}}.
	\end{split}
	$}
	\end{equation*}
	Take derivatives with respect to the parameters of $\qv_{\gammav}(\cdot)$:
	\begin{equation*}
	\resizebox{\hsize}{!}{$\begin{split}
	\partial_{\kv_{\qv}} \Lc(\gammav) &= \psi^{(0)}(\kv_{\qv}) - (\kv_{\qv}-\kv_{\pv})\psi^{(1)}(\kv_{\qv}) -\psi^{(0)}(\kv_{\qv})\\
	& + \frac{\thetav_{\qv} - \thetav_{\pv}}{\thetav_{\pv}}\\
	\partial_{\thetav_{\qv}} \Lc(\gammav) &=\frac{\kv_{\pv}}{\thetav_{\qv}} + \frac{\kv_{\qv}}{\thetav_{\pv}},
	\end{split}$}
	\end{equation*}

	By triangle inequation
	\begin{equation}\label{triangle}
	\resizebox{\hsize}{!}{$
	\begin{split}
	&\left\|\partial_{\kv_{\qv}} \Lc(\gammav) - \partial_{\kv_{\qv}} \Lc(\tilde{\gammav}) \right\|\\
	\leq&\left\| \psi^{(0)}(\kv_{\qv}) -\psi^{(0)}(\tilde{\kv}_{\qv})\right\|
	+ \left\|{\kv}_{\qv}\psi^{(1)}({\kv}_{\qv})-\tilde{\kv}_{\qv}\psi^{(1)}(\tilde{\kv}_{\qv})\right\|\\
	&+ \left\|{\kv}_{\pv}\left(\psi^{(1)}(\kv_{\qv}) - \psi^{(1)}(\tilde{\kv}_{\qv})\right)\right\| \\
	&+ \left\|\psi^{(1)}(\kv_{\qv}) - \psi^{(1)}(\tilde{\kv}_{\qv})\right\|
	\left\|\partial_{\thetav_{\qv}} \Lc(\gammav) - \partial_{\theta_q} L(\tilde{\gamma}) \right\|\\
	\leq& \left\|\kv_{\pv}\left(\frac{1}{\theta_{\qv}} - \frac{1}{\tilde{\theta}_q}\right)\right\|
	+ \left\|\frac{\kv_{\qv}-\tilde{\kv}_{\qv}}{\thetav_{\pv}}\right\|
	\end{split}
	$}
	\end{equation}
	Estimating each term:
	\begin{equation*}
	\resizebox{\hsize}{!}{$
	\begin{split}
	&\left\|\psi^{(0)}(\kv_{\qv}) - \psi^{(0)}(\tilde{\kv}_{\qv})\right\|
	\leq \left\|\psi^{(1)}(\hat{\kv}_{\qv})\right\| \left\|\kv_{\qv}-\tilde{\kv}_{\qv}\right\|\\
	\leq& \psi^{(1)}(\kv_{\qv})\left\|\kv_{\qv}-\tilde{\kv}_{\qv}\right\|\\
	&\left\|\kv_{\pv}\left(\psi^{(1)}(\kv_{\qv}) - \psi^{(1)}(\tilde{\kv}_{\qv})\right)\right\| 
	+\left\|\psi^{(1)}(\kv_{\qv}) - \psi^{(1)}(\tilde{\kv}_{\qv})\right\|\\
	\leq& \left(\kv_{\pv}+1\right)\|\psi^{(2)}(\hat{\kv}_{\qv})\|\|\kv_{\qv} - \tilde{\kv}_{\qv}\|\\
	\leq& -\left({\kv}_{\pv}+1\right)\psi^{(2)}(\kv_{\qv})\|\kv_{\qv} - \tilde{\kv}_{\qv}\|\\
	&\left\|\kv_{\qv}\psi^{(1)}({\kv}_{\qv})-\tilde{\kv}_{\qv}\psi^{(1)}(\tilde{\kv}_{\qv})\right\|\\
	\leq& \left\|\hat{\kv}_{\qv}\psi^{(2)}(\hat{\kv}_{\qv})+ \psi^{(1)}(\hat{\kv}_{\qv})\right\|\left\|{\kv}_{\qv}-\tilde{\kv}_{\qv}\right\|\\
	\leq& \left(-\tilde{\kv}_{\qv}\psi^{(2)}({\kv}_{\qv})+
	\psi^{(1)}(\kv_{\qv})\right)
	\left\|\kv_{\qv}-\tilde{\kv}_{\qv}\right\|\\
	&\left\|\kv_{\pv}\left(\frac{1}{\thetav_{\qv}} - \frac{1}{\tilde{\thetav}_{\qv}}\right)\right\|
	= \frac{\kv_{\pv}}{\thetav_{\qv}\tilde{\thetav}_{\qv}}\left\|\thetav_{\qv} - \tilde{\thetav}_{\qv}\right\|
	\leq \frac{{\kv}_{\pv}}{\thetav_{\qv}^2}\left\|\thetav_{\qv} - \tilde{\thetav}_{\qv}\right\|\\
	&\left\|\frac{{\kv}_{\qv}-\tilde{\kv}_{\qv}}{\thetav_{\pv}}\right\| 
	= \frac{1}{\thetav_{\pv}}\left\|\kv_{\qv} - \tilde{\kv}_{\qv}\right\|
	\end{split}
	$}
	\end{equation*}
	
	Combining these terms into (\ref{triangle}), we have:
	\begin{equation*}
	\begin{split}
	&\left\|\partial_{\kv_{\qv}}\Lc(\gammav) - \partial_{\kv_{\qv}}\Lc(\tilde{\gammav})\right\|\\
	\leq& \left[2\psi^{(1)}(\kv_{\qv}) - \left(\kv_{\pv} +1+\tilde{\kv}_{\qv}\right)\psi^{(2)}({\kv}_{\qv})\right]\left\|\gammav - \tilde{\gammav}\right\|\\
	&\left\|\partial_{\thetav_{\qv}}\Lc(\gammav) - \partial_{\thetav_{\qv}}L(\tilde{\gammav})\right\|\\
	\leq& \left(\frac{{\kv}_{\pv}}{\thetav_{\qv}^2} + \frac{1}{\thetav_{\pv}}\right)\|\gammav - \tilde{\gammav}\|
	\end{split}
	\end{equation*}
	When $\kv_{\pv}, \kv_{\qv}, \thetav_{\pv}, \thetav_{\qv}$ are bounded constants, $\Lc(\gammav)$ satisfies the gradient Lipschitz condition.The Hessian Lipschitz condition can be verified in the same fashion.
\end{proof}

\section{Implementing GO gradient/Hessian for gamma RVs}
\label{sec:Gam_Implement}

Recall that the GO gradient and GO Hessian for single-layer continuous RVs are defined as 
\beq\label{eq:GOGradHessian}
\resizebox{\hsize}{!}{$\bali
\nabla_{\gammav} \Ebb_{q_{\gammav} (\yv)} [f(\yv)] 
& = \Ebb_{q_{\gammav} (\yv)} \Big[
\Gmat_{\gammav}^{q_{\gammav} (\yv)}
\nabla_{\yv} f(\yv)
\Big]
\\
\nabla_{\gammav}^2 \Ebb_{q_{\gammav} (\yv)} [f(\yv)] 
& = \Ebb_{q_{\gammav} (\yv)} \left[\bali
	& \Gmat_{\gammav}^{q_{\gammav}(\yv)} [\nabla_{\yv}^2 f(\yv)] \Gmat_{\gammav}^{q_{\gammav}(\yv)} {}^T
	\\
	& + \Hten_{\gammav\gammav}^{q_{\gammav}(\yv)} \nabla_{\yv} f(\yv)
\eali \right]
\eali$}
\eeq
where $\Gmat_{\gammav}^{q_{\gammav} (\yv)} = \big[ \cdots, g_{\gammav}^{q_{\gammav} (y_v)}, \cdots \big]$ with \emph{variable-nabla} $g_{\gammav}^{q_{\gammav} (y_v)} \triangleq  \frac{-1}{q_{\gammav} (y_v)} \nabla_{\gammav}  Q_{\gammav} (y_v)$ and $\Hten_{\gammav\gammav}^{q_{\gammav}(\yv)}$ is a $3$-D tensor with its elemental definition
$
(\Hten_{\gammav\gammav}^{q_{\gammav}(\yv)})_{b,a,v} = g_{\gamma_b}^{q_{\gammav} (y_v)} \nabla_{y_v} g_{\gamma_a}^{q_{\gammav} (y_v)} + \nabla_{\gamma_b} g_{\gamma_a}^{q_{\gammav} (y_v)}.
$ 

It has been shown in Section \ref{sec:GOHess_HVP} of the main manuscript that one merely needs to guarantee correct \emph{variable-nabla}/\emph{variable-hess} for each RV node to deliver correct exploitation via AD of the GO Hessian of a stochastic computation graph.

Accordingly, we focus on a scalar gamma RV for clarity in the following derivations. 
It's clear from both definitions in \eqref{eq:GOGradHessian} that three basic terms are necessary to calculate the GO gradient/Hessian for a scalar continuous RV $y \sim q_{\gammav}(y)$, that is
\beq\label{eq:3comp_gamma}
\bali
g_{\gammav}^{q_{\gammav} (y)},
\nabla_{y} g_{\gammav}^{q_{\gammav} (y)},
\nabla_{\gammav} g_{\gammav}^{q_{\gammav} (y)}.
\eali
\eeq
For simplicity, we first notice that a gamma RV $\hat y \sim \Gam(\alpha, \beta)$, with shape $\alpha$ and rate $\beta$, has the reparameterization $\hat y = {y}/{\beta},  y \sim \Gam(\alpha, 1)$, with which $\beta$ can be reparameterized to enable AD for exploiting the corresponding derivatives.

Therefore, we only need to deal with the challenging non-rep part of back-propagating (twice) through the Gamma RV $y \sim q_{\alpha} (y) = \Gam(\alpha, 1)$; accordingly, that three basic terms become
\beq\label{eq:3comp_gamma_sim}
\bali
g_{\alpha}^{q_{\alpha} (y)},
\nabla_{y} g_{\alpha}^{q_{\alpha} (y)},
\nabla_{\alpha} g_{\alpha}^{q_{\alpha} (y)}.
\eali
\eeq

With references from wolfram functions (mostly from \url{http://functions.wolfram.com/GammaBetaErf/Gamma2/20/01/01/} and \url{http://functions.wolfram.com/GammaBetaErf/Gamma2/20/01/02/}) and tedious derivations, we have 
\beq\label{eq:G_alpha_gam}
\bali
& g_{\alpha}^{q_{\alpha} (y)}
\\ 
& = [g_{\alpha}^{q_{\alpha} (y)}]_L \triangleq \left[ \bali
	& -[\log z - \psi(\alpha)] \frac{\gamma(\alpha, z)}{z^{\alpha-1} e^{-z}}	
	\\
	& + \frac{z e^z}{\alpha^2} {}_2 F_2(\alpha, \alpha; \alpha+1, \alpha+1; -z) 
	\eali \right]
\\
& = [g_{\alpha}^{q_{\alpha} (y)}]_R \triangleq \frac{1}{z^{\alpha-1} e^{-z}} \left[ \bali
	& [\log z - \psi(\alpha)] \Gamma(\alpha, z) 
	\\
	& + G_{2,3}^{3,0} (z |_{0,0,\alpha}^{1,1}) 
	\eali \right] ,
\eali
\eeq
where $[\cdot]_L$ and $[\cdot]_R$ represent two equivalent calculations for the same term (with different properties as detailed below), $\psi(\alpha)$ is the digamma function, $\gamma(\alpha, y)$ the lower incomplete gamma function, ${}_p F_q(a_1,\cdots,a_p; b_1,\cdots,b_q; x)$ is the generalized hypergeometric function \url{http://functions.wolfram.com/HypergeometricFunctions/HypergeometricPFQ/}, and $G_{p,q}^{m,n} (x |_{b_1,\cdots,b_m,b_{m+1},\cdots,b_q}^{a_1,\cdots,a_n,a_{n+1},\cdots,a_p})$ is the Meijer G-function \url{http://functions.wolfram.com/HypergeometricFunctions/MeijerG/},
\beq\label{eq:Ny_G_alpha_gam}
\resizebox{\columnwidth}{!}{$
\bali
& \nabla_{y} g_{\alpha}^{q_{\alpha} (y)} 
\\
& = [\nabla_{y} g_{\alpha}^{q_{\alpha} (y)}]_L \triangleq \left[\bali
	& [\psi(\alpha) - \log z] + \frac{y-\alpha+1}{y} 
	\\
	& \times \left[ \bali
		& [\psi(\alpha) - \log z] \frac{\gamma(\alpha, z)}{z^{\alpha-1} e^{-z}} + \\
		& \frac{z e^z}{\alpha^2} {}_2 F_2(\alpha, \alpha; \alpha+1, \alpha+1; -z) 
		\eali \right]
	\eali \right]
\\
& = [\nabla_{y} g_{\alpha}^{q_{\alpha} (y)}]_R \triangleq \left[\bali
	& [\psi(\alpha) - \log z] + \frac{1}{z^{\alpha-1} e^{-z}} \frac{y-\alpha+1}{y} 
	\\
	& \times \left[ \bali
	& [\log z - \psi(\alpha)]\Gamma(\alpha, z) \\
	& + G_{2,3}^{3,0} (z |_{0,0,\alpha}^{1,1})  
	\eali \right]
	\eali \right],
\eali
$}
\eeq
where $\Gamma(\alpha, y) $ the upper incomplete gamma function, and 
\beq\label{eq:Nalpha_G_alpha_gam}
\resizebox{\columnwidth}{!}{$
\bali
& \nabla_{\alpha} g_{\alpha}^{q_{\alpha} (y)} 
\\
& = [\nabla_{\alpha} g_{\alpha}^{q_{\alpha} (y)}]_L \triangleq
\left[ \bali 
& \psi^{(1)}(\alpha) \frac{\gamma(\alpha, z)}{z^{\alpha-1} e^{-z}} + [\log z - \psi(\alpha)]
\\
& \times \frac{z e^z}{\alpha^2} {}_2 F_2(\alpha, \alpha; \alpha+1, \alpha+1; -z) \\
& - \frac{2 z e^z}{\alpha^3} {}_3 F_3(\alpha, \alpha, \alpha; \alpha+1,\alpha+1, \alpha+1; -z)
\eali\right] 
\\
& = [\nabla_{\alpha} g_{\alpha}^{q_{\alpha} (y)}]_R \triangleq
	\frac{1}{z^{\alpha-1} e^{-z}} \left[ \bali
	& -\psi^{(1)}(\alpha) \Gamma(\alpha, z) 
	\\
	& - [\psi(\alpha) - \log z] G_{2,3}^{3,0} (z |_{0,0,\alpha}^{1,1}) 
	\\
	& + 2 G_{3,4}^{4,0} (z |_{0,0,0,\alpha}^{1,1,1})
	\eali \right],
\eali
$}
\eeq
where $\psi^{(m)}(x)$ is the polygamma function of order $m$ with $\psi^{(0)}(x)=\psi(x)$.

Since existing AD softwares do not support the calculation of the above special functions like the generalized hypergeometric function or the Meijer G-function, we practically resort to the mpmath library \cite{mpmath} for help, which is developed for real and complex floating-point arithmetic with arbitrary precision and provides supports for those special functions of interest.

Based on the mpmath library, one can calculate the equivalent $[\cdot]_L$ and $[\cdot]_R$ forms for the three items in \eqref{eq:3comp_gamma_sim}. However in practice, we empirically find that the $[\cdot]_L$ forms are more computationally efficient (about $20$ times faster than the $[\cdot]_R$ forms with mpmath) with its reliability focusing on the left side, \ie $y \le y_{L}$ (with $y_{L}$ being some threshold, as shown in Figure \ref{fig:LR_terms_compare}), for a specific $\alpha$ and a given computational precision; while the $[\cdot]_R$ forms run slower (probably because of the Meijer G-function) with its reliability focusing on the right side \ie $y \ge y_{R}$ for some threshold $y_{R}$ (see Figure \ref{fig:LR_terms_compare}). An example for $\alpha=100$ and a mpmath decimal precision of mp.dps=15 is given in Figure \ref{fig:LR_terms_compare}. One can of course set a better precision to make both forms more reliable but with additional computational cost, for example setting mp.dps=50 will ``correct'' the $[\cdot]_L$ forms to align them to the green true values in the tested region. We empirically found that mp.dps=50 achieves a proper compromise; accordingly, we use mp.dps=50 in all our experiments. Considering the efficiency of the $[\cdot]_L$ forms and their reliability after setting mp.dps=50, we used them instead of the $[\cdot]_R$ forms in our implementation.

\begin{figure*}
	\centering
	\subfigure[$g_{\alpha}^{q_{\alpha} (y)} $] {\label{fig:g_alpha_100}
		\includegraphics[width=0.6\columnwidth]{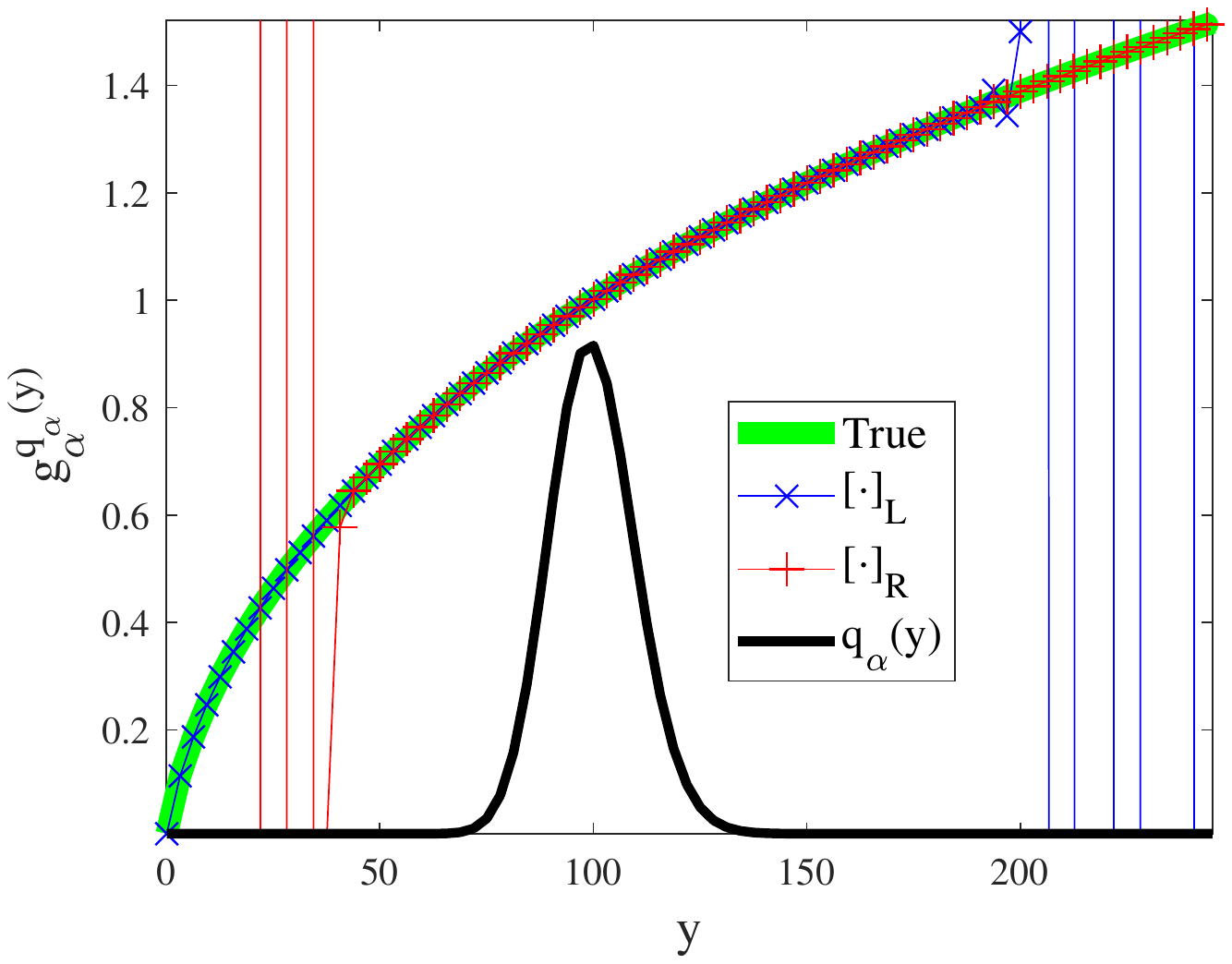}
	}
	\subfigure[$\nabla_{y} g_{\alpha}^{q_{\alpha} (y)}$] {\label{fig:Nz_g_alpha_100}
		\includegraphics[width=0.6\columnwidth]{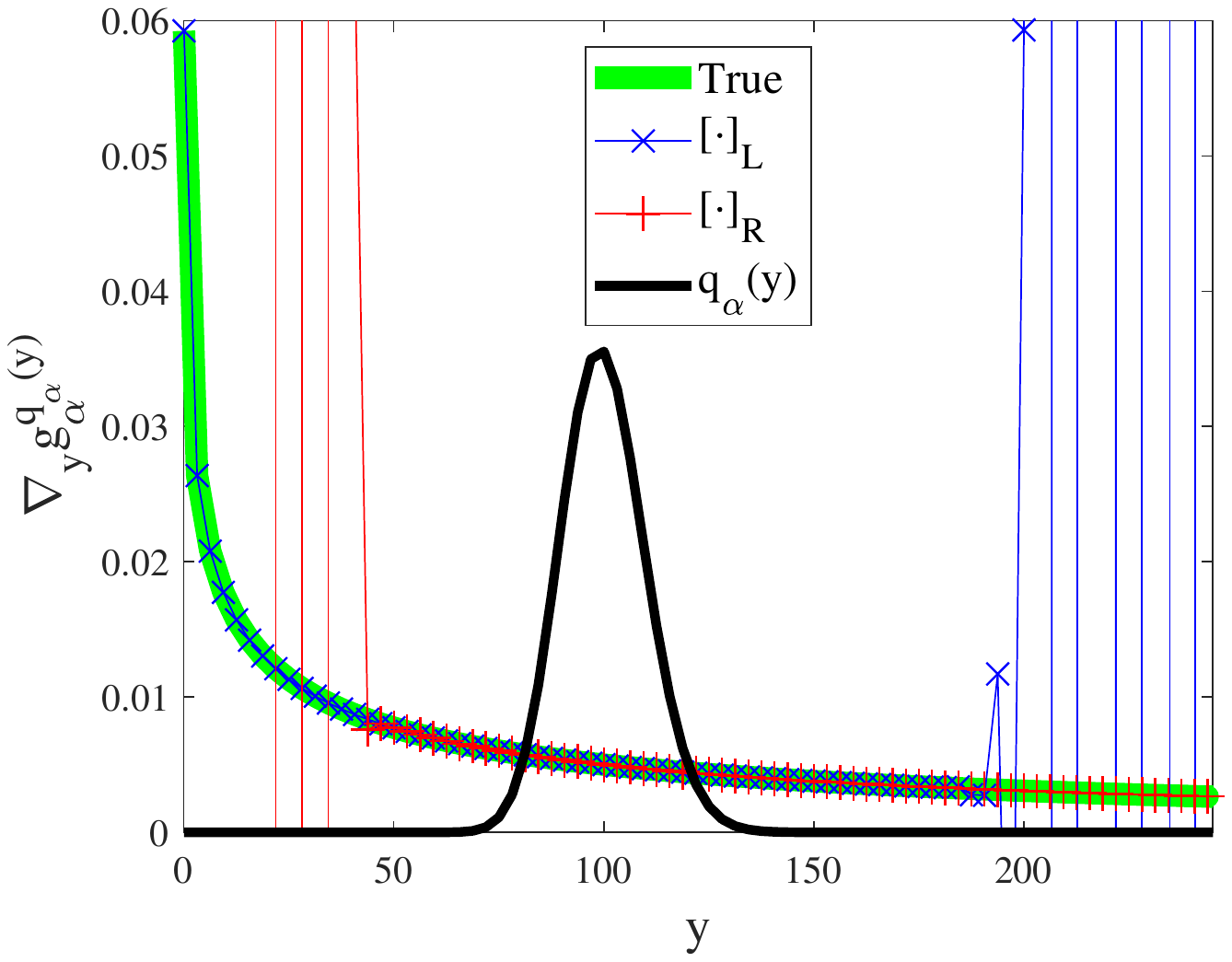}
	}
	\subfigure[$\nabla_{\alpha} g_{\alpha}^{q_{\alpha} (y)}$] {\label{fig:N_alpha_g_alpha_100}
		\includegraphics[width=0.6\columnwidth]{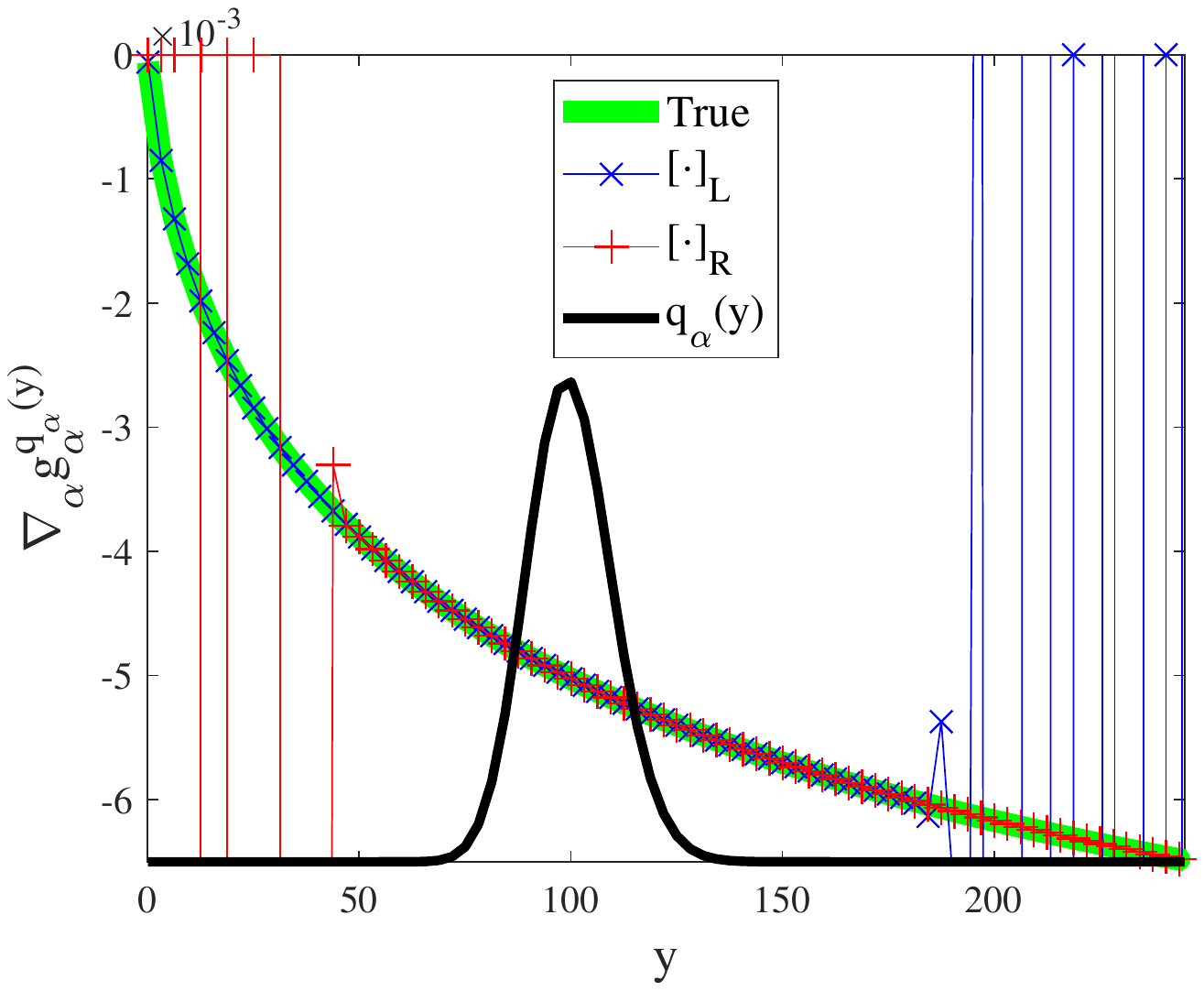}
	}
	\caption{Demonstrating the reliability of three $[\cdot]_L$ and $[\cdot]_R$ forms for a gamma RV with $\alpha=100$ and $\beta=1$ with default mpmath decimal precision (mp.dps=15). 
		$[\cdot]_L / [\cdot]_R$ is more reliable in the left/right. 
		For reference, the black lines show the rescaled gamma PDF, \ie $c q_{\alpha}(y)$ with some constant $c$. 		
		By setting a better precision (\ie mp.dps=50), one can ``correct'' the $[\cdot]_L$ terms to align with the true values (the green lines) in the tested region. Note empirically, the $[\cdot]_L$ terms run about $20$ times faster than the $[\cdot]_R$ ones. By referring to the rescaled gamma distribution, it is clear that setting mp.dps=50 would make the $[\cdot]_L$ terms reliable in practice. 
	}
	\label{fig:LR_terms_compare}
\end{figure*}

Having solved the above three basic terms in \eqref{eq:3comp_gamma_sim}, we empirically found another precision issue for practically implementations, relating to back-propagating through $y$ for small $\alpha$. That issue originates from the fact that a gamma RV sample would be exponentially close to zero with the decreasing of its shape $\alpha$ (with rate $\beta=1$). For example, when $\alpha=0.1$, a sample $y$ would be $7.3540 \times 10^{-66}$, while for $\alpha=0.01$, $y$ can approach the precision limit of $2.2251 \times 10^{-308}$. 
For practical calculations where a term $\frac{1}{y^2}$ emerges (like in calculating the gradient of the reverse KL divergence of two gamma distributions as in \eqref{eq:KL_gamma}), precision overflow would happen, leading to training error.
Therefore, in practice we use the data type of $64$-bit floating point (\ie torch.float64 in PyTorch), constraint $\alpha \ge 0.05$, and truncate the gamma sample $y \ge 10^{-120}$ for safe training.

\section{Implementing GO gradient/Hessian for negative binomial RVs}
\label{sec:NB_Implement}

Recall that the GO gradient and GO Hessian for single-layer/leaf discrete RVs are defined as 
\beq\label{eq:GOGradHessianNB}
\resizebox{\hsize}{!}{$\bali
\nabla_{\gammav} \Ebb_{q_{\gammav} (\yv)} [f(\yv)] 
& = \Ebb_{q_{\gammav} (\yv)} \Big[
\Gmat_{\gammav}^{q_{\gammav} (\yv)}
\Dbb_{\yv} f(\yv)
\Big]
\\
\nabla_{\gammav}^2 \Ebb_{q_{\gammav} (\yv)} [f(\yv)] 
& = \Ebb_{q_{\gammav} (\yv)} \left[\bali
& \Gmat_{\gammav}^{q_{\gammav}(\yv)} [\Dbb_{\yv}^2 f(\yv)] \Gmat_{\gammav}^{q_{\gammav}(\yv)} {}^T
\\
& + \overline{\Hten_{\gammav\gammav}^{q_{\gammav}(\yv)} \Dbb_{\yv} f(\yv)}
\eali \right]
\eali$}
\eeq
where $\Gmat_{\gammav}^{q_{\gammav} (\yv)} = \big[ \cdots, g_{\gammav}^{q_{\gammav} (y_v)}, \cdots \big]$ with \emph{variable-nabla} $g_{\gammav}^{q_{\gammav} (y_v)} \triangleq  \frac{-1}{q_{\gammav} (y_v)} \nabla_{\gammav}  Q_{\gammav} (y_v)$ 
and $\scriptstyle \overline{\Hten_{\gammav\gammav}^{q_{\gammav}(\yv)} \Dbb_{\yv} f(\yv)}$ represents a matrix with its elements $$
\resizebox{\hsize}{!}{$
	\big[{\scriptstyle{\overline{\Hten_{\gammav\gammav}^{q_{\gammav}(\yv)} \Dbb_{\yv} f(\yv)}}} \big]_{b,a} = \sum_{v} \left[ \bali
	& [g_{\gamma_b}^{q_{\gammav} (y_v)} \Dbb_{y_v} g_{\gamma_a}^{q_{\gammav} (y_v)}] \Dbb_{y_v} f(\yv^{v+}) 
	\\
	& + [\nabla_{\gamma_b} g_{\gamma_a}^{q_{\gammav} (y_v)}] \Dbb_{y_v} f(\yv) 
	\eali \right].
	$}
$$

Based on Section \ref{sec:GOHess_HVP} of the main manuscript, we only needs to guarantee correct \emph{variable-nabla}/\emph{variable-hess} for each RV node to deliver correct exploitation via AD of the GO Hessian of a stochastic computation graph.
Accordingly, we focus on a scalar negative binomial (NB) RV for clarity in the following derivations.

In addition to the $f$-related calculations, to calculate the GO gradient/Hessian for a discrete NB RV $y \sim q_{\alphav}(y) = \NB(r, p) = \frac{\Gamma(y+r)}{y!\Gamma(r)} (1-p)^r p^y$ with the number of failures $r$, the success probability $p$, and the distribution parameters $\alphav=\{r,p\}$,
three groups of basic terms are necessary, that is,
\beq\label{eq:GOgradHessNBterms}
\bali
& \{g_{r}^{q_{\alphav} (y)},
g_{p}^{q_{\alphav} (y)}\},
\\
& \{\Dbb_{y} g_{r}^{q_{\alphav} (y)},
\Dbb_{y} g_{p}^{q_{\alphav} (y)}\},
\\
& \{\nabla_{r} g_{r}^{q_{\alphav} (y)},
\nabla_{p} g_{r}^{q_{\alphav} (y)},
\nabla_{r} g_{p}^{q_{\alphav} (y)},
\nabla_{p} g_{p}^{q_{\alphav} (y)}\}.
\eali
\eeq

With references from wolfram functions (mostly from \url{http://functions.wolfram.com/GammaBetaErf/BetaRegularized/20/01/01/}, \url{http://functions.wolfram.com/GammaBetaErf/BetaRegularized/20/01/02/}, and \url{http://functions.wolfram.com/HypergeometricFunctions/HypergeometricPFQRegularized/26/01/01/}) and tedious derivations, we have 
\beq\label{eq:Gr_Gp}
\resizebox{\hsize}{!}{$\bali
& g_{r}^{q_{\alphav} (y)} = (y+r) \left[\bali
	&-\left(\bali
		& \log(1-p) - \psi(r)
		\\
		& + \psi(r+y+1)
	\eali\right) 
	\frac{B_{1-p}(r,y+1)}{(1-p)^r p^y}
	\\
	& + \frac{{}_3 F_2(r,r,-y;r+1,r+1,1-p)}{p^y r^2}
\eali\right]
\\ 
& g_{r}^{q_{\alphav} (y)} = \frac{y+r}{1-p},
\eali$}
\eeq
where $\psi(\cdot)$ is the digamma function, 
$B_{1-p}(r, y+1)$ the incomplete gamma function \url{http://functions.wolfram.com/GammaBetaErf/Beta3/}, 
and ${}_p F_q(a_1,\cdots,a_p; b_1,\cdots,b_q; x)$ is the generalized hypergeometric function \url{http://functions.wolfram.com/HypergeometricFunctions/HypergeometricPFQ/}.

Based on \eqref{eq:Gr_Gp}, it's straight-forward to calculate 
\beq\label{appeq:Dy_gr_Dy_gp}
\{\Dbb_{y} g_{r}^{q_{\alphav} (y)},
\Dbb_{y} g_{p}^{q_{\alphav} (y)}\}.
\eeq

Further leveraging the references from \url{http://functions.wolfram.com/GammaBetaErf/BetaRegularized/20/01/02/0002/} and 
$$
\resizebox{\hsize}{!}{$\bali
\frac{\partial^2 I_z (a,b)}{\partial z \partial a} 
& = \frac{\partial^2 I_z (a,b)}{\partial a \partial z}
\\
& = \frac{z^{a-1} (1-z)^{b-1}}{B(a,b)} 
	\Big[ \log z - \psi(a) + \psi(a+b) \Big],
\eali$}
$$
where $I_z (a,b)$ is the regularized incomplete beta function \url{http://functions.wolfram.com/GammaBetaErf/BetaRegularized/} and $B(a, b)$ is the beta function, we yield
\beq\label{eq:Nr_Gr}
\resizebox{\columnwidth}{!}{$\bali
& \nabla_{r} g_{r}^{q_{\alphav} (y)} =
\\
& \left[
	\bali
		& \left[\bali
			& - \Big( \log(1-p) - \psi(r) + \psi(r+z+1) \Big)
			\\
			& + (z+r) \big( \psi^{(1)}(r) - \psi^{(1)}(r+z+1) \big)
		\eali\right]
		\frac{B_{1-p}(r,y+1)}{(1-p)^r p^y}
		\\
		& + \left[1 + (z+r) \Big( \bali
			& \log(1-p) - \psi(r) 
			\\
			& + \psi(r+z+1) 
			\eali \Big)\right]
		\frac{{}_3 F_2(r,r,-y;r+1,r+1,1-p)}{p^y r^2}
		\\
		& - 2(z+r) \frac{{}_4 F_3(r,r,r,-y;r+1,r+1,r+1,1-p)}{p^y r^3}
	\eali
\right],
\eali$}
\eeq
and
\beq\label{eq:Np_Gr}
\resizebox{\columnwidth}{!}{$
	\nabla_{p} g_{r}^{q_{\alphav} (y)} = \left[\bali
	& -(z+r) (\frac{r}{1-p} - \frac{z}{p}) \left[\bali
		& \left[\bali
			& \log(1-p) - \psi(r) 
			\\
			& + \psi(r+z+1) 
			\eali\right] \frac{B_{1-p}(r,y+1)}{(1-p)^r p^y}
		\\
		& - \frac{{}_3 F_2(r,r,-y;r+1,r+1,1-p)}{p^y r^2}
	\eali\right]
	\\
	& + \frac{z+r}{1-p} \Big( \log(1-p) - \psi(r) + \psi(r+z+1) \Big)
\eali\right],
$}
\eeq
where $\psi^{(m)}(\cdot)$ is the polygamma function of order $m$.

Finally, based on \eqref{eq:Gr_Gp}, it's straight-forward to derive
\beq\label{appeq:Nr_gp_Np_gp}
\bali
\nabla_{r} g_{p}^{q_{\alphav} (y)} & = \frac{1}{1-p}
\\
\nabla_{p} g_{p}^{q_{\alphav} (y)} & = \frac{z+r}{(1-p)^2}
\eali
\eeq

Collecting the results of \eqref{eq:Gr_Gp}-\eqref{appeq:Nr_gp_Np_gp},
we yield the three groups of basic terms in \eqref{eq:GOgradHessNBterms}.
By substituting them into \eqref{eq:GOGradHessianNB}, we deliver the GO gradient/Hessian for a discrete NB (leaf) node.

Because of the involved special functions, we currently rely on the mpmath library \cite{mpmath} for implementation.
Noticing the similarity between GO Hessian definitions for continuous and discrete RVs (compare \eqref{eq:GOGradHessian} with \eqref{eq:GOGradHessianNB}), it's highly possible that an approach (mimicking the one shown in Figure \ref{fig:PseudoCode} of the main manuscript) can be developed to enable an easy-to-use implementation via auto-differentiation.
However, we consider that beyond the scope of this paper and leave that for future research.

\section{Experimental settings}
\label{secapp:exp_settings}

Three methods are compared, \ie the standard SGD, the popular adaptive first-order method Adam, and our SCR-GO. 
The number of oracle calls per iteration for the compared methods are summarized in Table \ref{tab:Complexity_compare}. 

We give below more detailed experimental settings, in additional to what's given in the main manuscript, for reproducible research.
Code will be available at \url{github.com/YulaiCong/GOHessian}.   

\begin{table}[tb]
	\centering
	\caption{The number of oracle calls per iteration. 
		$N_g / N_H$ denotes the batch size used to estimate the gradient/Hessian. $T_{\text{sub}}$ is the number of iterations used in the Cubic-Subsolver of SCR-GO.
		\label{tab:Complexity_compare}}	
	\begin{tabular}{c|ccc}
		Method & SGD & Adam & SCR-GO  \\
		\hline
		\# Oracle Calls   & $N_g$ & $N_g$ & $N_g + N_H T_{\text{sub}}$  \\
	\end{tabular}
\end{table}

\subsection{Settings used in Section \ref{sec:KL_twoGamma_SCR}}

In the $\alpha$-$\beta$ space, we parameterize $\alpha = \text{softplus}(\gamma_{\alpha})$ and $\beta = \text{softplus}(\gamma_{\beta})$, with $\gammav=\{\gamma_{\alpha}, \gamma_{\beta}\}$ is the trainable parameters.

One-sample estimation of the GO gradient or GO Hessian is employed for the compared methods.
As no observation $\xv$ exists in this experiment (see Algorithm \ref{alg:SCR_GO} of the main manuscript), one may interpret all batch sizes to be 1, \eg $N_g=N_H=1$.

In the $\alpha$-$\beta$ space, for SGD$_{\alpha,\beta}$, we use the learning rate of $0.01$, which is selected by searching within $\{0.01, 0.05, 0.1, 0.5, 1\}$.
For SCR-GO$_{\alpha,\beta}$, we use $\rho=5$, $T(\epsilon)=3$, and a $0.0001$ noise for the cubic sub-problem.

By contrast in the $\mu$-$\sigma$ space, we parameterize $\alpha = \frac{\mu^2}{\sigma^2}, \beta = \frac{\mu}{\sigma^2}$, with $\mu = \text{softplus}(\gammav_{\muv})$, $\sigmav = \text{softplus}(\gammav_{\sigmav})$, and trainable $\gammav=\{\gamma_{\mu}, \gamma_{\sigma}\}$.
For SGD$_{\mu,\sigma}$, we search and select the learning rate of $0.1$. 
For Adam$_{\mu,\sigma}$, we use the learning rate of $1$ and the default hyperparameters.
For SCR-GO$_{\mu,\sigma}$, we use $\rho=0.1$ for the cubic sub-problem. 
Other parameters are the same with those in the $\alpha$-$\beta$ space.

\subsection{Settings used in Section \ref{sec:MFVI_PFA}}
\label{secapp:MFVI_PFA}

In this experiment, we adopt a modified MNIST dataset for demonstration. Specifically, we choose $5$ digits per class to form a new dataset containing only $50$ data samples. 
The dimensionality of $\zv$ is set to $20$. 
The softmax function is applied to each column of $\Wmat$ to make sure they are located in the simplex. 
$\alphav_0=\betav_0=1$.
$\{\muv_i,\sigmav_i\}$ is parameterized as $\muv_i = \text{Softplus}(\gammav_{\muv i})$ and $\sigmav_i=\text{Softplus}(\gammav_{\sigmav i})$, with $\phiv_i = \{\gammav_{\muv i}, \gammav_{\sigmav i} \}$ the learnable parameters associated with the $i$th observation $\xv_i$.

To remove the influence of the second-order optimization on $\thetav=\{\Wmat\}$, we use the same RMSprop optimizer (with learning rate $0.1$) on $\thetav$ for both Adam and SCR-GO.
The difference between Adam and SCR-GO is that the former utilizes the Adam optimizer when optimizing over $\{\phiv_i\}$ while the latter leverages our SCR-GO to train $\{\phiv_i\}$.

For the Adam optimizer, we search the learning rate within $\{0.001, 0.005, 0.01, 0.05, 0.1, 0.5, 1\}$ and choose the best learning rate of $0.1$. 
For our SCR-GO (see Algorithm \ref{alg:SCR_GO} of the main manuscript), we use $\rho=0.1$, $T_{sub}=5$, and noise $0.01$ for the cubic sub-problem. 
To solve the cubic sub-problem, instead of using the standard gradient decent method, we alternatively use the RMSprop optimizer with learning rate $10^{-2}$, which empirically performs better.
We use the whole $50$ data samples $\{\xv_i\}$, \ie $N_g=N_H=50$, with one-sample-estimated latent codes $\{\zv_i\}$ to estimate both GO gradient and GO Hessian (\ie one-sample estimation).
5 runs based on different random seeds are used to estimate the error-bars/variances.
Note our SCR-GO could be more efficient if we use a smaller batch size $N_H$ to estimate GO Hessian.

\subsection{Settings used in Section \ref{sec:VAE_PFA}}

\subsubsection{Variational encoder for PFA}
\label{secapp:VAE_PFA}

\begin{table}[tb]\centering
	\caption{Neural network architecture of $\NN_{\muv}(\xv)$ and $\NN_{\sigmav}(\xv)$.}\label{tabapp:PFA_QNN_net}
	\begin{tabular}{cc}
		\hline\hline
		Layer             & Output                  \\ 
		\hline
		Linear             & $20$                  \\ 
		Softplus($\beta=0.05$) & $-$   \\ 
		\hline\hline
	\end{tabular}
\end{table}

Table \ref{tabapp:PFA_QNN_net} shows the network architecture used to parameterize both $\NN_{\muv}(\xv)$ and $\NN_{\sigmav}(\xv)$.

In this experiment, we adopt joint learning for $\{\thetav, \phiv\}$ to test our SCR-GO in a more practical setup, as joint learning is more commonly used than the alternate optimization considered in Section \ref{secapp:MFVI_PFA} (or Section \ref{sec:MFVI_PFA} of the main manuscript).
One-sample estimation of the GO gradient or GO Hessian is employed for the compared methods.

For the Adam optimizer, we search the learning rate within $\{0.001, 0.005, 0.01, 0.05, 0.1, 0.5, 1\}$ and choose the best learning rate of $0.005$. For our SCR-GO (see Algorithm \ref{alg:SCR_GO} of the main manuscript), we use $\rho=0.1$, $T_{sub}=5$, noise $10^{-7}$, RMSprop with learning rate $5\times 10^{-4}$ for the cubic sub-problem. Different from Section \ref{secapp:MFVI_PFA} (or Section \ref{sec:MFVI_PFA} of the main manuscript), we use a batch size of 10, \ie $N_H=10$, to estimate the GO Hessian. Other settings are the same with Section \ref{secapp:MFVI_PFA}.

As confirmed by the results in Figure \ref{fig:MNIST_PFA_JQNN} of the main manuscript, subsampling data to estimate GO Hessian doesn't influence the final performance too much but indeed provide better efficiency wrt oracle calls.

\subsubsection{Variational encoder for PGBN/DLDA}
\label{secapp:VAE_PGBN}

\begin{figure}[tb]
	\centering
	\subfigure[] {\label{fig:}
		\includegraphics[width=0.7\columnwidth]{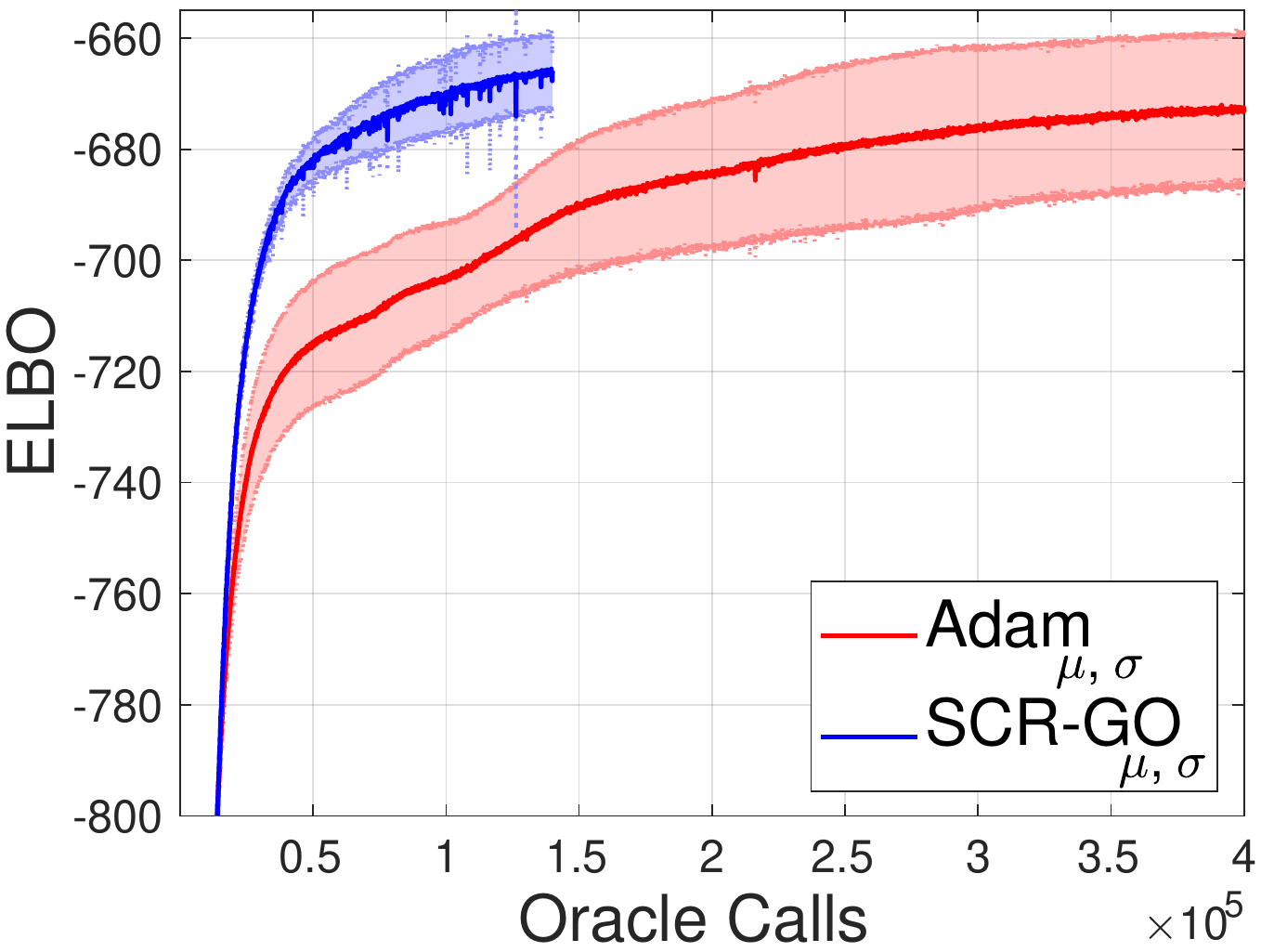}
	}
	\subfigure[] {\label{fig:}
		\includegraphics[width=0.7\columnwidth]{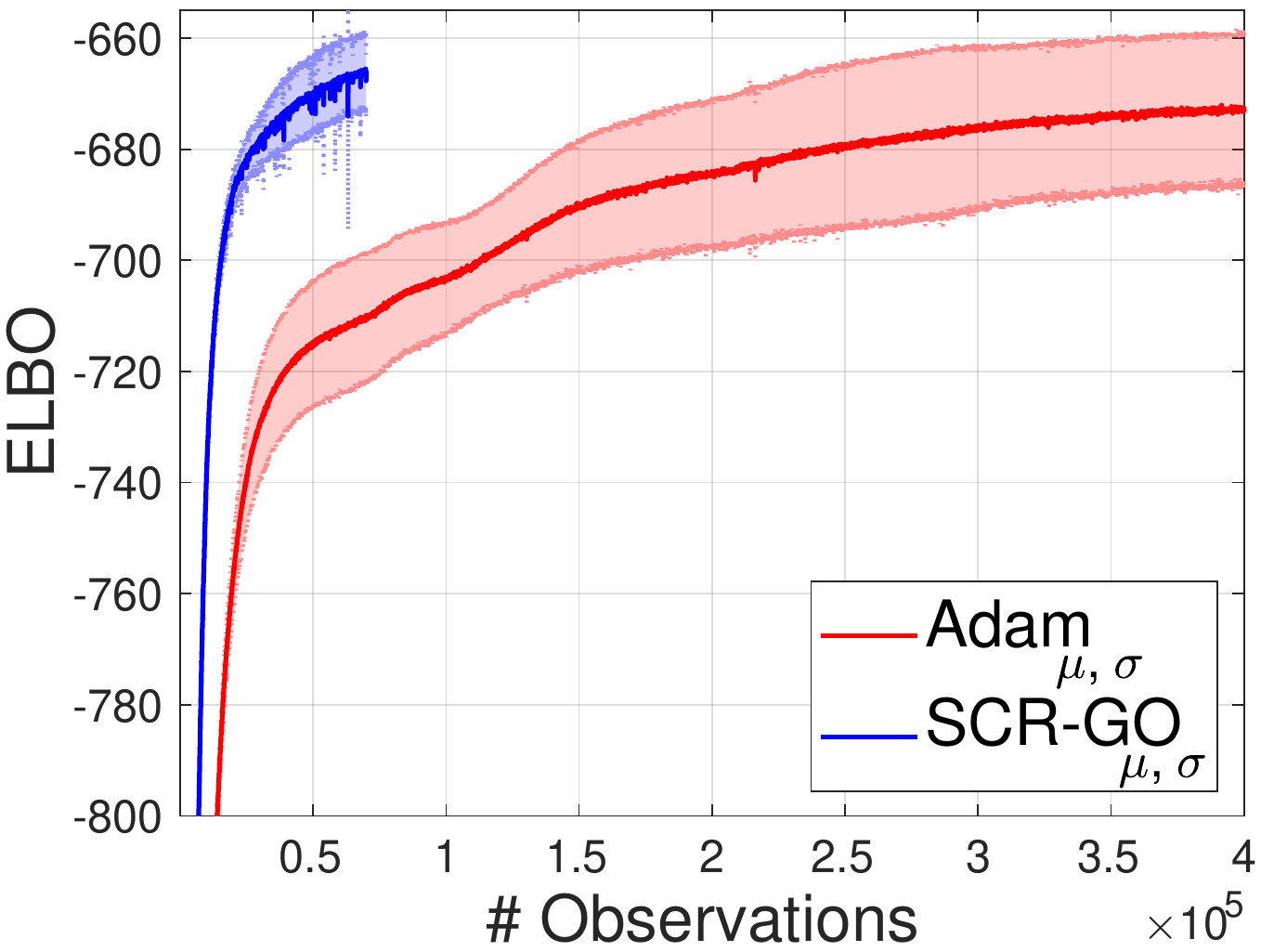}
	}
	\caption{Comparing our SCR-GO with a well-tuned Adam optimizer on training a variational encoder for PFA.
	Error-bars/Variances are estimated based on $5$ random seeds.
	}
	\label{fig:MNIST_PFA2_JQNN}
	\vspace{-0.2 cm}
\end{figure}

Generalizing the PFA, a Poisson gamma belief networks (PGBN, identical to the deep latent Dirichlet allocation (DLDA)) \cite{Zhou2015Poisson,zhou2016augmentable,cong2017deep} is a deep latent variable model with the generative process of (take the $2$-layer special case as an example)
\beq
p_{\thetav}(\xv, \zv): \left\{
\bali
\xv & \sim \Pois(\xv | \Wmat_1 \zv_1)
\\ 
\zv_1 & \sim \Gam(\zv_1| \Wmat_2 \zv_2, \cv_2)
\\
\zv_2 & \sim \Gam(\zv_2|\alphav_0, \betav_0),
\eali
\right.
\eeq
where $\xv$ is the count data variable, $\Wmat_l$ the topic matrix of layer $l$ with each column/topic $\wv_k$ located in the simplex, \ie $w_{vk} > 0, \sum\nolimits_{v} w_{vk} = 1$, $\zv_l$ the latent code of layer $l$, $\zv=\{\zv_l\}$, and $\thetav=\{\{\Wmat_l\}, \{\cv_l\}, \alphav_0, \betav_0\}$. 
Often $\{\alphav_0, \betav_0\}$ are assumed constants such as $1$.
For simplicity, we further assume $\cv_2=1$ in this experiment.

The variational inference arm is constructed hierarchically as 
\beq
q_{\phiv}(\zv | \xv) = q_{\phiv_2}(\zv_2 | \zv_1) q_{\phiv_1}(\zv_1 | \xv),
\eeq
where $\phiv = \{\phiv_1,\phiv_2\}$, 
\beq
q_{\phiv_2}(\zv_2 | \zv_1) = \Gam(\zv_2; \frac{\muv_2^{2}}{\sigmav_2^{2}}, \frac{\muv_2}{\sigmav_2^{2}}),
\eeq
with $\muv_2 = \NN_{\muv_2}(\zv_1), \sigmav_2 = \NN_{\sigmav_2}(\zv_1)$, and
\beq
q_{\phiv_1}(\zv_1 | \xv) = \Gam(\zv_1; \frac{\muv_1^{2}}{\sigmav_1^{2}}, \frac{\muv_1}{\sigmav_1^{2}}),
\eeq
with $\muv_1 = \NN_{\muv_1}(\xv), \sigmav_1 = \NN_{\sigmav_1}(\xv)$.
The NN functions are parameterized the same as in Table \ref{tabapp:PFA_QNN_net}.

The training objective is to maximize the ELBO.
\beq
\ELBO(\thetav, \phiv) = \Ebb_{q_{\phiv}(\zv | \xv)} \big[ \log p_{\thetav}(\xv, \zv) - \log q_{\phiv}(\zv | \xv) \big].
\eeq

For the Adam optimizer, we search the learning rate within $\{0.001, 0.005, 0.01, 0.05, 0.1, 0.5, 1\}$ and choose the best learning rate of $0.01$. For our SCR-GO (see Algorithm \ref{alg:SCR_GO} of the main manuscript), we use $\rho=0.1$, $T_{sub}=5$, noise $10^{-7}$, RMSprop with learning rate $5\times 10^{-4}$ for the cubic sub-problem. $10$ data samples (\ie $N_H=10$) are used to estimate the GO Hessian. 
Other settings are the same with Section \ref{secapp:VAE_PFA}.

The training objectives versus the number of oracle calls and processed observations are shown in Figure \ref{fig:MNIST_PFA2_JQNN}. 
It's clear that similar to what's observed in the above section (variational encoder for PFA),  the proposed SCR-GO performs better than a well-tuned Adam optimizer in terms of oracle calls and data efficiency, when tested on the more challenging problem of training a deep variational encoder (constructed via neural networks) for a deep latent variable model (PGBN/DLDA).
The better performance of SCR-GO is attributed to its exploitation of the curvature information via the GO Hessian, which takes into consideration the correlation among parameters within $p_{\thetav}(\xv, \zv)$ and $q_{\phiv}(\zv | \xv)$ and utilizing an (implicit) adaptive learning rate mimicking the classical Newton's method.



\end{document}